\documentclass{article}

    \PassOptionsToPackage{numbers, compress}{natbib}

\usepackage[preprint]{neurips_2026}

\usepackage[utf8]{inputenc}
\usepackage[T1]{fontenc}
\usepackage{microtype}

\usepackage{amsmath} 
\usepackage{amssymb}
\usepackage{amsfonts}
\usepackage{amsthm}
\usepackage{nicefrac}
\usepackage{bbm}

\usepackage[colorlinks=true, urlcolor=black]{hyperref}
\usepackage{url}

\usepackage[table, dvipsnames]{xcolor}

\usepackage{graphicx}
\graphicspath{{../}{../figs/}{../assets/}{../Figures/}}
\usepackage{rotating}
\usepackage{lscape}
\usepackage{adjustbox}
\usepackage{caption}
\usepackage{subcaption}
\usepackage{float}

\usepackage{booktabs}
\usepackage{colortbl}     
\usepackage{multirow}
\usepackage{siunitx}
\usepackage{nicematrix}
\usepackage{array}
\usepackage{tabularx}
\usepackage{arydshln}     
\usepackage{threeparttable} 
\usepackage{listings}

\usepackage[toc,page]{appendix}

\usepackage{pifont}       

\usepackage{tikz}
\usetikzlibrary{matrix,chains,positioning,arrows, calc}

\usepackage{wrapfig}
\usepackage{lipsum}

\definecolor{oxfordblue}{rgb}{0.0, 0.13, 0.28}
\definecolor{linkblue}{rgb}{0.0, 0.20, 0.40}
\definecolor{limegreen}{rgb}{0.2, 0.8, 0.2}
\definecolor{bondiblue}{rgb}{0.0, 0.58, 0.71}
\definecolor{britishracinggreen}{rgb}{0.0, 0.26, 0.15}
\definecolor{faintgray}{RGB}{245,245,245}
\definecolor{faintborder}{RGB}{230,230,230}
\definecolor{lightblack}{gray}{0.4}
\definecolor{pinegreen}{rgb}{0.0, 0.47, 0.44}
\definecolor{codegreen}{rgb}{0,0.6,0}
\definecolor{codegray}{rgb}{0.5,0.5,0.5}
\definecolor{codepurple}{rgb}{0.58,0,0.82}
\definecolor{backcolour}{rgb}{0.97,0.97,0.97}
\colorlet{darkgreen}{green!45!black}
\colorlet{darkred}{red!70!black}
\colorlet{limegreen}{LimeGreen}

\definecolor{bleudefrance}{rgb}{0.19, 0.55, 0.91}
\definecolor{darkcyan}{rgb}{0.0, 0.55, 0.55}
\definecolor{darkspringgreen}{rgb}{0.09, 0.45, 0.27}
\definecolor{darkmidnightblue}{rgb}{0.0, 0.2, 0.4}
\definecolor{MidnightBlue}{RGB}{25,25,112}
\definecolor{MidnightBlueComplementingGreen}{RGB}{25,112,25}
\definecolor{MidnightBlueComplementingPurple}{RGB}{112,25,112}
\definecolor{MidnightBlueComplementingRed}{RGB}{112,25,69}
\definecolor{WowColor}{rgb}{.75,0,.75}
\definecolor{MildlyAlarming}{rgb}{0.85,0.25,0.1}
\definecolor{SubtleColor}{rgb}{0,0,.50}
\definecolor{antiquefuchsia}{rgb}{0.57, 0.36, 0.51}
\definecolor{fashionfuchsia}{rgb}{0.96, 0.0, 0.63}
\definecolor{jade}{rgb}{0.0, 0.66, 0.42}
\definecolor{caribbeangreen}{rgb}{0.0, 0.8, 0.6}
\definecolor{aquamarine}{rgb}{0.5, 0.8, 0.85}
\definecolor{lightseagreen}{rgb}{0.13, 0.7, 0.67}
\definecolor{attentioncolor}{RGB}{152,90,81}
\definecolor{burgred}{RGB}{40,3,22}
\definecolor{AnnieGreen}{RGB}{17,123,92}
\definecolor{Turquoise}{RGB}{64,224,208}
\definecolor{darkjade}{RGB}{0,122,84}
\definecolor{RedAlizarin}{rgb}{0.82, 0.1, 0.26}
\definecolor{Window1}{RGB}{92,150,31}
\definecolor{Window1dark}{RGB}{41,67,13}
\definecolor{Window2}{RGB}{255,168,28}
\definecolor{Window2dark}{RGB}{114,75,12}
\definecolor{Window3}{RGB}{255,96,33}
\definecolor{Window3dark}{RGB}{97,36,12}
\definecolor{InputColor}{RGB}{20,255,177}
\definecolor{InputColorlight}{RGB}{222,237,229}

\lstdefinestyle{pythonstyle}{
    backgroundcolor=\color{backcolour},
    commentstyle=\color{codegreen},
    keywordstyle=\color{blue},
    numberstyle=\tiny\color{codegray},
    stringstyle=\color{codepurple},
    basicstyle=\ttfamily\footnotesize,
    breakatwhitespace=false,
    breaklines=true,
    captionpos=b,
    keepspaces=true,
    numbers=left,
    numbersep=5pt,
    showspaces=false,
    showstringspaces=false,
    showtabs=false,
    tabsize=2,
    language=Python
}
\usepackage{titletoc}
\newcommand\DoToC{%
  \startcontents
  \printcontents{}{1}{\textbf{Appendix Contents}\vskip3pt\hrule\vskip5pt}
  \vskip3pt\hrule\vskip5pt
}

\theoremstyle{plain}
\usepackage[framemethod=TikZ]{mdframed}

\newcounter{theo}[section] 
\renewcommand{\thetheo}{\arabic{section}.\arabic{theo}}

\newcounter{lem}[section] 
\renewcommand{\thelem}{\arabic{lem}}

\newcounter{prf}[section]
\renewcommand{\theprf}{\arabic{section}.\arabic{prf}}

\newtheorem{theorem}{Theorem}
\newtheorem{corollary}{Corollary}[theorem]
\newtheorem{lemma}[theorem]{Lemma}
\newtheorem{proposition}[theorem]{Proposition}

\theoremstyle{remark}

\theoremstyle{definition}
\newtheorem{definition}{Definition}[section]

\usepackage{bbm}             
\usepackage[UKenglish]{isodate}
\usepackage{silence}
    \definecolor{darkcerulean}{rgb}{0.03, 0.27, 0.49}
    \definecolor{smokyblack}{rgb}{0.06, 0.05, 0.03}
    \definecolor{warmblack}{rgb}{0.0, 0.26, 0.26}
    \definecolor{cobalt}{rgb}{0.0, 0.28, 0.67}
    \definecolor{darkcobalt}{rgb}{0.1, 0.38, 0.77}
    \hypersetup{colorlinks=true,
                linkcolor=cobalt,
                urlcolor=darkcerulean,
                citecolor=cobalt}

\usepackage{fancyvrb}
    \usepackage{tikz-cd}
    \pgfdeclarelayer{edgelayer}
    \pgfdeclarelayer{nodelayer}
    \pgfsetlayers{edgelayer,nodelayer,main}
    \tikzstyle{new style 0}=[fill={rgb,255: red,255; green,94; blue,247}, draw=black, shape=circle]
    \tikzstyle{pointy}=[fill=white, draw=black, shape=circle]
    \tikzstyle{pointy}=[->]
\usepackage{fontawesome}

\usepackage{cleveref}
\usepackage{comment}

\makeatletter
\newcommand{\pushright}[1]{\ifmeasuring@#1\else\omit\hfill$\displaystyle#1$\fi\ignorespaces}
\newcommand{\pushleft}[1]{\ifmeasuring@#1\else\omit$\displaystyle#1$\hfill\fi\ignorespaces}
\makeatother

\renewcommand{\phi}{\varphi}

\newcounter{termcounter}
\renewcommand{\thetermcounter}{\Roman{termcounter}}
\crefname{term}{term}{terms}
\creflabelformat{term}{#2\textup{(#1)}#3}

\makeatletter
\def\term{\@ifnextchar[\term@optarg\term@noarg}
\def\term@optarg[#1]#2{%
  \textup{#1}%
  \def\@currentlabel{#1}%
  \def\cref@currentlabel{[][2147483647][]#1}%
  \cref@label[term]{#2}}
\def\term@noarg#1{%
  \refstepcounter{termcounter}%
  \textup{(\thetermcounter)}%
  \cref@label[term]{#1}}
\makeatother

\crefname{lemma}{lemma}{lemmata}
\Crefname{lemma}{Lemma}{Lemmata}
\crefname{assumption}{assumption}{assumptions}
\Crefname{assumption}{Assumption}{Assumptions}
\crefname{example}{Example}{Examples}
\crefname{proposition}{Proposition}{Proposition}

\usepackage{bm}

\DeclareMathAlphabet{\mathsfit}{\encodingdefault}{\sfdefault}{m}{sl}
\SetMathAlphabet{\mathsfit}{bold}{\encodingdefault}{\sfdefault}{bx}{n}

\newif\ifcomments
\commentstrue          

\ifcomments
    \newcommand{\todo}[1]{\textcolor{blue}{[\textbf{TODO}: #1]}}
    \newcommand{\RS}[1]{\textcolor{purple}{[\textbf{RS}: #1]}}
    \newcommand{\TB}[1]{\textcolor{teal}{[\textbf{TB}: #1]}}
    \newcommand{\BW}[1]{\textcolor{olive}{[\textbf{BW}: #1]}}    
\else
    \newcommand{\RS}[1]{}
    \newcommand{\TB}[1]{}
    \newcommand{\BW}[1]{}
    \newcommand{\todo}[1]{}
\fi

\renewcommand{\textcolor}[2]{#2}

\title{Universal Time Series Generation with Neural Controlled Differential Equations}

\author{%
  Torben Berndt$^{1,*}$
  \quad Elyes Farjallah$^{1,2,*}$
  \quad Leif Seute$^{1,3,4}$
  \quad Raeid Saqur$^{5,6,7}$\\[10pt]
  \textbf{Benjamin Walker}$^{6,\dagger}$
  \quad \textbf{Jan Stühmer}$^{1,2,\dagger}$\\[10pt]
  $^{1}$Heidelberg Institute for Theoretical Studies, Heidelberg, Germany\\
  $^{2}$IAR, Karlsruhe Institute of Technology, Karlsruhe, Germany\\
  $^{3}$Max Planck Institute for Polymer Research, Mainz, Germany\\
  $^{4}$IWR, Heidelberg University, Heidelberg, Germany\\
  $^{5}$Dept. of Computer Science, University of Toronto, Toronto, Canada\\
  $^{6}$Mathematical Institute, University of Oxford, Oxford, UK\\
  $^{7}$Vector Institute, Toronto, Canada\\[0.4em]
  {\normalsize $^{*}$Equal first authorship. \quad
  $^{\dagger}$Equal senior authorship.}
}

\begin{document}
\maketitle


\begin{abstract}
  Recent work on the sequence universality of State Space Models (SSMs) has introduced efficient, maximally expressive continuous-time approaches for time-series modelling. While these works focus on discriminative settings, we extend this perspective to generative time-series modelling by proving 
  that maximally expressive Structured Linear Controlled Differential Equations (SLiCEs) are universal time-series generators, in the sense that they can approximate the induced path laws of continuous causal pushforwards on compact latent sets in \(W_\infty\). Building on these theoretical results, we propose \textit{Generative SLiCEs (G-SLiCEs)}, a maximally expressive continuous-time model for flow matching on path-space. Empirically, we show that expressivity improves performance in probabilistic forecasting and downstream tasks, while retaining the advantages of continuous-time models such as generalising to arbitrary observation grids. This is particularly beneficial for irregular grids, where fixed-grid models often struggle.
  \footnote{The source code is available at: \url{https://github.com/hits-mli/gslices}.}.
\end{abstract}

\section{Introduction}\label{sec:intro}
In probabilistic forecasting, capturing the full conditional distribution over future events is crucial, for example, when predicting extreme weather phenomena~\citep{price2025probabilistic}, electricity demand~\citep{taieb2015probabilistic}, or traffic~\citep{qian2023uncertainty}. Probabilistic forecasting is therefore inherently generative, and recent work has increasingly applied generative machine learning methods, such as diffusion models and flow matching~\citep{sohl2015deep, lipman2023flow, tong2023improving}, to this task~\citep{rasul2021autoregressive, tashiro2021csdi, alcaraz2022diffusion, bilovs2023modeling, kollovieh2023predict, kollovieh2024flow}. However, these models typically rely on sequence-model backbones that are not maximally expressive, which can limit their performance.

A parallel line of work develops expressive, parallel-in-time state space models (SSMs) by imposing structures on transition matrices, including block diagonal~\citep{fan2024advancing} and diagonal-plus-low-rank constructions~\citep{yang2024parallelizing,yang2024improving,siems2025deltaproduct}. 
Recent work unifies these approaches in a continuous-time framework building on Linear Controlled Differential Equations and characterise their universality~\cite{cirone2024deepSSM, walker2025structuredlinearcdesmaximally}.

We extend this theory to generative settings, where distributions on path space are modelled as pushforwards of latent path laws. Building on this perspective, we generalise current flow-based approaches for probabilistic forecasting from sequence-to-sequence mappings to path-to-path mappings. To address both the expressivity gap and the discrete-time formulation of existing state-of-the-art models, we propose Generative Structured Linear Controlled Differential Equations (G-SLiCEs): a maximally expressive continuous-time model for generative time-series modelling.

\paragraph{Main contributions}
    

    

We introduce a new generative approach for time-series data -- Generative Structured Linear Controlled Differential Equations (G-SLiCEs) -- that achieves state-of-the-art performance and is robust under grid shifts.
This relies on two key insights: \textbf{(1)} we formulate flow matching on path space using maximally expressive Structured Linear CDE backbones, and \textbf{(2)} we prove that pathwise expressivity implies universality for continuous causal pushforwards of
compactly supported latent path laws, linking discriminative expressivity of causal
path-to-path models to generative capabilities (Theorem \ref{thm:pathwise_to_generative}).
We demonstrate that G-SLiCEs outperform current state-of-the-art models on a comprehensive probabilistic forecasting benchmark, which we attribute to their universality, and show that the continuous-time formulation improves robustness to sampling-grid shifts
Further, we provide a concrete example showing that transition structure has provable expressivity consequences in the generative setting.

\subsection{Related work}
\label{subsec:related_work}

\paragraph{Flow matching}
Flow matching~\citep{lipman2023flow, albergo2023stochastic} has been proposed as powerful generative modelling technique on continuous, finite-dimensional Euclidean spaces and has since been generalised to Riemannian manifolds~\citep{chen2024flow}, discrete data~\citep{gat2024discrete} and functionals~\citep{kerrigan2023functional}.
These approaches do not, to our knowledge, yield distributional universality results on path space.

\paragraph{Generative models for probabilistic time-series forecasting.} 
Classical neural baselines either parameterise the predictive distribution explicitly (DeepAR~\citep{salinas2020deepar}, TFT~\citep{lim2021temporal}, autoregressive flows over RNN backbones~\citep{rasul2021autoregressive}) or are deterministic and combined with quantile heads (WaveNet~\citep{oord2016wavenet}, PatchTST~\citep{Yuqietal-2023-PatchTST}, DLinear~\citep{zeng2023transformers}). 
Generative approaches replace this distributional head with a learned density model. Variational auto-encoders (VAEs) such as TimeVAE~\citep{desai2021timevae} were explored first, followed by diffusion models, including CSDI~\citep{tashiro2021csdi}, SSSD~\citep{lopezalcaraz2022diffusionbased}, TSDiff~\citep{kollovieh2023predict}, and the stochastic-process formulation of \citet{bilovs2023modeling}. Most recently, TSFlow~\citep{kollovieh2024flow} advances this line of work by employing flow matching.

\paragraph{Expressivity of state space models.}

Recent work aims to make parallel-in-time sequence models more expressive while preserving efficiency. One direction parallelises non-linear RNNs by casting recurrence as a fixed-point problem solved with Newton-type methods \citep{lim2024parallelizing,gonzalez2024towards}. Another develops input-dependent linear RNNs and state-space models, including input-dependent block-diagonal LRNNs \citep{fan2024advancing}, DeltaNet variants \citep{yang2024parallelizing,yang2024improving,siems2025deltaproduct}, Mamba \citep{gu2023mamba}, Mamba-2 \citep{dao2024transformers}, RWKV-7 \citep{peng2025rwkv}, HGRN-2 \citep{qin2024hgrn2}, mLSTM \citep{beck2024xlstm}, Gated Linear Attention \citep{yang2024gated}, Gated Random Feature Attention \citep{peng2021random}, Gated Slot Attention \citep{zhang2024gated}, TTT-Linear \citep{sun2025learning}, and Titans \citep{behrouz2024titans}. These models usually use diagonal or diagonal-plus-low-rank transition matrices.


\section{Universality in the context of time series generation}\label{sec:method}

\begin{figure}[t]
    \centering
    \includegraphics[width=1.0\linewidth]{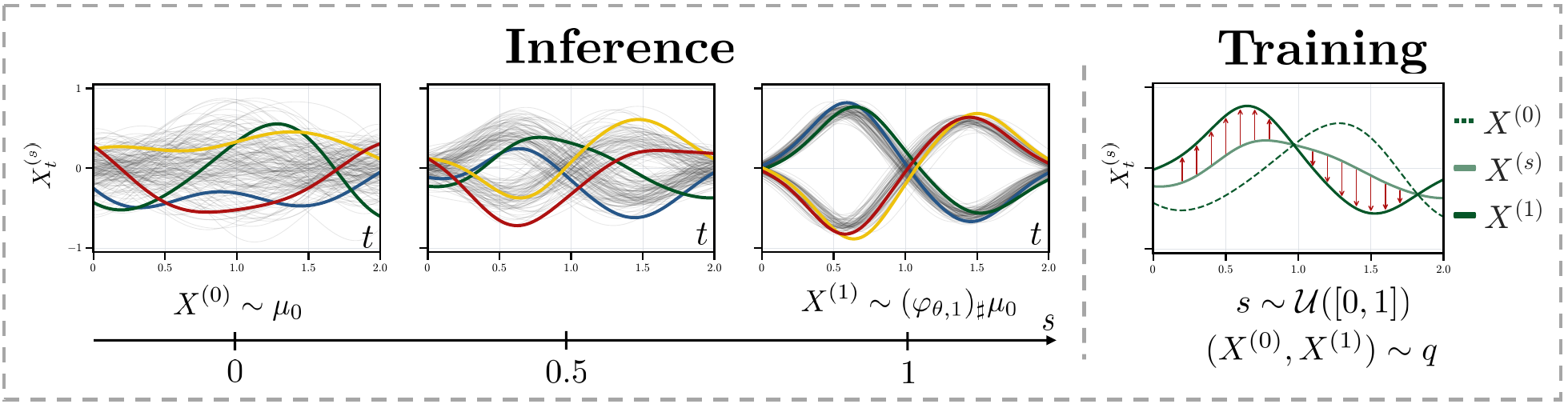}
    \caption{
    \textbf{Path-space flow matching with G-SLiCE.} G-SLiCE models probabilistic time-series generation as a continuous flow on path space. At inference time, an initial path \(X^{(0)} \sim \mu_0\) is transported by the learned flow \(\varphi_{\theta,s}\), producing a terminal sample \(X^{(1)} \sim (\varphi_{\theta,1})_{\#}\mu_0\). Each inference panel shows samples from the evolving distribution on path space.
    During training, the SLiCE vector field is trained to match the path-space displacement at the interpolated path \(X^{(s)}\), which is constructed from prior and data paths sampled from the joint distribution $q$.
    }
    \label{fig:g_slices_overview}
\end{figure}


We begin with the path-to-path model classes used in this paper. We then define the distributional notion of expressivity used for time series generation and show how it applies to G-SLiCEs.

\subsection{Linear NCDEs and SLiCEs}
\label{sec:linear_controlled_differential_equations}

Fix a time interval $[t_0,t_n]$. For $d\in\mathbb{N}$, let $\mathcal{X}(d)=C^{1,0}([t_0,t_n],\mathbb{R}^{d})$ denote the space of absolutely continuous, time-augmented input paths used by the model, all starting from the same point, equipped with the $1$-variation topology. Let $\mathcal{Y}(d)=C([t_0,t_n],\mathbb{R}^{d})$ denote the output path space, equipped with the supremum metric $\rho_\infty(Y,\widetilde Y)=\sup_{t\in[t_0,t_n]}\|Y_t-\widetilde Y_t\|_2$. A map $T:\mathcal{X}(d_X)\to\mathcal{Y}(d_y)$ is causal if $X|_{[t_0,t]}=\widetilde X|_{[t_0,t]}$ implies $T(X)_t=T(\widetilde X)_t$ for every $t\in[t_0,t_n]$.

Let $X\in\mathcal{X}(d_X)$ denote an input path and let $\omega^X\in C^{1,0}([t_0,t_n],\mathbb{R}^{d_\omega})$ denote a deterministic causal augmentation of $X$. In applications, $\omega^X$ may include time, observed values, masks, lags, and other deterministic features. Throughout this subsection, integrals are understood in the Riemann--Stieltjes sense, which is well defined for the absolutely continuous controls used by the model.

\begin{definition}[NCDE]
\label{def:neural_cde}
An NCDE is a causal path-to-path map $X\mapsto z^\theta(X)$ defined by
\begin{equation}
h_{t_0}=\xi_\phi(X_{t_0}),\qquad
h_t=h_{t_0}+\int_{t_0}^t g_\theta(h_s)\,d\omega^X_s,\qquad
z_t=r_\psi(h_t),
\end{equation}
where $h_t\in\mathbb{R}^{d_h}$, $z_t\in\mathbb{R}^{d_z}$, $\xi_\phi$ is a learnable initialisation map, $g_\theta(h)\in\mathbb{R}^{d_h\times d_\omega}$ is a learnable vector field, and $r_\psi$ is a learnable readout.
\end{definition}

\begin{definition}[Linear NCDE]
\label{def:linear_ncde}
A Linear NCDE is an NCDE whose vector field is linear in the hidden state. Equivalently, there are matrices $A_\theta^1,\ldots,A_\theta^{d_\omega}\in\mathbb{R}^{d_h\times d_h}$ such that
\begin{equation}
h_{t_0}=\xi_\phi(X_{t_0}),\qquad
h_t=h_{t_0}+\int_{t_0}^t\sum_{i=1}^{d_\omega}A_\theta^i h_s\,d\omega_s^{X,i},\qquad
z_t=r_\psi(h_t).
\end{equation}
\end{definition}

The dynamics are linear in $h$, but the induced map $X\mapsto z^\theta(X)$ can be highly non-linear, as the hidden state is multiplied by increments of the driving path. Linear NCDEs also benefit from an efficient parallel-in-time computation. On piecewise linear controls, \begin{equation}\label{eqn:recurrence_exp}
h_{t_{j+1}}=\Phi_j^\theta(X)h_{t_j},
\qquad
\Phi_j^\theta(X)=\exp\!\left(\sum_{i=1}^{d_\omega}A_\theta^i(\omega_{t_{j+1}}^{X,i}-\omega_{t_j}^{X,i})\right).
\end{equation}
Therefore $h_{t_k}=\Phi_{k-1}^\theta(X)\cdots\Phi_0^\theta(X)h_{t_0}$, and after computing the exact interval transition operators $\Phi_j^\theta(X)$ in parallel, evaluating the trajectory reduces to composing matrices. Since matrix multiplication is associative, the sequence of prefix products can be evaluated by a parallel scan.


Structured Linear CDEs, or SLiCEs, balance expressivity and efficiency by restricting each $A^i_{\theta}$ to a prescribed structured matrix family. Dense Linear NCDEs sit at the expressive but computationally expensive end of this spectrum, while diagonal transition matrices give efficient models that are not maximally expressive.
Other structures, such as block-diagonal, diagonal-plus-low-rank, sparse, and Walsh--Hadamard families, retain maximal expressivity while reducing recurrent cost and parameter count.
If the structure is also closed under matrix multiplication, as in fixed block-diagonal matrices, then the transition products remain structured, improving the efficiency of parallel-in-time evaluation.

\begin{theorem}[Path-to-path universality of maximally expressive SLiCEs]
\label{thm:slice_path_to_path_universal}
Let $\mathcal{K}\subset\mathcal{X}(d_X)$ be compact,
$T:\mathcal{X}(d_X)\to\mathcal{Y}(d_y)$ be continuous and causal, and $\omega^X_s=X_s$. Any SLiCE
class that is path-to-point maximally expressive in the sense of
\citet{walker2025structuredlinearcdesmaximally} with a linear readout satisfies
the following property. For every $\varepsilon>0$, there exists a model in the
class, with hidden dimension $d_h\in\mathbb{N}$ and feed-forward neural network
readout $r_\psi$, such that
\begin{equation}
\sup_{X\in\mathcal{K}}
\rho_\infty\bigl(z^\theta(X),T(X)\bigr)
\leq
\varepsilon .
\end{equation}
\end{theorem}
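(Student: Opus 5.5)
The strategy is to reduce path-to-path approximation to path-to-point approximation at finitely many times, and then use causality to interpolate in time. The key device is a time-indexed functional. For $t\in[t_0,t_n]$ and $X\in\mathcal{X}(d_X)$, write $X^{t}$ for the path stopped at $t$, i.e.\ $X^t_s=X_{s\wedge t}$. Causality gives $T(X)_t=T(X^t)_t$. Maximal expressivity in the sense of \citet{walker2025structuredlinearcdesmaximally} means that, with a linear readout, the hidden state $h_t$ can approximate any continuous functional of the path $X|_{[t_0,t]}$ on compact sets. Equivalently, linear functionals of $h_t$ approximate linear functionals of the truncated signature $S(X)_{t_0,t}$, and these are dense by Stone--Weierstrass on compact sets of time-augmented paths.

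\textbf{Step 1 (equicontinuity in time).} Consider the compact set $\mathcal{K}\times[t_0,t_n]$. The map $(X,t)\mapsto X^{t}$ is continuous into $\mathcal{X}(d_X)$ under the $1$-variation topology, and $(X,t)\mapsto T(X)_t$ is jointly continuous. Together these give uniform continuity. Fix $\varepsilon$ and choose $\delta$ such that $|t-s|<\delta$ implies $\|T(X)_t-T(X)_s\|\le\varepsilon/3$ and $\|S(X)_{t_0,t}-S(X)_{t_0,s}\|$ is small, uniformly over $\mathcal{K}$.

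\textbf{Step 2 (a single functional of the path up to time $t$).} Because paths are time-augmented, the time $t$ is itself a coordinate of $X_t$. Define $F(X|_{[t_0,t]}):=T(X)_t$. By causality, $F$ depends only on the stopped path.

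Let $\widehat{\mathcal{K}}=\{X^t: X\in\mathcal{K},\,t\in[t_0,t_n]\}$. This set is compact, being the continuous image of $\mathcal{K}\times[t_0,t_n]$. On $\widehat{\mathcal{K}}$, the functional $X^t\mapsto T(X^t)_t=T(X)_t$ is well defined and continuous. The time $t$ is recoverable from the time channel at the stopping point, after which the path is constant in every channel, time included.

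The main point is to express $T(X)_t$ as a continuous function $G$ of the truncated signature $S^{\le N}(X)_{t_0,t}$, uniformly in $t$. Signatures of time-augmented paths separate points of $\widehat{\mathcal{K}}$ modulo tree-like equivalence, and time augmentation removes that ambiguity. The quotient map $X^t\mapsto S(X^t)$ is therefore a homeomorphism onto a compact image. Hence $F$ factors as a continuous $\tilde G$ on a compact subset of the tensor algebra.

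The ambiguity that $X^t$ with the final constant segment has the same signature as $X|_{[t_0,t]}$ is harmless, since both determine $T(X)_t$. A projection argument then gives a level $N$ and a continuous $G$ on the truncated signatures with $\sup|G(S^{\le N})-\tilde G(S)|\le\varepsilon/3$. This step is where I expect the main obstacle: obtaining a single finite truncation and readout that work uniformly over all $t$. I would handle it by applying the factorisation-through-finite-level argument of Walker et al.\ to the compact set $\widehat{\mathcal{K}}$ rather than to $\mathcal{K}$, together with uniform continuity of $S^{\le N}$.

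\textbf{Step 3 (realising the signature and applying the readout).} Maximal expressivity supplies a SLiCE in the class, with linear readout $L$, such that $\sup_{\widehat{\mathcal{K}}}\|L h_t-S^{\le N}(X)_{t_0,t}\|\le\eta$. Because the SLiCE is itself causal, evaluating it on $X$ at time $t$ is the same as evaluating it on $X^t$ at its terminal time. This is why uniformity over $\widehat{\mathcal{K}}$ gives uniformity in $t$ for free.

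$G$ is uniformly continuous on a compact neighbourhood of the signature image. Approximate it by a feed-forward network $N_\psi$ on that neighbourhood to within $\varepsilon/3$, using classical universal approximation. Choose $\eta$ small enough that the induced error is at most $\varepsilon/3$. Set $r_\psi=N_\psi\circ L$, which is again a feed-forward network since the linear layer is absorbed.

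The triangle inequality then gives $\sup_{X\in\mathcal{K}}\sup_t\|z^\theta(X)_t-T(X)_t\|\le\varepsilon$.

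With this route, Step 1 is not needed beyond the continuity that makes $\widehat{\mathcal{K}}$ compact. The whole argument reduces to the compact-set lifting of Step 2.
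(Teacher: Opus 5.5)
Your proposal is correct and follows essentially the paper's route. The paper, via Proposition~D.2 of Cirone et al., likewise reduces causal path-to-path approximation to terminal-time approximation on the compact set of stopped prefixes, and uses the SLiCE class only through its path-to-point property; this is exactly what your $\widehat{\mathcal{K}}$ construction does. The finite-level signature factorisation you flag as the main obstacle is not needed, and it is in any case a routine Stone--Weierstrass argument: applying path-to-point maximal expressivity coordinate-wise to the continuous functional $X^t\mapsto T(X)_t$ on $\widehat{\mathcal{K}}$, together with $h_{t_n}(X^t)=h_t(X)$, already yields the bound (that identity holds because the state is frozen on the constant tail when $\omega^X=X$, not merely by causality).
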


The proof is given in Appendix~\ref{app:proof_theorem_slice_path_to_path_universal}. Section~\ref{subsec:universal_time_series_generation} lifts this deterministic
approximation result to a distributional statement for time series generation.

\subsection{Universal time series generation}
\label{subsec:universal_time_series_generation}

We now define the distributional notion of expressivity used in this paper. If $X\sim\mu$ and $F_\theta:\mathcal{X}(d_X)\to\mathcal{Y}(d_y)$ is Borel measurable, then $(F_\theta)_\#\mu$ denotes the law of $F_\theta(X)$. For probability measures $\nu,\widetilde{\nu}$ on $\mathcal{Y}(d_y)$, the $\infty$-Wasserstein distance with respect to $\rho_\infty$ is
\begin{equation}
W_\infty(\nu,\widetilde{\nu})=\inf_{\pi\in\Pi(\nu,\widetilde{\nu})}\operatorname*{ess\,sup}_{(Y,\widetilde Y)\sim\pi}\rho_\infty(Y,\widetilde Y),
\end{equation}
where $\Pi(\nu,\widetilde{\nu})$ is the set of couplings of $\nu$ and $\widetilde{\nu}$.

\begin{definition}[Universal causal time series generator]
\label{def:universal_time_series_generator}
Let
\begin{equation}
\mathcal{F}
=
\{F_\theta:\mathcal{X}(d_X)\to\mathcal{Y}(d_y)\mid \theta\in\Theta\}
\end{equation}
be a class of Borel measurable causal maps. We say that $\mathcal{F}$ is a universal causal time series generator if, for every compact set $\mathcal{K}\subset\mathcal{X}(d_X)$, every Borel probability measure $\mu$ satisfying $\mu(\mathcal{K})=1$, every continuous causal map $T:\mathcal{X}(d_X)\to\mathcal{Y}(d_y)$, and every $\varepsilon>0$, there exists $\theta\in\Theta$ such that
\begin{equation}
W_\infty\bigl((F_\theta)_\#\mu,T_\#\mu\bigr)\leq\varepsilon,
\end{equation}
where $W_\infty$ is computed on $\mathcal{Y}(d_y)$ with respect to $\rho_\infty$.
\end{definition}

\begin{theorem}[Pathwise approximation implies generative approximation]
\label{thm:pathwise_to_generative}
Let
\begin{equation}
\mathcal{F}
=
\{F_\theta:\mathcal{X}(d_X)\to\mathcal{Y}(d_y)\mid \theta\in\Theta\}
\end{equation}
be a class of Borel measurable causal maps. Suppose that, for every compact set $\mathcal{K}\subset\mathcal{X}(d_X)$, every continuous causal map $T:\mathcal{X}(d_X)\to\mathcal{Y}(d_y)$, and every $\varepsilon>0$, there exists $\theta\in\Theta$ such that
\begin{equation}
\sup_{X\in\mathcal{K}}
\rho_\infty\bigl(F_\theta(X),T(X)\bigr)
\leq\varepsilon .
\end{equation}
Then $\mathcal{F}$ is a universal causal time series generator.
\end{theorem}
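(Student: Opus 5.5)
The argument uses the synchronous (diagonal) coupling induced by the common latent path. Fix a compact $\mathcal{K}\subset\mathcal{X}(d_X)$, a Borel probability measure $\mu$ with $\mu(\mathcal{K})=1$, a continuous causal map $T$, and $\varepsilon>0$. The hypothesis supplies $\theta\in\Theta$ with $\sup_{X\in\mathcal{K}}\rho_\infty(F_\theta(X),T(X))\le\varepsilon$. Let $X\sim\mu$ and consider the random pair $(F_\theta(X),T(X))$ in $\mathcal{Y}(d_y)\times\mathcal{Y}(d_y)$. Its law $\pi$ is the pushforward of $\mu$ under the map $G:X\mapsto(F_\theta(X),T(X))$. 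Its marginals are $(F_\theta)_\#\mu$ and $T_\#\mu$, so $\pi\in\Pi\bigl((F_\theta)_\#\mu,T_\#\mu\bigr)$.

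The steps, in order, are as follows.

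First, I verify that $\pi$ is a well-defined Borel probability measure. $F_\theta$ is Borel by assumption and $T$ is continuous, hence Borel. The space $\mathcal{Y}(d_y)$ with $\rho_\infty$ is a separable metric space (Polish, in fact), so the product Borel $\sigma$-algebra equals the Borel $\sigma$-algebra of the product. It follows that $G$ is Borel measurable.

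Second, I bound the essential supremum. The map $(Y,\widetilde Y)\mapsto\rho_\infty(Y,\widetilde Y)$ is continuous, so the set
\begin{equation}
B=\{(Y,\widetilde Y):\rho_\infty(Y,\widetilde Y)\le\varepsilon\}
\end{equation}
is closed, hence Borel. Its preimage satisfies $G^{-1}(B)\supseteq\mathcal{K}$, so $\pi(B)\ge\mu(\mathcal{K})=1$. Therefore $\operatorname*{ess\,sup}_{\pi}\rho_\infty\le\varepsilon$.

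Third, I take the infimum over couplings. Since $\pi$ is one admissible coupling, $W_\infty\bigl((F_\theta)_\#\mu,T_\#\mu\bigr)\le\varepsilon$. The set $\mathcal{K}$, the measure $\mu$, the map $T$ and $\varepsilon$ were arbitrary, so this is exactly Definition~\ref{def:universal_time_series_generator}. Causality of $F_\theta$ is assumed in the hypothesis and is not used in the bound itself.

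There is no real obstacle; the argument is essentially one line. The only points needing care are the measure-theoretic ones in the first step: Borel measurability of the pair map and of $B$, which relies on separability of $(\mathcal{Y}(d_y),\rho_\infty)$. One may also ask whether $\mathcal{K}$ itself is $\mu$-measurable. It is, because a compact subset of a metric space is closed and hence Borel, whatever topology is placed on $\mathcal{X}(d_X)$ (here the $1$-variation one). Compactness is needed only so that the hypothesis applies to $\mathcal{K}$.
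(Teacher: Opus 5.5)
Your proposal is correct and is essentially the paper's proof: the appendix (Lemma~\ref{lem:winfty_pointwise_bound} and Corollary~\ref{cor:max_exp_distributional}) uses exactly the synchronous coupling $\pi=(F_\theta,T)_\#\mu$, relies on separability of $(\mathcal{Y}(d_y),\rho_\infty)$ for measurability of the pair map, and uses $\mu(\mathcal{K})=1$ to bound the essential supremum by $\varepsilon$ before taking the infimum over couplings. Your remarks on the Borel measurability of the sublevel set $B$ and of $\mathcal{K}$ make explicit details that the paper leaves implicit.
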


The proof is given in Appendix~\ref{app:theory}. It uses the coupling obtained by evaluating $F_\theta$ and $T$ on the same input path $X\sim\mu$.

\begin{corollary}[Maximally expressive SLiCEs are universal causal time series generators]
\label{cor:slice_universal_generator}
Any SLiCE class satisfying Theorem~\ref{thm:slice_path_to_path_universal} is a
universal causal time series generator. 
\end{corollary}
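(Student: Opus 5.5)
The corollary follows by combining Theorem~\ref{thm:slice_path_to_path_universal} with Theorem~\ref{thm:pathwise_to_generative}. Let $\mathcal{F}$ denote the SLiCE class, viewed as the family of maps $F_\theta = z^\theta$ with $\omega^X_s = X_s$. We need to check the two hypotheses of Theorem~\ref{thm:pathwise_to_generative}:
\begin{enumerate}
\item each $F_\theta$ is Borel measurable and causal from $\mathcal{X}(d_X)$ to $\mathcal{Y}(d_y)$;
\item the uniform pathwise approximation property holds on every compact $\mathcal{K}$, for every continuous causal $T$ and every $\varepsilon>0$.
\end{enumerate}
Property (ii) is exactly the conclusion of Theorem~\ref{thm:slice_path_to_path_universal}. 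The index set $\Theta$ must therefore range over all hidden dimensions $d_h$, all structured transition matrices in the class, and all feed-forward readouts $r_\psi$.

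For property (i), I would first show that $X\mapsto h(X)$ is continuous from $\mathcal{X}(d_X)$, with the $1$-variation topology, into $C([t_0,t_n],\mathbb{R}^{d_h})$ with the sup norm. The hidden state solves a linear CDE $h_t=h_{t_0}+\int_{t_0}^t\sum_i A^i h_s\,dX^i_s$, so Grönwall's inequality gives the bound $\|h\|_\infty\le \|h_{t_0}\|\exp(C\|X\|_{1\text{-var}})$. For two controls $X,\widetilde X$, the standard estimate for linear CDEs bounds $\|h(X)-h(\widetilde X)\|_\infty$ by a constant, depending on bounds for $\|X\|_{1\text{-var}}$ and $\|\widetilde X\|_{1\text{-var}}$, times $\|X-\widetilde X\|_{1\text{-var}}$ plus $\|\xi_\phi(X_{t_0})-\xi_\phi(\widetilde X_{t_0})\|$. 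The second term vanishes, because all paths start from the same point. Composing with the continuous readout $r_\psi$ then shows that $F_\theta$ is continuous, and hence Borel measurable.

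Causality is immediate. The value $h_t$ depends only on $X|_{[t_0,t]}$, since the integral up to time $t$ uses only the increments of $X$ on $[t_0,t]$, and so does $z_t=r_\psi(h_t)$. Applying Theorem~\ref{thm:pathwise_to_generative} then yields the claim.

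The only step requiring any care is measurability, and it reduces to the continuity of the CDE solution map, which is classical. If one prefers to avoid it, Theorem~\ref{thm:slice_path_to_path_universal} can be assumed to produce well-defined measurable models, as the paper implicitly does.
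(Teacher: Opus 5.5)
Your proposal is correct and follows the paper's route: the corollary is obtained by feeding the uniform pathwise approximation of Theorem~\ref{thm:slice_path_to_path_universal} into Theorem~\ref{thm:pathwise_to_generative}, whose proof is the same-input coupling argument of Lemma~\ref{lem:winfty_pointwise_bound}. Your extra check that each SLiCE map is causal and continuous, hence Borel, is a detail the paper leaves implicit; the continuity follows from the locally Lipschitz dependence of the linear CDE solution on its control in $1$-variation, with a common initial point, and your argument for it is sound.
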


Corollary~\ref{cor:slice_universal_generator} applies to the block-diagonal, diagonal-plus-low-rank, sparse, and Walsh--Hadamard SLiCEs
\citep{walker2025structuredlinearcdesmaximally,movahedi2025fixedpointrnnsdiagonaldense}. However, not every efficient state-space structure has this expressivity. As shown by \citet{cirone2024deepSSM}, S4 corresponds to a non-selective Linear NCDE whose transition is driven only by the time channel $\omega_t=t$, while Mamba corresponds to a selective SLiCE whose driving path depends on $X$ but whose transition matrices are diagonal. These restrictions enable more efficient parallel-in-time evaluation, but they also limit expressivity. Indeed, single-layer diagonal SLiCEs, S4, and Mamba with linear readouts are not universal even for terminal-time path-to-point functions \citep{cirone2024deepSSM}. Universality results are available for S4D-style recurrences with nonlinear projections and for stacked SSMs with layer-wise nonlinearities \citep{orvieto2024universality,wang2023state}, but these are discrete sequence-to-sequence approximation theorems rather than the continuous causal path-to-path universality considered here. The next section gives a simple state-tracking example that makes the distinction between these transition structures explicit.

\subsection{Example: expressivity gap}
\label{subsec:example_expressivity_gap}

\begin{figure}[ht]
    \centering

    \begin{minipage}[c]{0.48\linewidth}
        \centering
        \includegraphics[width=\linewidth]{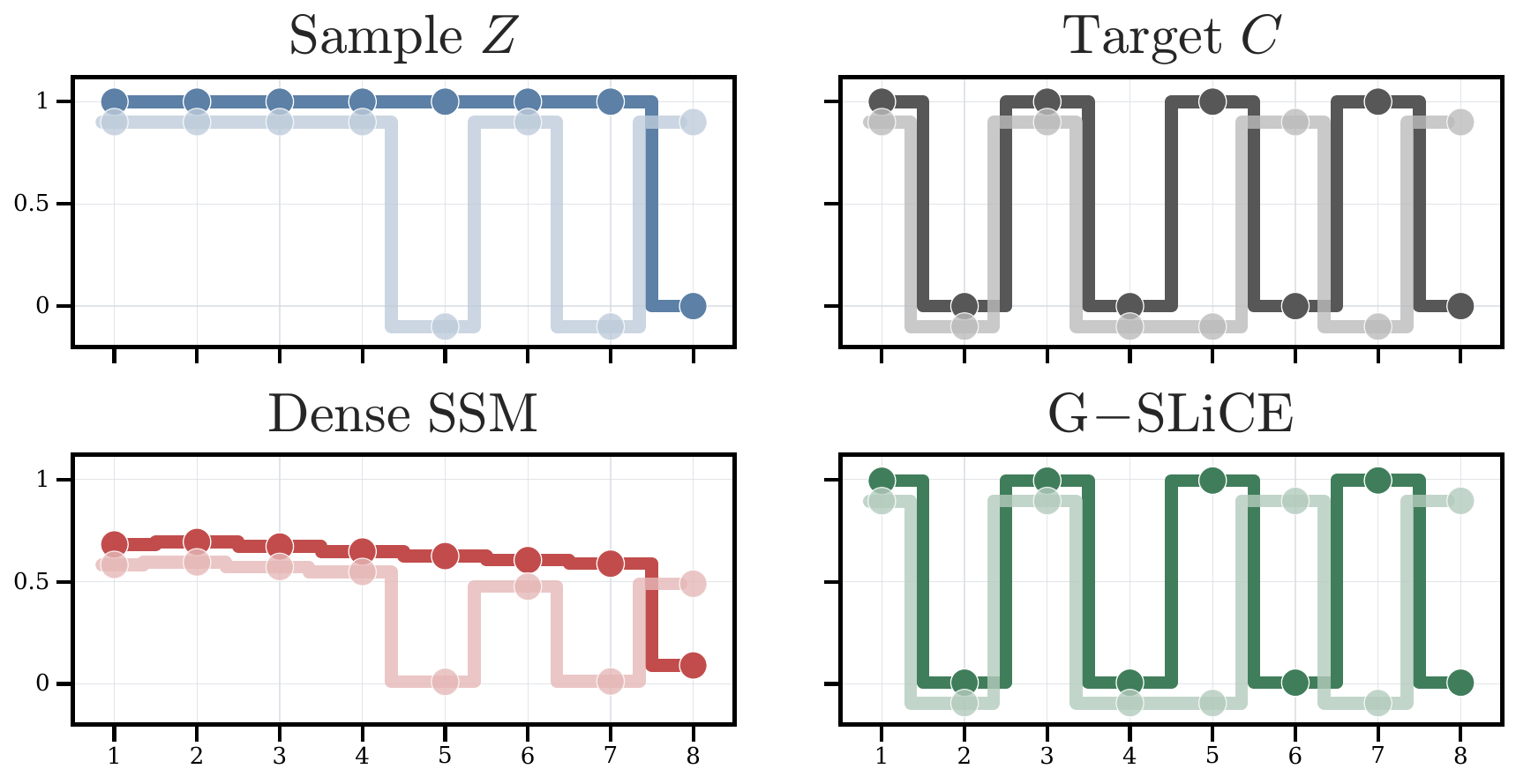}
    \end{minipage}
    \hfill
    \begin{minipage}[c]{0.48\linewidth}
        \centering
        \includegraphics[width=\linewidth]{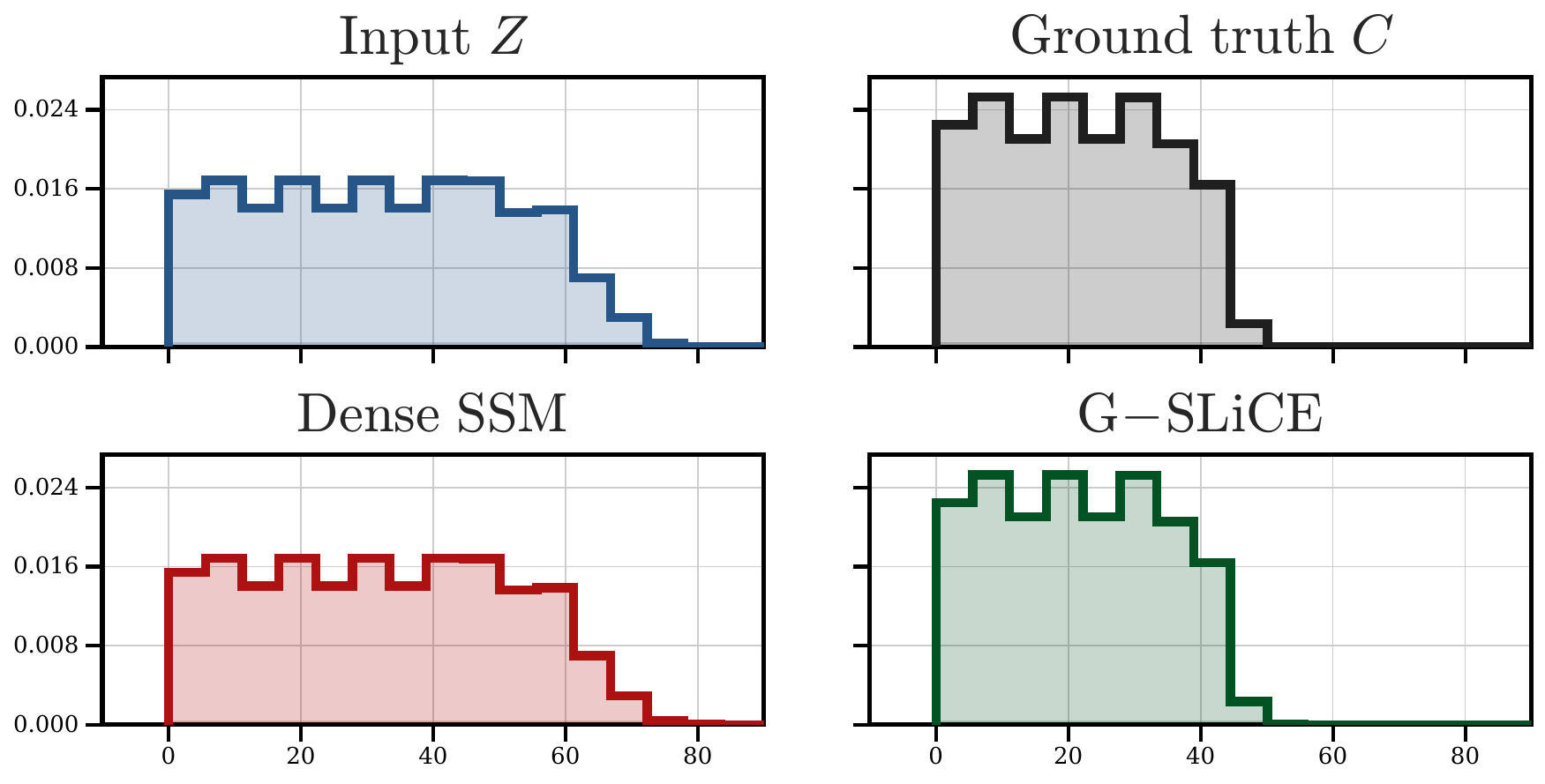}
    \end{minipage}

    \caption{
    Expressivity gap on the sequence task (Section~\ref{subsec:example_expressivity_gap}). Left: for representative Bernoulli input sequences \(Z\), the target map \(C\) removes consecutive ones by the recursion in \eqref{eq:hardcore_target}. The dense non-selective SSM fails to learn this state-dependent rule, while G-SLiCE reproduces the target sequence. Right: empirical distributions of cumulative sums over generated sequences. G-SLiCE closely matches the ground-truth pushforward law \(\mu_{n,p}\), whereas the dense non-selective SSM produces a visibly distorted distribution.}
    \label{fig:hardcore-results}
\end{figure}

We illustrate how transition structure affects a finite state-tracking task. Let
\(\mathcal H_n\) be the set of binary sequences with no consecutive ones. For
\(Z_1,\ldots,Z_n\stackrel{\mathrm{i.i.d.}}{\sim}\operatorname{Bernoulli}(p)\), define
\begin{equation}
\label{eq:hardcore_target}
C_1=Z_1,
\qquad
C_k=Z_k(1-C_{k-1}),
\qquad
2\leq k\leq n .
\end{equation}
Then \(C\in\mathcal H_n\) almost surely. We write \(\mu_{n,p}\) for the induced law.

We compare zero-order-hold exact-flow discretisations of continuous-time linear
state-space models,
\begin{equation}
\label{eq:selective_recurrence}
h_k=\exp(A(Z_k))h_{k-1}+\beta(Z_k),
\qquad
\widehat C_k=w^\top h_k+b,
\end{equation}
where \(A\) and \(\beta\) are affine maps and \(h_0\) is fixed independently of the
input sequence. The non-selective case, where \(A\) is constant, abstracts
S4-style transitions. The diagonal selective case, where \(A(Z_k)\) is diagonal
and input-dependent, abstracts Mamba-style selective diagonal transitions. Dense
selective transitions correspond to Linear NCDEs.


\begin{proposition}
\label{prop:hard_core_gap_distribution}
Fix \(n\geq1\). For every \(p\in[0,1]\) and every \(\varepsilon>0\), there exists a width \(2\) dense selective exact-flow SSM such that its induced output law \(\widehat\mu_{n,p}\) satisfies
\[
W_\infty(\widehat\mu_{n,p},\mu_{n,p})<\varepsilon .
\]
If \(n\ge2\), then no dense non-selective exact-flow SSM can approximate
\(\mu_{n,1/2}\) with error strictly less than \(1/4\).
If \(n\ge d+2\), then no width \(d\) diagonal selective exact-flow SSM can approximate
\(\mu_{n,1}\) with error strictly less than \(1/2\).
\end{proposition}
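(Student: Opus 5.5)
The plan is to treat the three claims separately. Each one reduces to a statement about deterministic sequence-to-sequence maps on the finite input set $\{0,1\}^n$, using the obvious coupling for upper bounds and a support argument for lower bounds.

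\emph{Positive part.} We construct a width-$2$ dense selective SSM that computes $C$ exactly. Then $\widehat\mu_{n,p}=\mu_{n,p}$ and the $W_\infty$ error is $0<\varepsilon$.
\begin{itemize}
\item Keep the state on the two basis vectors, $h_k=e_1$ if $C_k=0$ and $h_k=e_2$ if $C_k=1$, and set $\beta\equiv0$.
\item The required transition for $Z_k=0$ maps both basis vectors to $e_1$, which is not invertible and hence not a matrix exponential. So instead we aim for an $\varepsilon$-approximation, as the statement allows.
\item Choose $A(0)=\lambda\begin{pmatrix}0&1\\0&-1\end{pmatrix}$. Its exponential is $\begin{pmatrix}1&1-e^{-\lambda}\\0&e^{-\lambda}\end{pmatrix}$, which fixes $e_1$ and sends $e_2$ to $e_1$ up to an error of order $e^{-\lambda}$.
\item For $Z_k=1$ we need $e_1\mapsto e_2$ and $e_2\mapsto$ (approximately) $e_1$, since a $1$ after a $1$ must give $C_k=0$. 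The swap has determinant $-1$, so it is not an exponential of a real matrix either. We therefore take the rotation by $\pi/2$, $A(1)=\frac{\pi}{2}J$, and absorb the sign by reading out $|$component$|$ nonlinearly. The readout, however, is linear.
\item Better: let $A(1)=A(0)+\Delta$ with $\Delta$ affine in $Z$. This gives a contraction onto a one-dimensional attractor $v(Z)$ composed with a rotation.
\item Concretely, pick $\exp(A(1))\approx P_1$, where $P_1$ is a rank-one projection sending $e_1\mapsto e_2$ and $e_2\mapsto 0$. Offsetting with $\beta$ and using three affine coordinates makes all these maps invertible-approximable.
\item The key idea is that any matrix $M$ with positive determinant, after a suitable choice of target points, can be realised as an exponential. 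Using the affine term $\beta(Z_k)$ we can realise the affine maps $C_{k-1}\mapsto Z_k(1-C_{k-1})$ on the reachable state set. Set $h_k=(c_k,\cdot)$ with first coordinate dynamics $c\mapsto e^{-\lambda}c+\beta$ for $Z=0$, with $\beta=0$ giving $c\approx0$. For $Z=1$, realise $c\mapsto 1-c$ via a rotation-plus-translation on $\mathbb{R}^2$: $h\mapsto -h+(1,1)$ is a rotation by $\pi$, hence $\exp(\pi J)$ with $\beta=(1,1)$, and $\beta$ stays affine in $Z$.
\end{itemize}
Errors of size $e^{-\lambda}$ accumulate over $n$ steps and stay bounded by $Cne^{-\lambda}$. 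Choosing $\lambda$ large and using the coupling $(\widehat C(Z),C(Z))$ gives $W_\infty<\varepsilon$.

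\emph{Non-selective lower bound.} With $A$ constant, $h_k=\sum_j e^{A(k-j)}\beta(Z_j)+e^{Ak}h_0$ is affine in $Z$. Hence $\widehat C_2=\alpha+aZ_1+bZ_2$, whereas $C_2=Z_2(1-Z_1)$.
\begin{itemize}
\item Under $p=1/2$, $\mu_{n,1/2}$ has finitely many atoms of mass at least $2^{-n}$.
\item $W_\infty<1/4$ forces a coupling in which every atom of $\widehat\mu$ lies within $1/4$ of an atom of $\mu$ in sup norm, and vice versa.
\item Since the discrete law has atoms of prescribed masses, the optimal coupling matches inputs up to a mass-preserving rearrangement. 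We reduce this to the statement that some bijection-like transport moves each output by less than $1/4$ in coordinates $(1,2)$.
\end{itemize}
The main obstacle is to show that no affine map of $(Z_1,Z_2)$ produces four values within $1/4$ of a permutation of the multiset $\{0,0,1,0\}$ for coordinate $2$, jointly with coordinate $1$ equal to $Z_1$ up to $1/4$. The joint constraint forces the matched $Z_1$'s to agree. The XOR-type identity $\widehat C_2(1,1)-\widehat C_2(1,0)-\widehat C_2(0,1)+\widehat C_2(0,0)=0$ against the target's $-1$ then gives an error of at least $1/4$ by the triangle inequality. Masses here are $1/4$ each, which is what makes the matching a bijection.

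\emph{Diagonal lower bound.} Take $p=1$, so $Z\equiv1$ and the target is the single deterministic path $1,0,1,0,\dots$. Diagonal dynamics make each coordinate $h_k^{(i)}=a_ih_{k-1}^{(i)}+b_i$ with $a_i>0$ from the exponential. Hence $\widehat C_k$ is $\gamma_0+\sum_{i=1}^d\gamma_ia_i^k$, an exponential polynomial in $k$ with $d+1$ terms and positive bases. Such a sequence minus $1/2$ has at most $d$ sign changes (Descartes' rule for sums of exponentials). The alternating target requires $\widehat C_k-1/2$ to change sign at least $n-1\ge d+1$ times if the error is below $1/2$. This contradiction bounds the error below by $1/2$, since $W_\infty$ between Dirac laws is $\rho_\infty$.
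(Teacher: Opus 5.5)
Your three arguments follow the paper's route: a contraction for $Z_k=0$ combined with $\exp(\pi J)=-I$ plus a translation for $Z_k=1$, the parallelogram identity for the affine non-selective map, and sign-change counting on the all-ones input. Your final positive construction, $A(z)=(1-z)(-\lambda I_2)+z\pi J$ with $\beta(z)=z(1,1)^\top$, is the paper's with $\eta=e^{-\lambda}$. Delete the false starts. The opening claim of exact computation with error $0$ is withdrawn later, and the assertion that any matrix of positive determinant is a real exponential fails, e.g., for $\operatorname{diag}(-1,-2)$. Also state $h_0=0$ and $w=e_1$; induction then gives $x_k\in[0,1]$ and error at most $e^{-\lambda}$ at every step, with no accumulation.

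\textbf{Non-selective bound.} The first real gap is here: $W_\infty<1/4$ does not give ``coordinate $1$ equal to $Z_1$ up to $1/4$''. A coupling may pair the output on input $(z_1,z_2)$ with the target atom of a different input, and $\widehat C_1\approx 1-Z_1$ is not excluded a priori. What you legitimately get comes from three facts: projecting to two coordinates is $1$-Lipschitz, the three distinct target atoms are at $d_\infty$-distance $1$ (so radius-$r$ balls are disjoint), and mass counting. Together these give some relabelling $\widetilde T$ of the multiset $\{(0,0),(0,1),(1,0),(1,0)\}$ with $d_\infty(Y_{z_1z_2},\widetilde T_{z_1z_2})<r$. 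You then need $\|\widetilde T_{00}+\widetilde T_{11}-\widetilde T_{10}-\widetilde T_{01}\|_\infty\ge1$ for \emph{every} such relabelling, not just the identity where coordinate $2$ gives $-1$. The paper gets this by noting that the plus- and minus-labelled pairs partition the multiset and never have equal sums. In your coordinate-wise framing the repair is as follows. Since $\widehat C_1$ depends only on $z_1$, each $z_1$-fibre must go to a single first-coordinate fibre of the target. In both the preserved and the flipped case, the coordinate-$2$ combination is then $\pm1$.

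\textbf{Diagonal bound.} The second gap is that $\widehat C_k=\gamma_0+\sum_i\gamma_i a_i^k$ is wrong when some $a_i=\lambda_i(1)=1$, i.e.\ $A(1)$ has a zero diagonal entry. In that case $h^{(i)}_k=h^{(i)}_0+k\,\beta_i(1)$ is linear in $k$. The correct form is $a_0+a_1k+\sum_{\lambda\in\Lambda}a_\lambda\lambda^k$ with $|\Lambda|+\mathbf 1_{\{a_1\neq0\}}\le d$, and the zero count must cover the span of $1,t,e^{\alpha t}$. The paper uses the extended complete Chebyshev property. Equivalently, use the Rolle-theorem extension of Descartes' rule to $\sum_j p_j(t)e^{\alpha_j t}$, which allows at most $\sum_j(\deg p_j+1)-1$ zeros. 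Alternatively, perturb $a_i=1$ to $1+\epsilon$: the finitely many outputs $\widehat C_1,\dots,\widehat C_n$ move continuously, and strict sign alternation is stable, so this reduces to your case. With that fixed, the rest (a zero in each $(k,k+1)$ carrying a sign change, and $n-1>d$) matches the paper.
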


Figure~\ref{fig:hardcore-results} shows the empirical effect. The dense
selective SLiCE tracks the state $C_k$ and closely matches the pushforward
law \(\mu_{n,p}\), while the dense non-selective SSM distorts the distribution.
The proof of Proposition~\ref{prop:hard_core_gap_distribution} is given in
Appendix~\ref{app:hardcore_gap}.

\section{Generative Structured Linear CDEs}
\label{ssec:gslice}

We now instantiate the expressivity result of Corollary~\ref{cor:slice_universal_generator} in a concrete generative model class, which we call \textit{Generative Structured Linear CDEs (G-SLiCEs)}. A G-SLiCE consists of a causal SLiCE map $G_\theta:\mathcal{X}(d_X)\to\mathcal{Y}(d_y)$ and a prior distribution $\mu$. Sampling from the generative model means drawing \(X\sim\mu\) and returning \(G_\theta(X)\), so the generated law is $\nu_\theta = (G_\theta)_\#\mu $. This is exactly the pushforward setting of Definition~\ref{def:universal_time_series_generator}, and hence G-SLiCEs are universal causal time series generators. Thus, G-SLiCEs inherit continuous path-space distributional universality while retaining the parallel-in-time and continuous-time structure of SLiCEs.

Our practical implementation of G-SLiCEs uses the flow matching construction of \citet{kollovieh2024flow}, but generalised from grids to paths. The implementation defines three components, a conditional prior law on path space, a deterministic flow on path space, and a training mechanism. To avoid confusion with physical (path-parametrising) time $t$, we denote the flow-matching-time $s$ by bracketed superscripts. Given an initial path \(X^{(0)}\sim\mu\), the model defines a path-valued flow
\begin{equation}
    \frac{d}{ds}X^{(s)}
    =
    F_\theta\!\left(s,X^{(s)}\right),
    \qquad
    X^{(0)}\sim\mu,
    \qquad
    s\in[0,1],
\end{equation}
where
\(
    F_\theta:
    [0,1]\times \mathcal X(d_X)
    \to
    \mathcal X(d_X)
\)
is a causal SLiCE vector-field network. The terminal path \(X^{(1)}\) induces
the generated law
\begin{equation}
    \nu_\theta = (\phi_{\theta,1})_\#\mu ,
\end{equation}
where \(\phi_{\theta,1}\) denotes the flow map from \(s=0\) to \(s=1\).
When the target output space differs from the model input path space, the terminal path is followed by the corresponding readout or projection.
Figure \ref{fig:g_slices_overview} visualises this process in its right panel.

The flow time \(s\) is injected by concatenating it as an additional constant
channel to the current input path. The vector-field network \(F_\theta\) is
parameterised by a SLiCE backbone, so the learned velocity field preserves
causality along the physical time axis and uses the same exact-flow and
parallel-in-time structure described above.

\paragraph{Prior.} Following \citet{kollovieh2024flow}, we use Gaussian processes (GPs)
\citep{Rasmussen2006Gaussian} as the noise distribution \(\mu\), since they
provide a canonical non-parametric model distribution on path space. The
construction of the GP depends on the use case.
In the unconditional setting, we choose an unfitted Gaussian-process prior $\mu = \mathcal{GP}(m,k)$ on path space.
In the conditional setting, each target trajectory \(X^{(1)}\) determines conditioning information \(C\), such as a partially observed prefix $C = X^{(1)}\big \vert_{[t_0,t_c]}$ or a finite set of observations $C=\{(t_i,X^{(1)}_{t_i})\}_{i=0}^m$.
Denoting the posterior means and kernels of a GP fitted on \(C\) by \(m_{\mathrm{post}}\) and \(k_{\mathrm{post}}\), we use the conditional prior law
\(
    \mu(\cdot\mid C)
    =
    \mathcal{GP}
    \bigl(
        m_{\mathrm{post}}(\cdot\mid C),
        k_{\mathrm{post}}(\cdot,\cdot\mid C)
    \bigr).
\)
Conditional generation is performed by sampling $X^{(0)}\sim\mu(\cdot\mid C)$ and returning the terminal path \(X^{(1)}\).
The resulting conditional model law is
\(
    \nu_\theta(\cdot\mid C)
    =
    (\phi_{\theta,1}^{C})_\#
    \mu(\cdot\mid C),
\)
where \(\phi_{\theta,1}^{C}\) denotes the flow map with context \(C\).

\paragraph{Training.} We train the vector-field network using conditional flow matching \citep{lipman2023flow}.
In the unconditional case, we independently sample data paths \(X^{(1)}\) and noise paths \(X^{(0)}\) from the corresponding prior and match them using the mini-batched optimal transport coupling $q(X^{(0)}, X^{(1)})$ \cite{tong2023improving}.
In the conditional case, each data path \(X^{(1)}\sim\nu\) determines a context $C=\Gamma(X^{(1)})$, and we sample $X^{(0)}\sim\mu(\cdot\mid C)$.
Equivalently,
\[
    q(X^{(0)},X^{(1)})
    =
    \nu(X^{(1)})
    \mu\bigl(X^{(0)}\mid \Gamma(X^{(1)})\bigr).
\]
For the straight-line interpolant $X^{(s)}=(1-s)X^{(0)} + sX^{(1)}$, the target velocity is $u_s(X^{(0)},X^{(1)})=X^{(1)}-X^{(0)}$.
The training objective is
\[
\mathcal L(\theta)
=
\mathbb E_{\substack{
s\sim\mathcal U[0,1]\\
(X^{(0)},X^{(1)})\sim q
}}
\left[
    \rho_2 \left(
    F_\theta
    \bigl(
        s,
        X^{(s)}
    \bigr),
    X^{(1)}-X^{(0)}
    \right)^2
\right].
\]
In implementation, the metric is evaluated on the discretised physical-time grid. We visualise the training process in the right panel of Figure \ref{fig:g_slices_overview}.

There are two technical considerations necessary to connect the universality statement in Section~\ref{subsec:universal_time_series_generation} to the flow-matching implementation used in our experiments. First, Corollary~\ref{cor:slice_universal_generator} is stated for the case where the G-SLiCE generator is the direct map from source to target distribution. However, any G-SLiCE generator can be realised as an augmented path-space flow, as shown in Appendix~\ref{app:g-slice-as-path-flow}. Second, the Gaussian process input laws used in practice are not compactly supported. However, after discretisation, interpolation, and augmentation, they assign arbitrarily high probability to compact subsets of \(\mathcal{X}(d_X)\). Hence the same coupling argument gives a high-probability form of the guarantee. Full details are given in Appendix~\ref{app:theory}.

\section{Experiments}\label{sec:experiments}

We evaluate G-SLiCE as a generative time-series model in three regimes:
First, we study conditional probabilistic forecasting and unconditional generation, in Sections~\ref{sec:experiments_probabilistic_forecasting} and~\ref{sec:experiments_unconditional_generation}. 
Second, we test whether the continuous-time construction improves robustness under changes in the sampling grid in Section~\ref{sec:experiments_generalising}. We consider two such distribution shifts: out-of-distribution sampling frequencies and irregularly observed grids. Experimental details, hyperparameter ranges, and compute resources are given in Appendix~\ref{app:experimental_details}.

\paragraph{Datasets.}
We use univariate datasets from the GluonTS benchmark suite \citep{gluonts_arxiv,gluonts_jmlr}: Electricity \citep{dua2017uci}, Exchange \citep{lai2018modeling}, KDDCup \citep{godahewa2monash}, M4-Hourly \citep{makridakis2020m4}, Solar \citep{lai2018modeling}, Traffic \citep{dua2017uci}, UberTLC-Hourly \citep{fivethirtyeight2016}, Wikipedia \citep{gasthaus2019probabilistic}, and ETTSmall \citep{haoyietal-informer-2021}.
The high-frequency ETTSmall dataset is used for the grid-shift experiments; the other datasets are used for probabilistic forecasting and unconditional generation.

\paragraph{Baselines.}
We compare G-SLiCE against four groups of baselines. The first group contains classical statistical forecasting methods: Seasonal Naive (SN), AutoARIMA, and AutoETS \citep{hyndman2008forecasting}. The second group contains neural forecasting models: DLinear \citep{zeng2023transformers}, DeepAR \citep{salinas2020deepar}, Temporal Fusion Transformer (TFT) \citep{lim2021temporal}, WaveNet \citep{oord2016wavenet}, and PatchTST \citep{Yuqietal-2023-PatchTST}. The third group contains diffusion-based generative models: CSDI \citep{tashiro2021csdi}, SSSD \citep{lopezalcaraz2022diffusionbased}, the model of Bilo\v{s} et al.~\citep{bilovs2023modeling}, and TSDiff \citep{kollovieh2023predict}. The fourth group contains the flow-based model TSFlow \citep{kollovieh2024flow}. For probabilistic forecasting, we compare against all baselines. For the remaining experiments, we focus on TSFlow as the strongest and conceptually closest flow-based baseline.

\subsection{Probabilistic forecasting}\label{sec:experiments_probabilistic_forecasting}

\begin{table*}
    \centering
    \scriptsize
    \setlength{\tabcolsep}{2.6pt}
    \renewcommand{\arraystretch}{0.92}
    \caption{Comparison of statistical baselines, neural forecasting models, diffusion-based generative models, and flow-based generative models against G-SLiCE on conditional probabilistic forecasting. We report mean and standard deviation of CRPS over five random seeds on eight GluonTS datasets \citep{gluonts_arxiv,gluonts_jmlr}. Best scores are in \textbf{bold}; second-best scores are \underline{underlined}. The last row indicates whether the selected G-SLiCE configuration uses a dense state-dependent transition block.}
    \label{tab:conditional-generation-results}
    \resizebox{\textwidth}{!}{%
    \begin{tabular}{lcccccccc}
        \toprule
        Method & Electr. & Exch. & KDD & M4-H & Solar & Traffic & Uber & Wiki \\
        \midrule
        SN
        & $0.069_{\pm 0.000}$ & $0.013_{\pm 0.000}$ & $0.561_{\pm 0.000}$ & $0.048_{\pm 0.000}$
        & $0.512_{\pm 0.000}$ & $0.221_{\pm 0.000}$ & $0.299_{\pm 0.000}$ & $0.423_{\pm 0.000}$ \\
        ARIMA
        & $0.344_{\pm 0.000}$ & $\underline{0.008_{\pm 0.000}}$ & $0.514_{\pm 0.000}$ & $0.031_{\pm 0.000}$
        & $0.558_{\pm 0.003}$ & $0.486_{\pm 0.000}$ & $0.478_{\pm 0.000}$ & $0.654_{\pm 0.000}$ \\
        ETS
        & $0.055_{\pm 0.000}$ & $\underline{0.008_{\pm 0.000}}$ & $0.584_{\pm 0.000}$ & $0.070_{\pm 0.000}$
        & $0.550_{\pm 0.000}$ & $0.492_{\pm 0.000}$ & $0.520_{\pm 0.000}$ & $0.651_{\pm 0.000}$ \\
        DLinear
        & $0.058_{\pm 0.001}$ & $0.015_{\pm 0.004}$ & $0.318_{\pm 0.015}$ & $0.055_{\pm 0.007}$
        & $0.794_{\pm 0.027}$ & $0.131_{\pm 0.000}$ & $0.250_{\pm 0.006}$ & $0.259_{\pm 0.002}$ \\
        \midrule
        DeepAR
        & $0.051_{\pm 0.000}$ & $0.013_{\pm 0.004}$ & $0.362_{\pm 0.017}$ & $0.045_{\pm 0.013}$
        & $0.429_{\pm 0.055}$ & $0.103_{\pm 0.002}$ & $0.168_{\pm 0.002}$ & $0.215_{\pm 0.003}$ \\
        TFT
        & $0.060_{\pm 0.001}$ & $\mathbf{0.007_{\pm 0.000}}$ & $0.543_{\pm 0.048}$ & $0.038_{\pm 0.002}$
        & $0.371_{\pm 0.006}$ & $0.128_{\pm 0.005}$ & $0.202_{\pm 0.009}$ & $0.219_{\pm 0.004}$ \\
        WaveNet
        & $0.058_{\pm 0.008}$ & $0.012_{\pm 0.001}$ & $0.305_{\pm 0.018}$ & $0.055_{\pm 0.014}$
        & $0.360_{\pm 0.009}$ & $0.099_{\pm 0.002}$ & $0.180_{\pm 0.013}$ & $\mathbf{0.207_{\pm 0.003}}$ \\
        PatchTST
        & $0.055_{\pm 0.001}$ & $0.010_{\pm 0.001}$ & $0.420_{\pm 0.011}$ & $0.034_{\pm 0.004}$
        & $0.728_{\pm 0.015}$ & $0.151_{\pm 0.007}$ & $0.219_{\pm 0.004}$ & $0.209_{\pm 0.001}$ \\
        \midrule
        CSDI
        & $0.051_{\pm 0.000}$ & $0.013_{\pm 0.001}$ & $0.309_{\pm 0.006}$ & $0.043_{\pm 0.004}$
        & $0.360_{\pm 0.006}$ & $0.152_{\pm 0.001}$ & $0.213_{\pm 0.007}$ & $0.318_{\pm 0.012}$ \\
        SSSD
        & $0.048_{\pm 0.001}$ & $0.010_{\pm 0.001}$ & $\mathbf{0.274_{\pm 0.009}}$ & $0.050_{\pm 0.007}$
        & $0.384_{\pm 0.023}$ & $0.097_{\pm 0.002}$ & $0.156_{\pm 0.007}$ & $0.209_{\pm 0.004}$ \\
        Bilo\v{s} et al.
        & $0.067_{\pm 0.002}$ & $0.012_{\pm 0.004}$ & $1.147_{\pm 0.300}$ & --
        & $0.379_{\pm 0.009}$ & $0.317_{\pm 0.053}$ & $0.450_{\pm 0.086}$ & $0.318_{\pm 0.022}$ \\
        TSDiff
        & $0.049_{\pm 0.000}$ & $0.011_{\pm 0.001}$ & $0.311_{\pm 0.026}$ & $0.036_{\pm 0.001}$
        & $0.358_{\pm 0.020}$ & $0.098_{\pm 0.002}$ & $0.172_{\pm 0.005}$ & $0.221_{\pm 0.001}$ \\
        \midrule
        TSFlow
        & $\underline{0.045_{\pm 0.000}}$ & $\underline{0.008_{\pm 0.000}}$ & $0.288_{\pm 0.004}$ & $\underline{0.028_{\pm 0.008}}$
        & $\underline{0.344_{\pm 0.006}}$ & $\underline{0.082_{\pm 0.000}}$ & $\underline{0.154_{\pm 0.002}}$ & $\mathbf{0.207_{\pm 0.001}}$\rlap{\textsuperscript{\dag}} \\
        G-SLiCEs
        & $\mathbf{0.044_{\pm 0.000}}$ & $\mathbf{0.007_{\pm 0.000}}$ & $\underline{0.275_{\pm 0.009}}$ & $\mathbf{0.023_{\pm 0.001}}$
        & $\mathbf{0.342_{\pm 0.012}}$ & $\mathbf{0.080_{\pm 0.000}}$ & $\mathbf{0.150_{\pm 0.002}}$ & $0.218_{\pm 0.002}$ \\
        \midrule
        Dense block
        & \ding{55} & \ding{51} & \ding{51} & \ding{51}
        & \ding{55} & \ding{55} & \ding{51} & \ding{51} \\
        \bottomrule
        \addlinespace[1pt]
        \multicolumn{9}{l}{\scriptsize{$^\dag\,$We were not able to reproduce the reported TSFlow result on Wiki2000; the best value we obtained was a CRPS of $0.218_{\pm 0.001}$.}}
    \end{tabular}%
    }
\end{table*}

Table~\ref{tab:conditional-generation-results} reports conditional probabilistic forecasting results in terms of \textit{continuous ranked probability score} (CRPS)~\citep{gneiting2007strictly}. G-SLiCE is competitive with the full set of statistical, neural, diffusion-based, and flow-based baselines. It obtains the best score on \(6/8\) datasets and outperforms TSFlow on \(7/8\) datasets. The last row indicates whether the best G-SLiCE configuration uses a dense state-dependent transition block. The selected structure varies by dataset, suggesting that dense transitions are useful only when the additional expressivity is needed. On the Wiki2000 dataset, we were unable to reproduce the TSFlow value reported by \citet{kollovieh2024flow} using their specified hyperparameters. We therefore report both the published TSFlow value and the best value obtained in our reproduction.

We confirm statistically significant rank differences with a global Friedman test. A paired Wilcoxon signed-rank test between G-SLiCE and TSFlow gives a $p$-value of $0.055$ including Wiki2000 and $p=0.008$ excluding it. Details are in Appendix~\ref{app:statistical_tests}. Since TSFlow is the strongest and closest flow-based baseline, subsequent experiments compare only G-SLiCE and TSFlow to isolate the effect of the SLiCE backbone.

\subsection{Unconditional generation}\label{sec:experiments_unconditional_generation}

To evaluate the unconditional generation capabilities of G-SLiCE, we sample input paths from an isotropic Gaussian process prior and train the model to generate sequences from the target data distributions. We assess the quality of the generated sequences by comparing the \(2\)-Wasserstein distance between \(10{,}000\) generated samples and \(10{,}000\) real samples for both G-SLiCE and TSFlow. The results are reported in Table~\ref{tab:unconditional_generation_w2_results}. Second, we evaluate the downstream predictive utility of the generated samples using the Linear Predictive Score (LPS). For this metric, we split \(10{,}000\) generated sequences into an observed context and a held-out future window, train a linear ridge regression model to predict the future from the context, and evaluate the resulting predictor on real test sequences. The LPS is defined as the test CRPS of this predictor. Results are shown in Table~\ref{tab:unconditional_generation_lps_results}.

\begin{table*}[h]
    \centering
    \small
    \setlength{\tabcolsep}{5pt}
    \caption{
    Unconditional generation quality measured by \(2\)-Wasserstein distance on eight GluonTS datasets. For each model, we generate \(10{,}000\) synthetic sequences and compare them against \(10{,}000\) real sequences from the corresponding dataset. Results are reported as mean \(\pm\) standard deviation over five random seeds. Best scores are in \textbf{bold}.
    }
    \label{tab:unconditional_generation_w2_results}
    \resizebox{\textwidth}{!}{%
        \begin{tabular}{lcccccccc}
            \toprule
            Method & Electr. & Exchange & KDDCup & M4 (H) & Solar & Traffic & UberTLC & Wiki2000 \\
            \midrule
            TSFlow & $2.050_{\pm 0.308}$ & $1.060_{\pm 0.366}$ & $\mathbf{7.192_{\pm 0.481}}$ & $3.687_{\pm 0.194}$ & $5.758_{\pm 0.125}$ & $3.971_{\pm 0.135}$ & $13.521_{\pm 2.936}$ & $\mathbf{13.685_{\pm 6.216}}$ \\
            G-SLiCE & $\mathbf{1.958_{\pm 0.264}}$ & $\mathbf{1.059_{\pm 0.384}}$ & $\mathbf{7.192_{\pm 0.481}}$ & $\mathbf{3.441_{\pm 0.213}}$ & $\mathbf{4.086_{\pm 0.280}}$ & $\mathbf{3.175_{\pm 0.163}}$ & $\mathbf{13.431_{\pm 3.052}}$ & $13.874_{\pm 6.394}$ \\
            
            \bottomrule
        \end{tabular}%
    }
\end{table*}

\begin{table*}[h]
    \centering
    \small
    \setlength{\tabcolsep}{5pt}
    \caption{
    Unconditional generation quality measured by Linear Predictive Score (LPS) on eight GluonTS datasets. For each model, we generate \(10{,}000\) synthetic sequences, split them into context and prediction windows, train a linear ridge predictor on the generated samples, and evaluate its test CRPS on real held-out sequences. Results are reported as mean \(\pm\) standard deviation over five random seeds. Best scores are in \textbf{bold}.
    }
    \label{tab:unconditional_generation_lps_results}
    \resizebox{\textwidth}{!}{%
        \begin{tabular}{lcccccccc}
            \toprule
            Method & Electr. & Exchange & KDDCup & M4 (H) & Solar & Traffic & UberTLC & Wiki2000 \\
            \midrule
            TSFlow & $0.131_{\pm 0.035}$ & $\mathbf{0.010_{\pm 0.000}}$ & $\mathbf{0.466_{\pm 0.015}}$ & $0.087_{\pm 0.007}$ & $0.744_{\pm 0.007}$ & $0.326_{\pm 0.002}$ & $0.473_{\pm 0.003}$ & $\mathbf{0.369_{\pm 0.006}}$ \\
            SLICE & $\mathbf{0.109_{\pm 0.008}}$ & $\mathbf{0.010_{\pm 0.000}}$ & $\mathbf{0.466_{\pm 0.015}}$ & $\mathbf{0.064_{\pm 0.009}}$ & $\mathbf{0.699_{\pm 0.010}}$ & $\mathbf{0.246_{\pm 0.001}}$ & $\mathbf{0.397_{\pm 0.004}}$ & $0.371_{\pm 0.009}$ \\
            \bottomrule
        \end{tabular}%
    }
\end{table*}

\subsection{Generalisation across sampling grids}\label{sec:experiments_generalising}

We test whether the continuous-time construction of G-SLiCE improves robustness to changes in the observation grid. We consider two grid shifts on ETTSmall15min: changes in sampling frequency and irregular subsampling. In all cases, context and prediction windows are fixed to \(24\) hours.

\begin{wrapfigure}[25]{r}{0.49\textwidth}
    \centering
    \vspace{-1.25em}
    \includegraphics[width=0.95\linewidth]{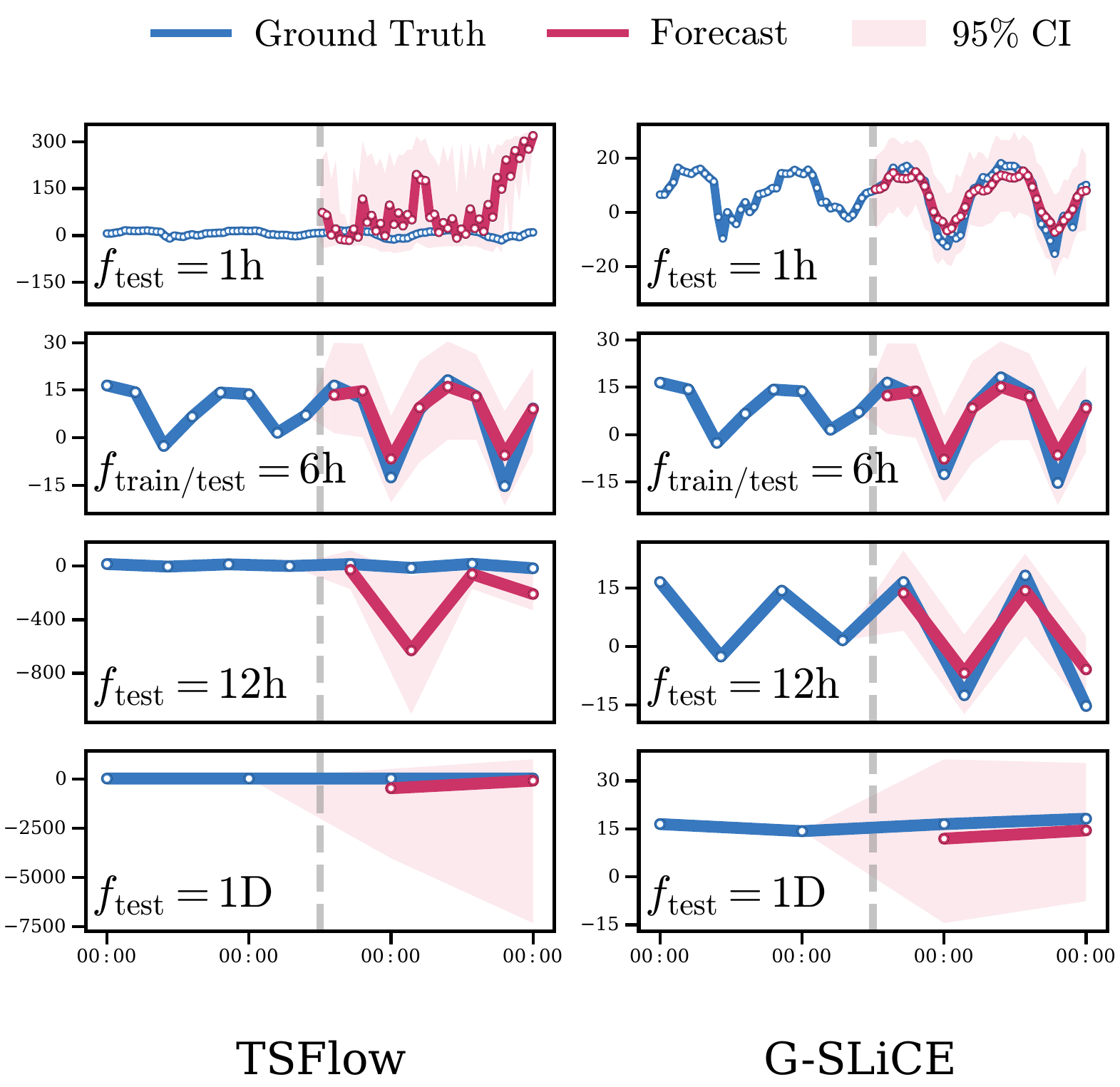}
    \caption{
    Representative cross-frequency forecasts on \textcolor{blue}{ETTSmall1h} for models trained at \(6\)-hour resolution and evaluated on different test frequencies. Curves show ground truth and predictive means; shaded regions indicate \(95\%\) confidence intervals. G-SLiCE remains stable across changes in the observation grid, while TSFlow is more sensitive to frequency shifts.
    }
    \label{fig:example_forecasts_cross_frequency}
    \vspace{-1em}
\end{wrapfigure}

\paragraph{Cross-frequency generalisation.}
We train models on one of four uniform grids: \(15\)-minute, hourly, \(6\)-hourly, or \(12\)-hourly, and evaluate each trained model on all four grids. G-SLiCE is evaluated directly on the requested time grid. For TSFlow, we report direct evaluation and two grid-mismatch adaptations: holding the latest value until a new observation is available (zero-order hold), and oversampling the driving Gaussian process to match the training grid.

Table~\ref{tab:ett15m_crossfreq} reports CRPS. G-SLiCE remains stable across most frequency shifts, whereas direct TSFlow evaluation can fail severely when the grids differ. The strongest example is the \(15\)-minute-train/\(12\)-hour-test setting: G-SLiCE obtains CRPS \(0.189\), while direct TSFlow deteriorates to \(906.205\). The repeat and GP-oversampling adaptations improve TSFlow in some cases, but their cost scales with the higher resolution of the train grid rather than with the requested evaluation grid, making them more computationally expensive. Furthermore, this route is only available when testing on grids coarser than those trained on. Figure~\ref{fig:example_forecasts_cross_frequency} shows representative forecasts for models trained at \(6\)-hour resolution. 
 
\begin{table*}[h]
    \centering
    \caption{Cross-frequency generalisation on ETTSmall15min. Each model is trained at the frequency indicated by the column and evaluated at the frequency indicated by the row. Entries report CRPS. G-SLiCE is evaluated directly on each physical time grid. For TSFlow, we report direct evaluation and two adaptation rules for grid mismatch: repeating the latest update and GP oversampling.}
    \label{tab:ett15m_crossfreq}
    \setlength{\tabcolsep}{4pt}
    \renewcommand{\arraystretch}{0.95}
    \scriptsize
    \resizebox{\textwidth}{!}{%
    \begin{tabular}{lcccc|cccc|cccc|cccc}
        \toprule
        & \multicolumn{4}{c|}{G-SLiCE} & \multicolumn{4}{c|}{TSFlow} & \multicolumn{4}{c|}{TSFlow (repeat)} & \multicolumn{4}{c}{TSFlow (GP oversample)} \\
        test $\backslash$ train & 15min & 1h & 6h & 12h & 15min & 1h & 6h & 12h & 15min & 1h & 6h & 12h & 15min & 1h & 6h & 12h \\
        \midrule
        15min & 0.204 & 0.215 & 0.675 & 0.459 & 0.205 & 0.768 & 24.622 & 7.319 & 0.205 &       &       &       & 0.205 &       &       &       \\
        1h    & 0.204 & 0.210 & 1.460 & 0.230 & 0.809 & 0.203 & 0.209  & 0.208 & 0.206 & 0.203 &       &       & 0.208 & 0.203 &       &       \\
        6h    & 0.205 & 0.208 & 0.216 & 0.220 & 328.794 & 0.645 & 0.206 & 0.206 & 0.210 & 0.206 & 0.206 &       & 0.309 & 0.310 & 0.206 &       \\
        12h   & 0.189 & 0.197 & 0.207 & 0.206 & 906.205 & 40.452 & 4.025 & 0.186 & 0.195 & 0.671 & 0.198 & 0.186 & 0.421 & 0.417 & 0.306 & 0.186 \\
        \bottomrule
    \end{tabular}%
    }
\end{table*}

\paragraph{Irregular-grid generalisation.}
We next subsample each \(24\)-hour window to \(12\) irregular observation times drawn from a Gamma renewal process. The shape parameter \(k\) controls regularity: \(k=1\) produces highly irregular grids, while larger values approach uniform spacing. Details of the construction are given in Appendix~\ref{app:construction_irregular_sampling_grid}. We train with
\(
    k_{\mathrm{train}}\in\{1,10,100\}
\)
and evaluate either on matching irregular grids, \(k_{\mathrm{test}}=k_{\mathrm{train}}\), or on the regular grid, denoted \(k_{\mathrm{test}}\to\infty\).

Table~\ref{tab:ett15m_irregular_selected_allk_transposed_multirow_split_k1k10k100_hlines} reports CRPS and NRMSE over five random seeds. G-SLiCE is stable across both irregular and regular evaluation grids: its CRPS remains between \(0.13\) and \(0.15\), and its NRMSE between \(0.27\) and \(0.33\). TSFlow obtains comparable CRPS in some cases, but its NRMSE is consistently much larger, indicating unstable predictive means or large outliers. For example, at \(k_{\mathrm{train}}=100\) on matching irregular grids, TSFlow has CRPS \(0.21\) and NRMSE \(1.72\), while G-SLiCE has CRPS \(0.13\) and NRMSE \(0.27\). Figure~\ref{fig:example_forecasts_subsampling_small} shows representative forecasts for \(k_{\mathrm{train}}=1\).
Additional values of \(k\) are reported in Appendix~\ref{app:additional_results_irregular_sampling}.

\begin{wrapfigure}[16]{r}{0.4\textwidth}
    \centering
    \vspace{-1.41em}
    \includegraphics[width=\linewidth]{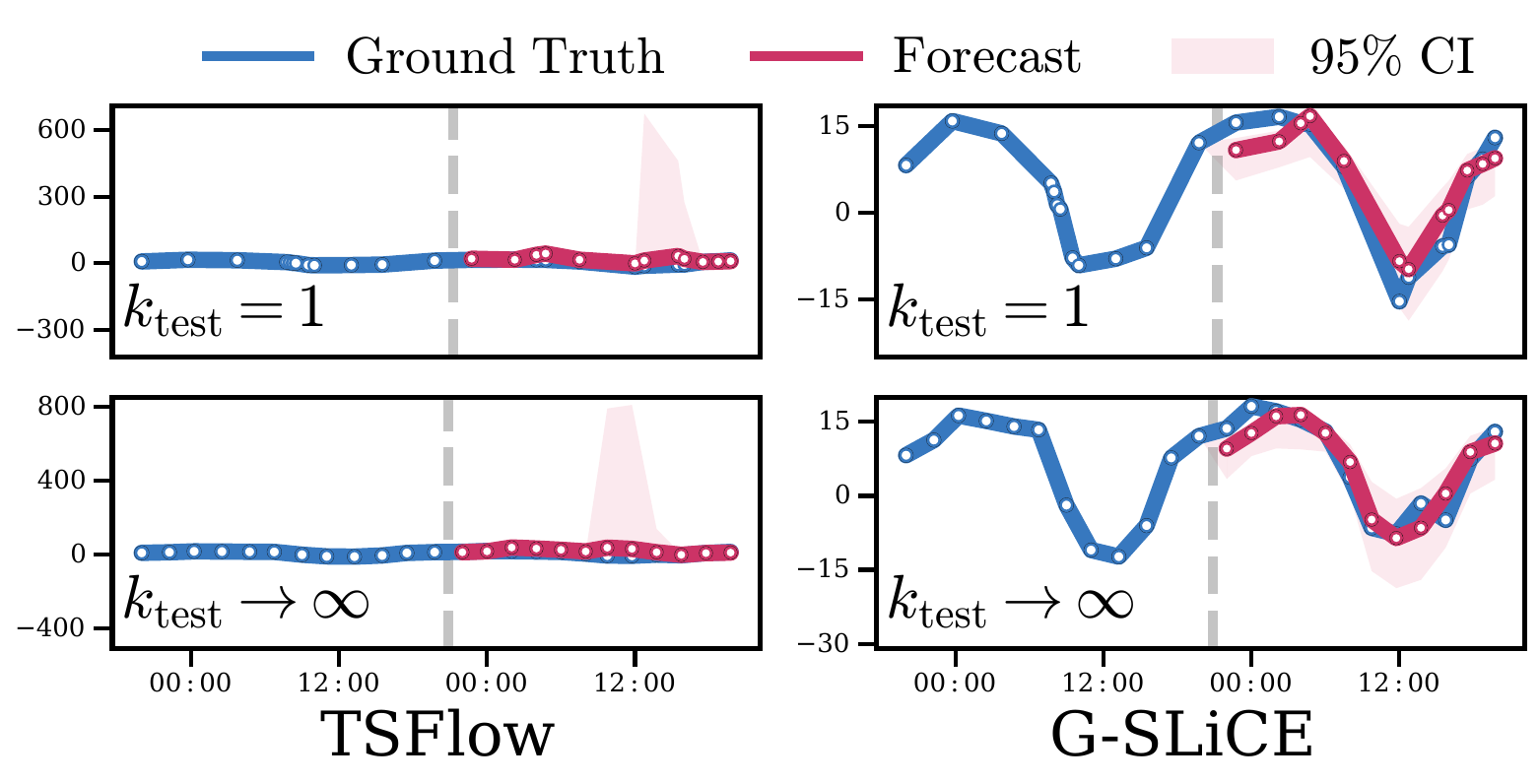}
    \caption{
    Example forecasts for TSFlow (left) and G-SLiCEs (right) trained with \(k_{\mathrm{train}}=1\) and evaluated for \(k_{\mathrm{test}}=1\) (top) or \(k_{\mathrm{test}}\to\infty\) (bottom). Curves show ground truth and predictive means; shaded regions indicate \(95\%\) confidence intervals.
    }
    \label{fig:example_forecasts_subsampling_small}
    \vspace{-1em}
\end{wrapfigure}

  \begin{table}[t!]
    \centering
    \caption{Comparison of G-SLiCE and TSFlow on the ETT 15-minute dataset from GluonTS with context and
prediction length of \(24\) hours. Each sequence is subsampled to \(12\) random time points drawn from a Gamma
distribution with shape parameter \(k\); larger \(k\) corresponds to a more regular grid. Columns indicate the training
irregularity \(k_{\mathrm{train}}\). For each \(k_{\mathrm{train}}\), we report CRPS and NRMSE separately as mean
\(\pm\) std. dev. over five random seeds. The first row per model shows i.i.d. irregular evaluation \(k_{\mathrm{test}}
= k_{\mathrm{train}}\), and the second row shows evaluation on the regular grid, i.e. \(k_{\mathrm{test}} \to \infty\).}
    \label{tab:ett15m_irregular_selected_allk_transposed_multirow_split_k1k10k100_hlines}

    \vspace{0.3em}
    \setlength{\tabcolsep}{3pt}
    \renewcommand{\arraystretch}{0.95}

    \resizebox{0.86\textwidth}{!}{%
    \begin{tabular}{llcc|cc|cc}
        \toprule
        & & \multicolumn{2}{c}{\(k_{\mathrm{train}}=1\)}
        & \multicolumn{2}{c}{\(k_{\mathrm{train}}=10\)}
        & \multicolumn{2}{c}{\(k_{\mathrm{train}}=100\)} \\
        \cmidrule(lr){3-4}\cmidrule(lr){5-6}\cmidrule(lr){7-8}
        & & CRPS & NRMSE & CRPS & NRMSE & CRPS & NRMSE \\
        \midrule
        \multirow{2}{*}{G-SLiCE}
        & \(k_{\mathrm{test}} = k_{\mathrm{train}}\)
        & \(0.13 \pm 0.01\) & \(0.28 \pm 0.05\)
        & \(0.13 \pm 0.01\) & \(0.32 \pm 0.08\)
        & \(0.13 \pm 0.02\) & \(0.27 \pm 0.05\) \\
        & \(k_{\mathrm{test}} \to \infty\)
        & \(0.15 \pm 0.03\) & \(0.33 \pm 0.08\)
        & \(0.14 \pm 0.01\) & \(0.27 \pm 0.08\)
        & \(0.13 \pm 0.02\) & \(0.27 \pm 0.04\) \\
        \midrule
        \multirow{2}{*}{TSFlow}
        & \(k_{\mathrm{test}} = k_{\mathrm{train}}\)
        & \(0.20 \pm 0.07\) & \(1.01 \pm 0.37\)
        & \(0.13 \pm 0.03\) & \(0.88 \pm 0.65\)
        & \(0.21 \pm 0.06\) & \(1.72 \pm 0.85\) \\
        & \(k_{\mathrm{test}} \to \infty\)
        & \(0.20 \pm 0.07\) & \(1.24 \pm 0.58\)
        & \(0.17 \pm 0.06\) & \(1.31 \pm 0.71\)
        & \(0.22 \pm 0.06\) & \(1.75 \pm 0.87\) \\
        \bottomrule
    \end{tabular}%
    }
\end{table}
\section{Conclusion}
\label{sec:conclusion}
This paper introduced G-SLiCE, a path-space flow-matching model with maximally expressive Structured Linear CDE backbone. Theoretically, we showed that pathwise universality implies universality of the induced pushforward laws, connecting deterministic expressivity to distributional time-series generation. On a concrete hard-core sequence example, we showed that the transition structure matters: G-SLiCE can uniformly approximate the induced distribution, whereas dense non-selective and diagonal selective exact-flow SSMs, which abstract S4D- and Mamba-style transitions, provably cannot. 

Empirically, G-SLiCE is competitive with strong statistical, neural, diffusion-based, and flow-based baselines on probabilistic forecasting and unconditional time-series generation. Its continuous-time formulation also improves robustness under changes in sampling frequency and irregular observation grids: G-SLiCE can be evaluated directly on the requested physical-time grid, while fixed-grid SSM backbones require auxiliary resampling rules and often become unstable.

\subsection{Limitations and future work}
\label{ssec:limitations}

One avenue for future work is improving the runtime of the SLiCE backbone, which would directly speed up our method. Our current implementation uses a first-order approximation of the exponential in Equation~\ref{eqn:recurrence_exp}. Although this supports parallel-in-time computation and parallel scan evaluation, efficient GPU kernels for structured matrix exponentials remain an important direction. Log-ODE-style approximations are another possibility, but are not currently compatible with our stacked architecture.
Second, our theory does not fully characterise the expressivity of the flow-matching dynamics used in practice. ODE-based flows impose structural constraints such as invertibility, which some distributional maps lack. Although we show that G-SLiCEs can be implemented as augmented flows, a sharper theory of which path-space distributional maps are reachable by the practical training is desirable.
Finally, Neural CDEs have been extended beyond Euclidean domains, including to graph-valued paths~\citep{qin2025learning, berndt2025permutation}. Extending G-SLiCEs to structured domains is a natural next step.

\clearpage

\section*{Acknowledgements}

We thank Marcel Kollovieh for engaging and insightful discussions and assistance with the TSFlow codebase.

This study received funding from the Klaus Tschira Stiftung gGmbH (HITS Lab). Benjamin Walker is supported by UK Research and Innovation (UKRI) through the Engineering and Physical Sciences Research Council (EPSRC) via Programme Grant [Grant No.\ UKRI1010: High order mathematical and computational infrastructure for streamed data that enhance contemporary generative and large language models] and CIMDA@Oxford, part of the AIR@InnoHK initiative funded by the Innovation and Technology Commission, HKSAR Government.

\bibliographystyle{plainnat}
\bibliography{main}

@article{albergo2023stochastic,
 author = {Albergo, Michael S. and Goldstein, Mark and Boffi, Nicholas M. and Ranganath, Rajesh and Vanden-Eijnden, Eric},
 journal = {arXiv preprint arXiv:2310.03725},
 title = {Stochastic interpolants with data-dependent couplings},
 year = {2023}
}

@article{alcaraz2022diffusion,
 author = {Alcaraz, Juan Miguel Lopez and Strodthoff, Nils},
 journal = {arXiv preprint arXiv:2208.09399},
 title = {Diffusion-based time series imputation and forecasting with structured state space models},
 year = {2022}
}

@article{aldaz2009bernstein,
 author = {Aldaz, J. M. and Kounchev, O. and Render, H.},
 journal = {Constructive Approximation},
 pages = {345--367},
 title = {Bernstein Operators for Exponential Polynomials},
 volume = {29},
 year = {2009}
}

@book{billingsley1999convergence,
 author = {Billingsley, Patrick},
 edition = {2},
 publisher = {Wiley},
 title = {Convergence of Probability Measures},
 year = {1999}
}

@inproceedings{bilovs2023modeling,
 author = {Bilo{\v{s}}, Marin and Rasul, Kashif and Schneider, Anderson and Nevmyvaka, Yuriy and G{\"u}nnemann, Stephan},
 booktitle = {International Conference on Machine Learning},
 organization = {PMLR},
 pages = {2452--2470},
 title = {Modeling temporal data as continuous functions with stochastic process diffusion},
 year = {2023}
}

@book{bogachev2007measure,
 address = {Berlin},
 author = {Bogachev, Vladimir I.},
 doi = {10.1007/978-3-540-34514-5},
 publisher = {Springer},
 title = {Measure Theory},
 year = {2007}
}

@article{cheridito2003fractional,
 author = {Cheridito, Patrick and Kawaguchi, Hiroshi and Maejima, Makoto},
 journal = {Electronic Journal of Probability},
 title = {Fractional Ornstein--Uhlenbeck Processes},
 volume = {8},
 year = {2003}
}

@inproceedings{cirone2024deepSSM,
 author = {Nicola Muca Cirone and Antonio Orvieto and Benjamin Walker and Cristopher Salvi and Terry Lyons},
 booktitle = {Proceedings of the 38th Conference on Neural Information Processing Systems (NeurIPS)},
 title = {Theoretical Foundations of Deep Selective State-Space Models},
 year = {2024}
}

@article{cybenko1989approximation,
 author = {Cybenko, George},
 journal = {Mathematics of Control, Signals and Systems},
 pages = {303--314},
 title = {Approximation by superpositions of a sigmoidal function},
 volume = {2},
 year = {1989}
}

@article{desai2021timevae,
 author = {Desai, Abhyuday and Freeman, Cynthia and Wang, Zuhui and Beaver, Ian},
 journal = {arXiv preprint arXiv:2111.08095},
 title = {Timevae: A variational auto-encoder for multivariate time series generation},
 year = {2021}
}

@misc{dheeru2017uci,
 author = {Dheeru, Dua and Taniskidou, E. Karra},
 title = {Uci machine learning repository},
 year = {2017}
}

@article{doob1942brownian,
 author = {Doob, Joseph L.},
 journal = {Annals of Mathematics},
 number = {2},
 title = {The Brownian Movement and Stochastic Equations},
 volume = {43},
 year = {1942}
}

@inproceedings{dua2017uci,
 author = {Dua, Dheeru and Graff, Casey and others},
 publisher = {Beijing},
 title = {UCI machine learning repository},
 year = {2017}
}

@misc{FiveThirtyEight2016,
 author = {FiveThirtyEight},
 howpublished = {\url{https://github.com/fivethirtyeight/uber-tlc-foil-response}},
 title = {Uber tlc foil response},
 year = {2016}
}

@article{flamary2021pot,
 author = {R{\'e}mi Flamary and Nicolas Courty and Alexandre Gramfort and Mokhtar Z. Alaya and Aur{\'e}lie Boisbunon and Stanislas Chambon and Laetitia Chapel and Adrien Corenflos and Kilian Fatras and Nemo Fournier and L{\'e}o Gautheron and Nathalie T.H. Gayraud and Hicham Janati and Alain Rakotomamonjy and Ievgen Redko and Antoine Rolet and Antony Schutz and Vivien Seguy and Danica J. Sutherland and Romain Tavenard and Alexander Tong and Titouan Vayer},
 journal = {Journal of Machine Learning Research},
 number = {78},
 pages = {1-8},
 title = {POT: Python Optimal Transport},
 url = {http://jmlr.org/papers/v22/20-451.html},
 volume = {22},
 year = {2021}
}

@misc{flamary2024pot,
 author = {Flamary, R{\'e}mi and Vincent-Cuaz, C{\'e}dric and Courty, Nicolas and Gramfort, Alexandre and Kachaiev, Oleksii and Quang Tran, Huy and David, Laurène and Bonet, Cl{\'e}ment and Cassereau, Nathan and Gnassounou, Th{\'e}o and Tanguy, Eloi and Delon, Julie and Collas, Antoine and Mazelet, Sonia and Chapel, Laetitia and Kerdoncuff, Tanguy and Yu, Xizheng and Feickert, Matthew and Krzakala, Paul and Liu, Tianlin and Fernandes Montesuma, Eduardo},
 title = {POT Python Optimal Transport (version 0.9.5)},
 url = {https://github.com/PythonOT/POT},
 year = {2024}
}

@book{folland1999realanalysis,
 author = {Folland, Gerald B.},
 edition = {2},
 publisher = {Wiley},
 title = {Real Analysis: Modern Techniques and Their Applications},
 year = {1999}
}

@inproceedings{gasthaus2019probabilistic,
 author = {Gasthaus, Jan and Benidis, Konstantinos and Wang, Yuyang and Rangapuram, Syama Sundar and Salinas, David and Flunkert, Valentin and Januschowski, Tim},
 booktitle = {The 22nd international conference on artificial intelligence and statistics},
 organization = {PMLR},
 pages = {1901--1910},
 title = {Probabilistic forecasting with spline quantile function rnns},
 year = {2019}
}

@article{gluonts_arxiv,
 author = {Alexandrov, A. and Benidis, K. and Bohlke-Schneider, M. and
Flunkert, V. and Gasthaus, J. and Januschowski, T. and Maddix, D. C.
and Rangapuram, S. and Salinas, D. and Schulz, J. and Stella, L. and
Türkmen, A. C. and Wang, Y.},
 journal = {arXiv preprint arXiv:1906.05264},
 title = {{GluonTS: Probabilistic Time Series Modeling in Python}},
 year = {2019}
}

@article{gluonts_jmlr,
 author = {Alexandrov, Alexander and Benidis, Konstantinos and Bohlke-Schneider, Michael and Flunkert, Valentin and Gasthaus, Jan and Januschowski, Tim and Maddix, Danielle C. and Rangapuram, Syama and Salinas, David and Schulz, Jasper and Stella, Lorenzo and T{\"u}rkmen, Ali Caner and Wang, Yuyang},
 journal = {Journal of Machine Learning Research},
 number = {116},
 pages = {1--6},
 title = {{GluonTS: Probabilistic and Neural Time Series Modeling in Python}},
 url = {http://jmlr.org/papers/v21/19-820.html},
 volume = {21},
 year = {2020}
}

@article{gneiting2007strictly,
 author = {Gneiting, Tilmann and Raftery, Adrian E.},
 journal = {Journal of the American statistical Association},
 number = {477},
 pages = {359--378},
 title = {Strictly proper scoring rules, prediction, and estimation},
 volume = {102},
 year = {2007}
}

@inproceedings{godahewa2021monash,
 author = {Godahewa, Rakshitha and Bergmeir, Christoph and Webb, Geoffrey I. and Hyndman, Rob J. and Montero-Manso, Pablo},
 booktitle = {Neural Information Processing Systems Track on Datasets and Benchmarks},
 title = {Monash time series forecasting archive},
 year = {2021}
}

@inproceedings{godahewa2monash,
 author = {Godahewa, Rakshitha Wathsadini and Bergmeir, Christoph and Webb, Geoffrey I and Hyndman, Rob and Montero-Manso, Pablo},
 booktitle = {Thirty-fifth Conference on Neural Information Processing Systems Datasets and Benchmarks Track (Round 2)},
 title = {Monash Time Series Forecasting Archive},
 year = {2021}
}

@inproceedings{haoyietal-informer-2021,
 author = {Haoyi Zhou and
Shanghang Zhang and
Jieqi Peng and
Shuai Zhang and
Jianxin Li and
Hui Xiong and
Wancai Zhang},
 booktitle = {The Thirty-Fifth {AAAI} Conference on Artificial Intelligence, {AAAI} 2021, Virtual Conference},
 number = {12},
 pages = {11106--11115},
 publisher = {{AAAI} Press},
 title = {Informer: Beyond Efficient Transformer for Long Sequence Time-Series Forecasting},
 volume = {35},
 year = {2021}
}

@article{hornik1991approximation,
 author = {Hornik, Kurt},
 journal = {Neural Networks},
 number = {2},
 pages = {251--257},
 publisher = {Elsevier},
 title = {Approximation capabilities of multilayer feedforward networks},
 volume = {4},
 year = {1991}
}

@book{hyndman2008forecasting,
 author = {Hyndman, Rob and Koehler, Anne B. and Ord, J. Keith and Snyder, Ralph D.},
 publisher = {Springer Science \& Business Media},
 title = {Forecasting with exponential smoothing: the state space approach},
 year = {2008}
}

@article{kerrigan2023functional,
 author = {Kerrigan, Gavin and Migliorini, Giosue and Smyth, Padhraic},
 journal = {arXiv preprint arXiv:2305.17209},
 title = {Functional flow matching},
 year = {2023}
}

@article{kollovieh2023predict,
 author = {Kollovieh, Marcel and Ansari, Abdul Fatir and Bohlke-Schneider, Michael and Zschiegner, Jasper and Wang, Hao and Wang, Yuyang Bernie},
 journal = {Advances in Neural Information Processing Systems},
 title = {Predict, refine, synthesize: Self-guiding diffusion models for probabilistic time series forecasting},
 volume = {36},
 year = {2023}
}

@article{kollovieh2024flow,
 author = {Kollovieh, Marcel and Lienen, Marten and L{\"u}dke, David and Schwinn, Leo and G{\"u}nnemann, Stephan},
 github = {https://github.com/marcelkollovieh/TSFlow},
 journal = {The Thirteenth International Conference on Learning Representations},
 openreview = {https://openreview.net/forum?id=uxVBbSlKQ4&},
 shortjournal = {ICLR},
 title = {Flow Matching with Gaussian Process Priors for Probabilistic Time Series Forecasting},
 year = {2025}
}

@inproceedings{lai2018modeling,
 author = {Lai, Guokun and Chang, Wei-Cheng and Yang, Yiming and Liu, Hanxiao},
 booktitle = {The 41st international ACM SIGIR conference on research \& development in information retrieval},
 pages = {95--104},
 title = {Modeling long-and short-term temporal patterns with deep neural networks},
 year = {2018}
}

@article{lim2021temporal,
 author = {Lim, Bryan and Ar{\i}k, Sercan {\"O}. and Loeff, Nicolas and Pfister, Tomas},
 journal = {International Journal of Forecasting},
 number = {4},
 pages = {1748--1764},
 title = {Temporal fusion transformers for interpretable multi-horizon time series forecasting},
 volume = {37},
 year = {2021}
}

@article{lopezalcaraz2022diffusionbased,
 author = {Juan Lopez Alcaraz and Nils Strodthoff},
 issn = {2835-8856},
 journal = {Transactions on Machine Learning Research},
 title = {Diffusion-based Time Series Imputation and Forecasting with Structured State Space Models},
 url = {https://openreview.net/forum?id=hHiIbk7ApW},
 year = {2022}
}

@article{makridakis2020m4,
 author = {Makridakis, Spyros and Spiliotis, Evangelos and Assimakopoulos, Vassilios},
 journal = {International Journal of Forecasting},
 number = {1},
 pages = {54--74},
 title = {The m4 competition: 100,000 time series and 61 forecasting methods},
 volume = {36},
 year = {2020}
}

@inproceedings{movahedi2025fixedpointrnnsdiagonaldense,
 author = {Sajad Movahedi and Felix Sarnthein and Nicola Muca Cirone and Antonio Orvieto},
 booktitle = {Proceedings of the 39th Conference on Neural Information Processing Systems (NeurIPS)},
 title = {Fixed-Point RNNs: From Diagonal to Dense in a Few Iterations},
 year = {2025}
}

@article{oord2016wavenet,
 author = {Oord, Aaron van den and Dieleman, Sander and Zen, Heiga and Simonyan, Karen and Vinyals, Oriol and Graves, Alex and Kalchbrenner, Nal and Senior, Andrew and Kavukcuoglu, Koray},
 journal = {arXiv preprint arXiv:1609.03499},
 title = {Wavenet: A generative model for raw audio},
 year = {2016}
}

@inproceedings{orvieto2024universality,
 author = {Orvieto, Antonio and De, Soham and Gulcehre, Caglar and Pascanu, Razvan and Smith, Samuel L.},
 booktitle = {Proceedings of the 41st International Conference on Machine Learning},
 title = {Universality of Linear Recurrences Followed by Non-linear Projections: Finite-Width Guarantees and Benefits of Complex Eigenvalues},
 year = {2024}
}

@book{Rasmussen2006Gaussian,
 author = {Rasmussen, Carl Edward and Williams, Christopher K. I.},
 publisher = {The MIT Press},
 title = {Gaussian Processes for Machine Learning},
 year = {2006}
}

@inproceedings{rasul2021autoregressive,
 author = {Rasul, Kashif and Seward, Calvin and Schuster, Ingmar and Vollgraf, Roland},
 booktitle = {International Conference on Machine Learning},
 organization = {PMLR},
 pages = {8857--8868},
 title = {Autoregressive denoising diffusion models for multivariate probabilistic time series forecasting},
 year = {2021}
}

@article{salinas2020deepar,
 author = {Salinas, David and Flunkert, Valentin and Gasthaus, Jan and Januschowski, Tim},
 journal = {International journal of forecasting},
 number = {3},
 pages = {1181--1191},
 title = {Deepar: Probabilistic forecasting with autoregressive recurrent networks},
 volume = {36},
 year = {2020}
}

@article{scikit-learn,
 author = {Pedregosa, F. and Varoquaux, G. and Gramfort, A. and Michel, V.
and Thirion, B. and Grisel, O. and Blondel, M. and Prettenhofer, P.
and Weiss, R. and Dubourg, V. and Vanderplas, J. and Passos, A. and
Cournapeau, D. and Brucher, M. and Perrot, M. and Duchesnay, E.},
 journal = {Journal of Machine Learning Research},
 pages = {2825--2830},
 title = {Scikit-learn: Machine Learning in {P}ython},
 volume = {12},
 year = {2011}
}

@inproceedings{sohl2015deep,
 author = {Sohl-Dickstein, Jascha and Weiss, Eric and Maheswaranathan, Niru and Ganguli, Surya},
 booktitle = {International conference on machine learning},
 organization = {PMLR},
 pages = {2256--2265},
 title = {Deep unsupervised learning using nonequilibrium thermodynamics},
 year = {2015}
}

@article{tashiro2021csdi,
 author = {Tashiro, Yusuke and Song, Jiaming and Song, Yang and Ermon, Stefano},
 journal = {Advances in Neural Information Processing Systems},
 pages = {24804--24816},
 title = {Csdi: Conditional score-based diffusion models for probabilistic time series imputation},
 volume = {34},
 year = {2021}
}

@inproceedings{tong2023improving,
 author = {Tong, Alexander and Malkin, Nikolay and Huguet, Guillaume and Zhang, Yanlei and Rector-Brooks, Jarrid and FATRAS, Kilian and Wolf, Guy and Bengio, Yoshua},
 booktitle = {ICML Workshop on New Frontiers in Learning, Control, and Dynamical Systems},
 title = {Improving and generalizing flow-based generative models with minibatch optimal transport},
 year = {2023}
}

@book{villani2009optimaltransport,
 address = {Berlin, Heidelberg},
 author = {Villani, C{\'e}dric},
 doi = {10.1007/978-3-540-71050-9},
 publisher = {Springer},
 title = {Optimal Transport: Old and New},
 year = {2009}
}

@inproceedings{walker2025structuredlinearcdesmaximally,
 author = {Benjamin Walker and Lingyi Yang and Nicola Muca Cirone and Cristopher Salvi and Terry Lyons},
 booktitle = {Proceedings of the 39th Conference on Neural Information Processing Systems (NeurIPS)},
 title = {Structured Linear CDEs: Maximally Expressive and Parallel-in-Time Sequence Models},
 year = {2025}
}

@inproceedings{wang2023state,
 author = {Wang, Shida and Xue, Beichen},
 booktitle = {Proceedings of the 37th Conference on Neural Information Processing Systems (NeurIPS)},
 title = {State-space Models with Layer-wise Nonlinearity are Universal Approximators with Exponential Decaying Memory},
 year = {2023}
}

@inproceedings{Yuqietal-2023-PatchTST,
 author = {Nie, Yuqi and
H. Nguyen, Nam and
Sinthong, Phanwadee and 
Kalagnanam, Jayant},
 booktitle = {International Conference on Learning Representations},
 title = {A Time Series is Worth 64 Words: Long-term Forecasting with Transformers},
 year = {2023}
}

@inproceedings{zeng2023transformers,
 author = {Zeng, Ailing and Chen, Muxi and Zhang, Lei and Xu, Qiang},
 booktitle = {Proceedings of the AAAI conference on artificial intelligence},
 pages = {11121--11128},
 title = {Are transformers effective for time series forecasting?},
 volume = {37},
 year = {2023}
}

@inproceedings{lipman2023flow,
title={Flow Matching for Generative Modeling},
author={Yaron Lipman and Ricky T. Q. Chen and Heli Ben-Hamu and Maximilian Nickel and Matt Le},
booktitle = {The Eleventh International Conference on Learning Representations},
year = {2023}
}

@inproceedings{berndt2025permutation,
  title={Permutation Equivariant Neural Controlled Differential Equations for Dynamic Graph Representation Learning},
  author={Berndt, Torben and Walker, Benjamin and Qin, Tiexin and St{\"u}hmer, Jan and Kormilitzin, Andrey},
  booktitle={Proceedings of the 39th Conference on Neural Information Processing Systems (NeurIPS)},
  year={2025}
}

@article{qin2025learning,
  title={Learning dynamic graph embeddings with neural controlled differential equations},
  author={Qin, Tiexin and Walker, Benjamin and Lyons, Terry and Yan, Hong and Li, Haoliang},
  journal={IEEE Transactions on Pattern Analysis and Machine Intelligence},
  year={2025},
  publisher={IEEE}
}

@inproceedings{
chen2024flow,
title={Flow Matching on General Geometries},
author={Ricky T. Q. Chen and Yaron Lipman},
booktitle={The Twelfth International Conference on Learning Representations},
year={2024},
}

@inproceedings{
gat2024discrete,
title={Discrete Flow Matching},
author={Itai Gat and Tal Remez and Neta Shaul and Felix Kreuk and Ricky T. Q. Chen and Gabriel Synnaeve and Yossi Adi and Yaron Lipman},
booktitle={The Thirty-eighth Annual Conference on Neural Information Processing Systems},
year={2024},
}

@article{lim2024parallelizing,
  title={Parallelizing nonlinear sequential models over the sequence length},
  author={Lim, Yi Heng and Zhu, Qi and Selfridge, Joshua and Kasim, Muhammad Firmansyah},
  journal={International Conference on Learning Representations},
  year={2024}
}

@inproceedings{gonzalez2024towards,
  title={Towards scalable and stable parallelization of nonlinear RNNs},
  author={Gonzalez, Xavier and Warrington, Andrew and Smith, Jimmy T. H. and Linderman, Scott W.},
  booktitle={Advances in Neural Information Processing Systems},
  year={2024}
}

@inproceedings{fan2024advancing,
  title={Advancing regular language reasoning in linear recurrent neural networks},
  author={Fan, Ting-Han and Chi, Ta-Chung and Rudnicky, Alexander},
  booktitle={Proceedings of the 2024 Conference of the North American Chapter of the Association for Computational Linguistics: Human Language Technologies, Volume 2: Short Papers},
  pages={45--53},
  year={2024},
  address={Mexico City, Mexico},
  publisher={Association for Computational Linguistics}
}

@inproceedings{yang2024parallelizing,
  title={Parallelizing linear transformers with the delta rule over sequence length},
  author={Yang, Songlin and Wang, Bailin and Zhang, Yu and Shen, Yikang and Kim, Yoon},
  booktitle={The Thirty-eighth Annual Conference on Neural Information Processing Systems},
  year={2024}
}

@article{yang2024improving,
  title={Gated delta networks: Improving mamba2 with delta rule.},
  author={Yang, Songlin and Kautz, Jan and Hatamizadeh, Ali},
  journal={arXiv preprint arXiv:2412.06464},
  year={2024}
}

@article{siems2025deltaproduct,
  title={DeltaProduct: Imprpoving state-tracking in linear RNNs via Householder products},
  author={Siems, Julien and Carstensen, Timur and Zela, Arber and Hutter, Frank and Pontil, Massimiliano and Grazzi, Riccardo},
  journal={arXiv preprint arXiv:2502.10297},
  year={2025}
}

@article{gu2023mamba,
  title={Mamba: Linear-time sequence modeling with selective state spaces},
  author={Gu, Albert and Dao, Tri},
  journal={arXiv preprint arXiv:2312.00752},
  year={2023}
}

@inproceedings{dao2024transformers,
  title={Transformers are SSMs: Generalized models and efficient algorithms through structured state space duality},
  author={Dao, Tri and Gu, Albert},
  booktitle={Proceedings of the 41st International Conference on Machine Learning},
  year={2024},
  publisher={JMLR.org}
}

@article{peng2025rwkv,
  title={RWKV-7 ``Goose'' with expressive dynamic state evolution},
  author={Peng, Bo and Zhang, Ruichong and Goldstein, Daniel and Alcaide, Eric and Du, Xingjian and Hou, Haowen and Lin, Jiaju and Liu, Jiaxing and Lu, Janna and Merrill, William and Song, Guangyu and Tan, Kaifeng and Utpala, Saiteja and Wilce, Nathan and Wind, Johan S. and Wu, Tianyi and Wuttke, Daniel and Zhou-Zheng, Christian},
  journal={arXiv preprint arXiv:2503.14456},
  year={2025}
}

@article{qin2024hgrn2,
  title={HGRN2: Gated linear RNNs with state expansion},
  author={Qin, Zhen and Yang, Songlin and Sun, Weixuan and Shen, Xuyang and Li, Dong and Sun, Weigao and Zhong, Yiran},
  journal={arXiv preprint arXiv:2404.07904},
  year={2024}
}

@inproceedings{beck2024xlstm,
  title={xLSTM: Extended long short-term memory},
  author={Beck, Maximilian and P{\"o}ppel, Korbinian and Spanring, Markus and Auer, Andreas and Prudnikova, Oleksandra and Kopp, Michael and Klambauer, G{\"u}nter and Brandstetter, Johannes and Hochreiter, Sepp},
  booktitle={Proceedings of the 38th Conference on Neural Information Processing Systems},
  year={2024}
}

@article{yang2024gated,
  title={Gated linear attention transformers with hardware-efficient training},
  author={Yang, Songlin and Wang, Bailin and Shen, Yikang and Panda, Rameswar and Kim, Yoon},
  journal={arXiv preprint arXiv:2312.06635},
  year={2024}
}

@inproceedings{peng2021random,
  title={Random feature attention},
  author={Peng, Hao and Pappas, Nikolaos and Yogatama, Dani and Schwartz, Roy and Smith, Noah A. and Kong, Lingpeng},
  booktitle={International Conference on Learning Representations},
  year={2021}
}

@inproceedings{zhang2024gated,
  title={Gated slot attention for efficient linear-time sequence modeling},
  author={Zhang, Yuxuan and Yang, Shiliang and Zhu, Rong and Zhang, Yichong and Cui, Lei and Wang, Yongjing and Wang, Bin and Shi, Feng and Wang, Bing and Bi, Wei and Zhou, Ping and Fu, Guoxin},
  booktitle={Proceedings of the Thirty-eighth Annual Conference on Neural Information Processing Systems},
  year={2024}
}

@article{sun2025learning,
  title={Learning to (learn at test time): RNNs with expressive hidden states},
  author={Sun, Yu and Li, Xinhao and Dalal, Karan and Xu, Jiarui and Vikram, Arjun and Zhang, Genghan and Dubois, Yann and Chen, Xinlei and Wang, Xiaolong and Koyejo, Sanmi and Hashimoto, Tatsunori and Guestrin, Carlos},
  journal={arXiv preprint arXiv:2407.04620},
  year={2025}
}

@article{behrouz2024titans,
  title={Titans: Learning to memorize at test time},
  author={Behrouz, Ali and Zhong, Peilin and Mirrokni, Vahab},
  journal={arXiv preprint arXiv:2501.00663},
  year={2024}
}

@article{price2025probabilistic,
  title={Probabilistic weather forecasting with machine learning},
  author={Price, Ilan and Sanchez-Gonzalez, Alvaro and Alet, Ferran and Andersson, Tom R and El-Kadi, Andrew and Masters, Dominic and Ewalds, Timo and Stott, Jacklynn and Mohamed, Shakir and Battaglia, Peter and others},
  journal={Nature},
  volume={637},
  number={8044},
  pages={84--90},
  year={2025},
  publisher={Nature Publishing Group UK London}
}

@article{taieb2015probabilistic,
  title={Probabilistic time series forecasting with boosted additive models: an application to smart meter data},
  author={Taieb, Souhaib Ben and Huser, Raphael and Hyndman, Rob J and Genton, Marc G and others},
  journal={Department of Economics and business statistics, Monash University},
  year={2015}
}

@inproceedings{qian2023uncertainty,
  title={Uncertainty quantification for traffic forecasting: A unified approach},
  author={Qian, Weizhu and Zhang, Dalin and Zhao, Yan and Zheng, Kai and Yu, James JQ},
  booktitle={2023 IEEE 39th International Conference on Data Engineering (ICDE)},
  pages={992--1004},
  year={2023},
  organization={IEEE}
}


\newpage
\appendixpage
\DoToC
\appendix
\section{Universal Time Series Generation}\label{app:theory}

This section makes precise the sense in which Linear NCDEs are universal models for time series generation: they can approximate any target path law obtained by applying a continuous causal transformation to a latent path law. Under compact support, this approximation holds with exact $W_\infty$ control. For tight non-compact latent laws, the same argument yields high-probability approximation on compact sets carrying arbitrarily large mass. First, we show that in a general setting, maximal expressivity at the level of deterministic functions implies approximation of the corresponding pushforward distributions in $W_\infty$ whenever the input law is concentrated on a compact set. We then apply this observation to path-to-path models, using recent universality results for Linear NCDEs with non-linear readouts to obtain the corresponding distributional statement on path space. Finally, we contrast this with S4 and Mamba, and discuss how these ideas apply in the conditional Gaussian process setting used in practice, where interpolation and non-compact latent laws introduce additional technical considerations.

\subsection{From Maximal Expressivity to Distributional Approximation}

This subsection explains how a deterministic approximation statement can be converted into a distributional approximation result. The starting point is maximal expressivity, which is the ability to approximate continuous functions uniformly on compact sets. The conclusion is that, when the input law is concentrated on a compact set and the target map is continuous on that set, the pushforward distributions induced by the model can approximate the target pushforward distribution in $W_\infty$. The argument is elementary, but it is useful to state it explicitly because it clarifies how approximation properties of functions transfer to approximation properties of the corresponding pushforward measures.

\begin{definition}[Maximal expressivity~\citep{walker2025structuredlinearcdesmaximally}]
\label{def:max_exp_metric_global}
Let $(\mathcal{X},\rho_\mathcal{X})$ and $(\mathcal{Y},\rho_\mathcal{Y})$ be metric spaces, and let $\mathcal{F}=\{f_\theta:\mathcal{X}\to\mathcal{Y}\mid \theta\in\Theta\}$ be a class of functions.
We say that $\mathcal{F}$ is maximally expressive, or universal, if for every compact set $\mathcal{K}\subset\mathcal{X}$ and every continuous function $f:\mathcal{K}\to\mathcal{Y}$,
\begin{equation}
\label{eq:max_exp_metric_def_global}
\forall \varepsilon>0,\;\exists \theta\in\Theta
\quad\text{s.t.}\quad
\sup_{x\in\mathcal{K}} \rho_\mathcal{Y}\big(f(x),f_\theta(x)\big)\leq\varepsilon.
\end{equation}
\end{definition}

When $\mathcal{X}=\mathbb{R}^d$ with $\rho_\mathcal{X}(x_1,x_2)=\|x_1-x_2\|_2$ and $\mathcal{Y}=\mathbb{R}$ with $\rho_\mathcal{Y}(y_1,y_2)=|y_1-y_2|$, Definition~\ref{def:max_exp_metric_global} is the standard requirement of uniform approximation on compact subsets of $\mathbb{R}^d$. In this setting, classical universal approximation theorems show that multi-layer perceptrons with suitable activation functions form a maximally expressive function class in the sense of Definition~\ref{def:max_exp_metric_global}~\citep{cybenko1989approximation, hornik1991approximation}. Our goal here is to show that this notion of uniform approximation also yields a natural approximation guarantee at the level of probability measures.

To state this precisely, we recall the basic measure-theoretic notions appearing in the argument. These definitions are standard, but we include them for completeness and to keep the appendix self-contained. We begin with the underlying measurable structure.

\begin{definition}[$\sigma$-algebra~\citep{bogachev2007measure}]
Let $\mathcal{X}$ be a set. A collection $\Sigma$ of subsets of $\mathcal{X}$ is called a $\sigma$-algebra on $\mathcal{X}$ if
\begin{enumerate}
    \item $\mathcal{X}\in\Sigma$,
    \item if $A\in\Sigma$ then $\mathcal{X}\setminus A\in\Sigma$,
    \item if $(A_n)_{n\ge 1}$ is a sequence of sets in $\Sigma$, then $\bigcup_{n\ge 1} A_n\in\Sigma$.
\end{enumerate}
The pair $(\mathcal{X},\Sigma)$ is called a measurable space.
\end{definition}

A $\sigma$-algebra specifies which subsets of $\mathcal{X}$ are measurable, and therefore which events can be assigned probabilities. Since $\mathcal{X}$ is assumed to be a metric space, there is a canonical choice of measurable structure, namely the Borel $\sigma$-algebra generated by the open sets.

\begin{definition}[Borel $\sigma$-algebra~\citep{bogachev2007measure}]
Let $(\mathcal{X},\rho_\mathcal{X})$ be a metric space. The Borel $\sigma$-algebra on $\mathcal{X}$, denoted $\mathcal{B}(\mathcal{X})$, is the smallest $\sigma$-algebra containing all open subsets of $\mathcal{X}$.
\end{definition}

Once the measurable structure has been fixed, we can speak of probability measures on $\mathcal{X}$.

\begin{definition}[Borel probability measure~\citep{bogachev2007measure}]
Let $(\mathcal{X},\rho_\mathcal{X})$ be a metric space. A Borel probability measure on $\mathcal{X}$ is a probability measure on the measurable space $(\mathcal{X},\mathcal{B}(\mathcal{X}))$.
\end{definition}

The next notion records where a measure is locally non-trivial.

\begin{definition}[Support~\citep{bogachev2007measure}]
Let $\mu$ be a Borel probability measure on a metric space $(\mathcal{X},\rho_\mathcal{X})$. The support of $\mu$ is
\begin{equation}
\label{eq:support_def}
\mathrm{supp}(\mu)
=\bigl\{x\in\mathcal{X}:\ \mu\bigl(B_{\rho_\mathcal{X}}(x,r)\bigr)>0\ \text{for all } r>0\bigr\}.
\end{equation}
\end{definition}

Given a measurable map $T:\mathcal{X}\to\mathcal{Y}$ and an input law $\mu$ on $\mathcal{X}$, the natural output law is the distribution obtained by transporting $\mu$ through $T$. This is captured by the pushforward measure.

\begin{definition}[Pushforward measure~\citep{bogachev2007measure}]
Let $\mu$ be a Borel probability measure on $\mathcal{X}$ and let $T:\mathcal{X}\to\mathcal{Y}$ be Borel measurable.
The pushforward of $\mu$ by $T$ is the Borel probability measure $T_\#\mu$ on $\mathcal{Y}$ defined by
\begin{equation}
\label{eq:pushforward_def}
(T_\#\mu)(A)=\mu\big(T^{-1}(A)\big), \qquad A\in\mathcal{B}(\mathcal{Y}).
\end{equation}
\end{definition}

Thus, if $X\sim\mu$, then $T_\#\mu$ is simply the law of the random variable $T(X)$. In our setting, $T$ will denote the target transformation and $f_\theta$ a model approximation to $T$. The question is whether closeness of $f_\theta$ to $T$ at the level of points implies closeness of $(f_\theta)_\#\mu$ to $T_\#\mu$ at the level of probability measures. 

To measure this closeness between probability measures, we use the $\infty$-Wasserstein distance on the output space $\mathcal{Y}$. This distance records the smallest possible essential worst-case transport cost over all couplings of the two measures. For the $W_\infty$ argument below, we additionally assume that the output space $(\mathcal{Y},\rho_\mathcal{Y})$ is separable, meaning that it contains a countable dense subset.

\begin{definition}[$W_\infty$~\citep{villani2009optimaltransport}]
Let $(\mathcal{Y},\rho_\mathcal{Y})$ be a separable metric space, and let $\mu,\nu$ be Borel probability measures on $\mathcal{Y}$.
A coupling of $\mu$ and $\nu$ is a Borel probability measure $\pi$ on $\mathcal{Y}\times\mathcal{Y}$ whose first marginal is $\mu$ and whose second marginal is $\nu$.
Write $\Pi(\mu,\nu)$ for the set of all such couplings.
The $\infty$-Wasserstein distance, possibly taking the value $+\infty$, is
\begin{equation}
\label{eq:winfty_def}
W_\infty(\mu,\nu)=\inf_{\pi\in\Pi(\mu,\nu)} \ \operatorname*{ess\,sup}_{(y,z)\sim\pi} \,\rho_\mathcal{Y}(y,z).
\end{equation}
\end{definition}

The relevance of $W_\infty$ here is that it interacts very naturally with uniform approximation. If two functions are uniformly close on a set of full $\mu$-measure, then applying them to the same input sample immediately produces a coupling whose transport cost is uniformly controlled. The next lemma formalises this simple observation.

\begin{lemma}[$W_\infty$ control via a pointwise transport bound]
\label{lem:winfty_pointwise_bound}
Let $(\mathcal{X},\rho_\mathcal{X})$ be a metric space, let $(\mathcal{Y},\rho_\mathcal{Y})$ be a separable metric space, and let $\mu$ be a Borel probability measure on $\mathcal{X}$.
Let $T:\mathcal{X}\to\mathcal{Y}$ and $f:\mathcal{X}\to\mathcal{Y}$ be Borel measurable.
Then
\begin{equation}
\label{eq:winfty_pointwise_bound}
W_\infty(f_\#\mu,T_\#\mu)\le \operatorname*{ess\,sup}_{x\sim\mu} \rho_\mathcal{Y}\big(f(x),T(x)\big).
\end{equation}
\end{lemma}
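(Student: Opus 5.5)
The natural approach is to build the coupling directly: push $\mu$ forward through the joint map $x\mapsto (f(x),T(x))$. Concretely, define $\Psi:\mathcal{X}\to\mathcal{Y}\times\mathcal{Y}$ by $\Psi(x)=(f(x),T(x))$ and set $\pi=\Psi_\#\mu$. We then check three things: that $\pi$ is a Borel probability measure on $\mathcal{Y}\times\mathcal{Y}$, that $\pi\in\Pi(f_\#\mu,T_\#\mu)$, and that the essential supremum of $\rho_\mathcal{Y}$ under $\pi$ equals the essential supremum of $x\mapsto\rho_\mathcal{Y}(f(x),T(x))$ under $\mu$. Taking the infimum over couplings in the definition of $W_\infty$ then gives the bound in the statement.

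\textbf{Step 1 (measurability).} This is the only point needing care, and it is where separability of $\mathcal{Y}$ is used. Because $\mathcal{Y}$ is separable, $\mathcal{B}(\mathcal{Y}\times\mathcal{Y})=\mathcal{B}(\mathcal{Y})\otimes\mathcal{B}(\mathcal{Y})$. The product $\sigma$-algebra is generated by rectangles $A\times B$, and $\Psi^{-1}(A\times B)=f^{-1}(A)\cap T^{-1}(B)$ is Borel. Hence $\Psi$ is Borel measurable and $\pi$ is a Borel probability measure on the product. Separability also makes $\rho_\mathcal{Y}:\mathcal{Y}\times\mathcal{Y}\to[0,\infty)$ measurable with respect to this same $\sigma$-algebra, since it is continuous for the product topology. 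Consequently the essential supremum in the definition of $W_\infty$ is well defined.

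\textbf{Step 2 (marginals).} Let $p_1,p_2$ be the coordinate projections. Then $p_1\circ\Psi=f$ and $p_2\circ\Psi=T$, so $(p_1)_\#\pi=f_\#\mu$ and $(p_2)_\#\pi=T_\#\mu$. Thus $\pi\in\Pi(f_\#\mu,T_\#\mu)$.

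\textbf{Step 3 (cost).} Write $M=\operatorname*{ess\,sup}_{x\sim\mu}\rho_\mathcal{Y}(f(x),T(x))$ and assume $M<\infty$; otherwise there is nothing to prove. For any $c$, the change-of-variables identity gives
\[
\pi\bigl(\{(y,z):\rho_\mathcal{Y}(y,z)>c\}\bigr)=\mu\bigl(\{x:\rho_\mathcal{Y}(f(x),T(x))>c\}\bigr).
\]
For $c>M$ the right-hand side is zero, so $\operatorname*{ess\,sup}_{\pi}\rho_\mathcal{Y}\le M$. Therefore
\[
W_\infty(f_\#\mu,T_\#\mu)\le\operatorname*{ess\,sup}_{\pi}\rho_\mathcal{Y}\le M.
\]

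The main obstacle is Step 1, namely identifying the product Borel $\sigma$-algebra. Without separability, $\Psi$ may fail to be Borel into $\mathcal{Y}\times\mathcal{Y}$ with its product topology. Everything else is bookkeeping.
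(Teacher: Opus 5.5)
Your proposal is correct and follows essentially the same route as the paper: couple the two laws via $\pi=(f,T)_\#\mu$, use separability of $\mathcal{Y}$ to get Borel measurability of the joint map, check the marginals, and take the infimum over couplings. You also write out the details the paper leaves implicit: the identification $\mathcal{B}(\mathcal{Y}\times\mathcal{Y})=\mathcal{B}(\mathcal{Y})\otimes\mathcal{B}(\mathcal{Y})$ and the one-sided bound on the essential supremum, where the paper simply asserts equality.
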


\begin{proof}
Since $\mathcal{Y}$ is separable, the map $x\mapsto (f(x),T(x))$ from $\mathcal{X}$ to $\mathcal{Y}\times\mathcal{Y}$ is Borel measurable.
Define
\begin{equation}
\pi = (f,T)_\#\mu .
\end{equation}
Then $\pi\in\Pi(f_\#\mu,T_\#\mu)$, and
\begin{equation}
\label{eq:winfty_pointwise_bound_coupling}
\operatorname*{ess\,sup}_{(y,z)\sim\pi}\, \rho_\mathcal{Y}(y,z)
= \operatorname*{ess\,sup}_{x\sim\mu}\, \rho_\mathcal{Y}\big(f(x),T(x)\big).
\end{equation}
Taking the infimum over all couplings in \eqref{eq:winfty_def} yields \eqref{eq:winfty_pointwise_bound}.
\end{proof}

The lemma shows that distributional approximation in $W_\infty$ follows immediately from an almost-sure pointwise bound. We now combine this observation with maximal expressivity. The key approximation-theoretic assumptions in the corollary are that the input law is concentrated on a compact set and that the target map is continuous on that set.

\begin{corollary}[Distributional approximation from maximal expressivity]
\label{cor:max_exp_distributional}
Let $(\mathcal{X},\rho_\mathcal{X})$ be a metric space, let $(\mathcal{Y},\rho_\mathcal{Y})$ be a separable metric space, and let $\mathcal{F}=\{f_\theta:\mathcal{X}\to\mathcal{Y}\mid \theta\in\Theta\}$, where each $f_\theta$ is Borel measurable, be maximally expressive in the sense of Definition~\ref{def:max_exp_metric_global}.
Let $\mu$ be a Borel probability measure on $\mathcal{X}$ such that $\mu(\mathcal{K})=1$ for some compact set $\mathcal{K}\subset\mathcal{X}$.
Let $T:\mathcal{X}\to\mathcal{Y}$ be Borel measurable and assume that $T|_{\mathcal{K}}:\mathcal{K}\to\mathcal{Y}$ is continuous.
Then for every $\varepsilon>0$ there exists $\theta\in\Theta$ such that
\begin{equation}
\label{eq:cor_max_exp_winfty}
W_\infty\big((f_\theta)_\#\mu,T_\#\mu\big)\le \varepsilon.
\end{equation}
\end{corollary}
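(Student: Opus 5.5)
The plan is to combine maximal expressivity (Definition~\ref{def:max_exp_metric_global}) with the coupling bound of Lemma~\ref{lem:winfty_pointwise_bound}. Fix $\varepsilon>0$. Since $\mathcal{K}$ is compact and $T|_{\mathcal{K}}:\mathcal{K}\to\mathcal{Y}$ is continuous, $T|_{\mathcal{K}}$ is an admissible target in Definition~\ref{def:max_exp_metric_global}. Maximal expressivity of $\mathcal{F}$ therefore yields $\theta\in\Theta$ with
\begin{equation*}
\sup_{x\in\mathcal{K}}\rho_\mathcal{Y}\bigl(T(x),f_\theta(x)\bigr)\le\varepsilon .
\end{equation*}

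Next we pass from this uniform bound on $\mathcal{K}$ to an essential-supremum bound under $\mu$. The function $x\mapsto\rho_\mathcal{Y}(f_\theta(x),T(x))$ is Borel measurable. Indeed, $\mathcal{Y}$ is separable, so $\mathcal{B}(\mathcal{Y}\times\mathcal{Y})=\mathcal{B}(\mathcal{Y})\otimes\mathcal{B}(\mathcal{Y})$; hence $(f_\theta,T)$ is Borel measurable, and $\rho_\mathcal{Y}$ is continuous. The set on which this function exceeds $\varepsilon$ is contained in $\mathcal{X}\setminus\mathcal{K}$, and $\mu(\mathcal{X}\setminus\mathcal{K})=0$ because $\mu(\mathcal{K})=1$. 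Consequently
\begin{equation*}
\operatorname*{ess\,sup}_{x\sim\mu}\rho_\mathcal{Y}\bigl(f_\theta(x),T(x)\bigr)\le\varepsilon .
\end{equation*}
This is the only place where the compactness assumption is used beyond the choice of $\theta$. Behaviour of $f_\theta$ or $T$ off $\mathcal{K}$ is irrelevant because that region is $\mu$-null.

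Finally, we apply Lemma~\ref{lem:winfty_pointwise_bound} with $f=f_\theta$. Its hypotheses hold: $f_\theta$ and $T$ are Borel measurable and $\mathcal{Y}$ is separable. The lemma gives $W_\infty((f_\theta)_\#\mu,T_\#\mu)\le\varepsilon$, which is \eqref{eq:cor_max_exp_winfty}.

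No step is a genuine obstacle. The only point needing care is the measurability of the coupling map, and the lemma has already handled this via the separability of $\mathcal{Y}$.
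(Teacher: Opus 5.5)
Your proof is correct and follows the paper's argument exactly: choose $\theta$ by applying maximal expressivity to $T|_{\mathcal{K}}$ on $\mathcal{K}$, pass to an essential-supremum bound using $\mu(\mathcal{K})=1$, and conclude with Lemma~\ref{lem:winfty_pointwise_bound}. One small wording point: the essential-supremum step uses only $\mu(\mathcal{K})=1$ (and that $\mathcal{K}$ is Borel), not compactness, which enters only through the choice of $\theta$.
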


\begin{proof}
Fix $\varepsilon>0$. By maximal expressivity applied to the compact set $\mathcal{K}$ and the continuous map $T|_{\mathcal{K}}:\mathcal{K}\to\mathcal{Y}$, choose $\theta\in\Theta$ such that
\begin{equation}
\label{eq:cor_max_exp_uniform}
\sup_{x\in\mathcal{K}} \rho_\mathcal{Y}\bigl(f_\theta(x),T(x)\bigr) \leq \varepsilon.
\end{equation}
Since $\mu(\mathcal{K})=1$, it follows that
\begin{equation}
\label{eq:cor_max_exp_esssup}
\operatorname*{ess\,sup}_{x\sim\mu} \rho_\mathcal{Y}\bigl(f_\theta(x),T(x)\bigr) \le \varepsilon.
\end{equation}
Applying Lemma~\ref{lem:winfty_pointwise_bound} with $f=f_\theta$ yields \eqref{eq:cor_max_exp_winfty}.
\end{proof}

Corollary~\ref{cor:max_exp_distributional} shows that maximal expressivity at the level of deterministic functions automatically yields a corresponding universality statement for pushforward measures generated from input laws that are concentrated on a compact set, provided the target map is Borel measurable on $\mathcal{X}$ and continuous on a compact full-measure set. In other words, once a model class can approximate continuous functions uniformly on compact sets, it can also approximate the distributional transformations induced by such functions on any input law $\mu$ satisfying $\mu(\mathcal{K})=1$ for some compact $\mathcal{K}\subset\mathcal{X}$.

This observation is useful in machine learning settings where one is interested not only in approximating a target map pointwise, but also in reproducing the distribution of outputs generated by that map over a population of inputs. The corollary shows that, under concentration on a compact full-measure set and continuity on that set, no separate distributional approximation theorem is needed: it follows directly from the standard uniform approximation property together with the elementary coupling argument of Lemma~\ref{lem:winfty_pointwise_bound}.

\subsection{Path-to-Path Models}

We now specialise to path space. Throughout this subsection, let $\mathcal{X}(d)$ denote the space
\begin{equation}
\mathcal{X}(d)=C^{1,0}([t_0,t_n],\mathbb{R}^d),
\end{equation}
consisting of absolutely continuous, time-augmented $d$-dimensional paths on the interval $[t_0,t_n]$ which all begin at the same point, endowed with the $1$-variation topology. For output paths, we use the supremum metric
\begin{equation}
\label{eq:path_sup_metric}
\rho_\infty(Y,\widetilde Y)=\sup_{t\in[t_0,t_n]}\|Y_t-\widetilde Y_t\|_2,
\qquad
Y,\widetilde Y\in C([t_0,t_n],\mathbb{R}^{d_y}).
\end{equation}

We next define a Linear NCDE with a generic readout.

\begin{definition}[Linear NCDE \citep{cirone2024deepSSM, walker2025structuredlinearcdesmaximally}]
\label{def:lin_NCDE}
Let $L^1_\theta\in\mathbb{R}^{d_h\times d_X}$, $b_\theta\in\mathbb{R}^{d_h}$, and $A_\theta\in\mathbb{R}^{d_h\times d_X\times d_h}$ be trainable parameters, and let
\begin{equation}
R_\theta:\mathbb{R}^{d_h}\to\mathbb{R}^{d_y}
\end{equation}
be a readout map.
Then the corresponding Linear NCDE is defined by
\begin{equation}
\label{eq:lin_ncde}
\begin{aligned}
h_{t_0} &= L^1_\theta X_{t_0} + b_\theta, \\
h_t &= h_{t_0} + \int_{t_0}^t A_\theta h_s\,\mathrm{d}X_s, \\
Y_t^\theta(X) &= R_\theta(h_t).
\end{aligned}
\end{equation}
\end{definition}

When $R_\theta$ is linear, Linear NCDEs are universal for terminal-time path-to-point functions \citep{cirone2024deepSSM}. Allowing $R_\theta$ to be non-linear extends this to causal path-to-path functions.

\begin{definition}[Causal path-to-path functions]
\label{def:causal_path_map}
Let
\begin{equation}
T:\mathcal{X}(d_X)\to C([t_0,t_n],\mathbb{R}^{d_y}).
\end{equation}
We say that $T$ is causal if, for every $t\in[t_0,t_n]$ and every $X,\widetilde X\in\mathcal{X}(d_X)$,
\begin{equation}
X|_{[t_0,t]}=\widetilde X|_{[t_0,t]}
\quad\Longrightarrow\quad
T(X)_t=T(\widetilde X)_t.
\end{equation}
\end{definition}

The following theorem is obtained by combining the scalar-valued path-to-path argument of \citet[Proposition~5.1 and Proposition~D.2]{cirone2024deepSSM} with the homogeneous Linear NCDE maximal-expressivity framework of \citet{walker2025structuredlinearcdesmaximally}, after the obvious affine rescaling of time from $[t_0,t_n]$ to $[0,1]$, and then applying the resulting scalar statement coordinate-wise, with scalar tolerance $\varepsilon/\sqrt{d_y}$. Although some of the intermediate Linear CDE statements in \citet{cirone2024deepSSM} allow an additive controlled term, this does not change the present homogeneous formulation, as any affine linear controlled system can be written as a homogeneous linear controlled system after augmenting the hidden state by a coordinate initialised to one and assigning that coordinate zero dynamics. Since paths in $\mathcal{X}(d_X)$ are time-augmented, time is included as a channel of $X$. 

\begin{theorem}[Universality for causal path-to-path functions]
\label{thm:lin_ncde_path_to_path}
Let
\begin{equation}
T:\mathcal{X}(d_X)\to C([t_0,t_n],\mathbb{R}^{d_y})
\end{equation}
be continuous and causal, and let $\mathcal{K}\subset\mathcal{X}(d_X)$ be compact.
Consider Linear NCDEs as in Definition~\ref{def:lin_NCDE}, where $R_\theta$ is a feed-forward neural network.
Then for every $\varepsilon>0$ there exist a hidden dimension $d_h\in\mathbb{N}$ and model parameters such that
\begin{equation}
\label{eq:path_to_path_uniform_approx}
\sup_{X\in\mathcal{K}}\rho_\infty\bigl(Y^\theta(X),T(X)\bigr)\le\varepsilon.
\end{equation}
\end{theorem}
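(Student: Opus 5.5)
The plan is to reduce the path-to-path statement to finitely many path-to-point approximation problems, and then glue them together with a feed-forward readout. The three ingredients are causality, compactness of $\mathcal{K}$, and the path-to-point universality of Linear NCDE features from \citet{cirone2024deepSSM} and \citet{walker2025structuredlinearcdesmaximally}. After an affine rescaling of time to $[0,1]$ and working coordinate-wise with tolerance $\varepsilon/\sqrt{d_y}$, it suffices to treat scalar $T$.

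\textbf{Step 1: uniform equicontinuity in time.} Since $T$ is continuous and $\mathcal{K}$ is compact, $T(\mathcal{K})$ is compact in $C([t_0,t_n],\mathbb{R})$. By Arzel\`a--Ascoli, the family $\{T(X):X\in\mathcal{K}\}$ is then uniformly equicontinuous. Choose $\delta>0$ so that $|T(X)_t-T(X)_s|\le\varepsilon/4$ whenever $|t-s|\le\delta$ and $X\in\mathcal{K}$.

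\textbf{Step 2: a single time-dependent functional.} Use causality to write $T(X)_t=\Psi(X|_{[t_0,t]})$. Equivalently, let $\tau_t X$ denote the path stopped at time $t$, so that $T(X)_t=T(\tau_tX)_t$. The map $(t,X)\mapsto T(\tau_t X)_t$ is then a continuous functional on the compact set $\{\tau_tX : t\in[t_0,t_n],\ X\in\mathcal{K}\}$. Its continuity follows because the time channel is strictly increasing, so $t$ is recovered from the stopped path, and because stopping is continuous in $1$-variation for absolutely continuous paths.

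\textbf{Step 3: signature-type features and the readout.} By the Linear NCDE results in \citet{cirone2024deepSSM}, the hidden state $h_t$ can be made to contain, to arbitrary accuracy, a finite truncated signature of $X|_{[t_0,t]}$; this uses a block-triangular $A_\theta$ realising the signature ODE. The time channel supplies $t$ inside these features. By Stone--Weierstrass, linear functionals of the signature of the stopped path are dense in continuous functions on the compact set of stopped paths from Step~2. Hence $T(X)_t$ is approximated uniformly in $(t,X)$ by $\ell(\mathrm{Sig}^{\le N}(X|_{[t_0,t]}))$ for some linear $\ell$. Since polynomials are approximated by neural networks on compacts, a feed-forward readout $R_\theta$ can be used in place of $\ell$, which gives extra slack. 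Uniform continuity of $R_\theta$ on the bounded range of $h$ then transfers the feature error to the output. This gives $\sup_{X\in\mathcal{K}}\rho_\infty(Y^\theta(X),T(X))\le\varepsilon$. The final step is to convert the construction into the homogeneous form of Definition~\ref{def:lin_NCDE} by adjoining a constant coordinate.

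\textbf{Main obstacle.} The hardest part is Step~3: the density must be uniform jointly over $t$ and $X$, not just at the terminal time. I would handle it by invoking \citet[Proposition~D.2]{cirone2024deepSSM} for the stopped-path compact set. If that does not apply directly, the fallback is to use Step~1 to discretise time into finitely many points spaced at most $\delta$ apart, approximate each terminal-time functional, and interpolate in $t$ using the time channel inside the network readout.
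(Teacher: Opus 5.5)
Your proposal is correct and follows essentially the same route as the paper. The paper proves the theorem by rescaling time to $[0,1]$, treating output coordinates separately with tolerance $\varepsilon/\sqrt{d_y}$, homogenising affine dynamics with a constant hidden coordinate, and citing \citet[Proposition~D.2]{cirone2024deepSSM} for the reduction of causal path-to-path approximation to path-to-point approximation on stopped prefixes. Your Steps~2--3 make that cited step explicit: exactly realised truncated signatures plus Stone--Weierstrass on the compact set of stopped paths, with the functional defined as $\Psi(\tau_tX):=T(X)_t$ via causality, since $\tau_tX$ need not lie in the domain of $T$. This already gives uniformity jointly in $(t,X)$, so Step~1 and the fallback are unnecessary.
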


Since $C([t_0,t_n],\mathbb{R}^{d_y})$ endowed with $\rho_\infty$ is separable, Theorem~\ref{thm:lin_ncde_path_to_path} and Lemma~\ref{lem:winfty_pointwise_bound} imply the corresponding causal distributional statement. Indeed, if $\mu$ is a Borel probability measure on $\mathcal{X}(d_X)$ satisfying $\mu(\mathcal{K})=1$ for some compact set $\mathcal{K}\subset\mathcal{X}(d_X)$, and if
\begin{equation}
T:\mathcal{X}(d_X)\to C([t_0,t_n],\mathbb{R}^{d_y})
\end{equation}
is continuous and causal, then for every $\varepsilon>0$ there exists a Linear NCDE with a feed-forward readout such that
\begin{equation}
W_\infty\bigl((Y^\theta)_\#\mu,T_\#\mu\bigr)\le\varepsilon,
\end{equation}
where $W_\infty$ is computed on $C([t_0,t_n],\mathbb{R}^{d_y})$ with respect to the metric $\rho_\infty$.

In contrast, the analogous causal path-to-path universality statement is not currently available for standard SSM architectures such as S4 and Mamba. Existing negative results show that single-layer S4 and Mamba with linear readouts are not universal for terminal-time path-to-point functions, which is one of the key ingredients used to obtain the causal path-to-path universality result for Linear NCDEs \citep{cirone2024deepSSM}. Moreover, while universality results do exist for discrete sequence-to-sequence SSMs with non-linear projections or layer-wise nonlinearities \citep{orvieto2024universality, wang2023state}, these results do not give a continuous causal path-to-path approximation theorem with respect to the supremum metric. Thus they do not directly yield the $W_\infty$ pushforward statement used above. To the best of our knowledge, no directly analogous path-to-path universality theorem is known for stacked S4 or Mamba-style models in the continuous path-space setting considered here.

In practice, the model used in this work is conditional rather than unconditional. Given an observed input time series, we first form a conditional Gaussian process and sample a latent trajectory from its conditional law. This latent trajectory is then sampled on a finite time grid and converted into a path in $\mathcal{X}(d_X)$ by a deterministic interpolation and augmentation map
\begin{equation}
\mathcal{I}:C([t_0,t_n],\mathbb{R}^{d_z})\to \mathcal{X}(d_X),
\end{equation}
where $d_z$ is the latent Gaussian process dimension and $d_X$ includes the deterministic augmentation channels. We assume that $\mathcal{I}$ is continuous from the supremum topology on $C([t_0,t_n],\mathbb{R}^{d_z})$ to the $1$-variation topology on $\mathcal{X}(d_X)$, as is the case for fixed-grid piecewise linear interpolation together with fixed deterministic augmentation channels. The map $\mathcal{I}$ includes the time channel and any additional deterministic preprocessing needed to ensure that its image lies in $\mathcal{X}(d_X)$, for example a constant channel when the homogeneous initialisation requires one. In particular, the common-initial-point convention is imposed on the interpolated and augmented paths actually fed to the model, not necessarily on the raw Gaussian process sample paths.

This is important, as sample paths from a latent Gaussian process with an Ornstein--Uhlenbeck kernel are almost surely continuous but not absolutely continuous \citep{doob1942brownian, cheridito2003fractional}. Thus the raw latent law is not supported on the absolutely continuous path space processed by the NCDE. However, the interpolated and augmented latent law $\mathcal{I}_\#\mu$ is supported on $\mathcal{X}(d_X)$, and this is the object that is actually processed by the model. The interpolated path is then passed through a stacked Linear NCDE. This stacked architecture should be viewed as the practical analogue of the approximation mechanism described above. The first Linear NCDE block produces a hidden path, while the later blocks provide additional non-linear transformations of this hidden path. When the architecture includes sufficiently expressive pointwise readouts, this falls directly under the preceding approximation argument. Conditional on a fixed observed time series, the model defines a distribution on output paths by pushing the conditional interpolated latent law forward through this deterministic stacked Linear NCDE map. From this perspective, the conditional Gaussian process provides the source of pathwise randomness, while the stacked Linear NCDE provides the expressive mechanism that reshapes this randomness into the target output process.

A second important point to note is that the compact-support assumption in the exact $W_\infty$ statement above does not hold, since $\mathcal{I}_\#\mu$ is typically not compactly supported. 
However, this is mainly a technical distinction rather than a practical obstruction. 
The original conditional Gaussian process law is a Borel probability measure on $C([t_0,t_n],\mathbb{R}^{d_z})$. Equipped with the supremum norm, this space is Polish, as it is complete by \citet[Proposition~4.13]{folland1999realanalysis}, and separable by the Stone--Weierstrass theorem \citep[Theorem~4.45]{folland1999realanalysis}. Hence the law is tight \citep[Theorem~1.3]{billingsley1999convergence}.
Consequently, for every $\delta>0$ there exists a compact set $\mathcal{K}$ of latent paths with $\mu(\mathcal{K})\geq 1-\delta$. Since $\mathcal{I}$ is continuous, $\mathcal{I}(\mathcal{K})$ is compact in $\mathcal{X}(d_X)$ and
\begin{equation}
(\mathcal{I}_\#\mu)\bigl(\mathcal{I}(\mathcal{K})\bigr)\geq 1-\delta.
\end{equation}
Applying the same approximation argument on $\mathcal{I}(\mathcal{K})$ then shows that, for every $\varepsilon,\delta>0$, and for every continuous causal target map
\begin{equation}
T:\mathcal{X}(d_X)\to C([t_0,t_n],\mathbb{R}^{d_y}),
\end{equation}
one can choose parameters $\theta$ such that
\begin{equation}
(\mathcal{I}_\#\mu)\Bigl(\bigl\{Z\in\mathcal{X}(d_X):\rho_\infty\bigl(F_\theta(Z),T(Z)\bigr)>\varepsilon\bigr\}\Bigr)\leq\delta,
\end{equation}
where $F_\theta$ denotes the path-to-path map induced by the stacked Linear NCDE. In other words, although one does not obtain a global $W_\infty$ statement for a non-compact latent law, the model can still approximate the target transformation arbitrarily well on an arbitrarily high-probability region of latent space. This is the regime that matters in practice, since training and evaluation only ever involve finitely many sampled and discretised trajectories, while the non-compact Gaussian tails correspond to rare excursions rather than a fundamental modelling obstruction.

\section{Proof that SLiCEs are path-to-path universal}
\label{app:proof_theorem_slice_path_to_path_universal}

\begin{proof}
\citet[Proposition~D.2]{cirone2024deepSSM} prove the corresponding
path-to-path result for dense Linear NCDEs by reducing causal path-to-path
approximation to path-to-point approximation on stopped prefixes. The proof
uses dense Linear NCDEs only through their terminal path-to-point approximation
property. Replacing them by a SLiCE class with the same property gives the same
construction.
\end{proof}

\section{Proof of expressivity gap example}\label{app:hardcore_gap}

We introduce the setting of the example in Section~\ref{subsec:example_expressivity_gap} in slightly more detail: Denote by $\mathcal{H}$ the set of binary sequences of length $n$ with no consecutive ones:

\begin{equation}
    \mathcal H_n
    :=
    \bigl\{
        c=(c_1,\dots,c_n)\in\{0,1\}^n
        :
        c_i c_{i+1}=0
        \text{ for all }1\le i\le n-1
    \bigr\}.
\end{equation}
Then we sample $n$ independently and identically distributed binary Bernoulli Random variables $Z_i$
\[
Z_1,\dots,Z_n \stackrel{\mathrm{i.i.d.}}{\sim} \mathrm{Bernoulli}(p),
\qquad p\in[0,1],
\]
and define the target map $C=(C_1,\dots,C_n)$ as the recursion
\begin{equation}\label{eq:target-map}
    C_1 = Z_1,
    \qquad
    C_k = Z_k(1-C_{k-1})
    \quad (2\le k\le n).
\end{equation}
By construction, $C\in\mathcal H_n$ for every realisation and we write
\[
\mu_{n,p}
:=
C_\sharp\bigl(\mathrm{Bernoulli}(p)^{\otimes n}\bigr)
\]
for the induced law on $\mathcal H_n$.

We now prove Proposition~\ref{prop:hard_core_gap_distribution}.
Throughout this proof, sequence space is equipped with
\[
    d_\infty(c,\tilde c)=\max_{1\le k\le n}|c_k-\tilde c_k|.
\]
For the dense selective upper bound we prove a uniform pathwise error bound over
all \(Z\in\{0,1\}^n\), which implies the stated \(W_\infty\) bound by coupling
each generated sequence with the target sequence obtained from the same input
\(Z\).

\paragraph{Dense selective.} First, we prove the dense selective upper bound.
Let
\[
J=
\begin{pmatrix}
0 & -1\\
1 & 0
\end{pmatrix},
\qquad
e_1=
\begin{pmatrix}
1\\
0
\end{pmatrix}.
\]
Choose \(0<\eta<\min\{\varepsilon,1/2\}\), and define affine maps by
\[
A(z)=(1-z)(\log\eta)I_2+z\pi J,
\qquad
\beta(z)=ze_1 .
\]
Set \(h_0=0\), \(w=e_1\), and \(b=0\).
Then
\[
\exp(A(0))=\eta I_2,
\qquad
\exp(A(1))=-I_2 .
\]
Writing \(x_k=e_1^\top h_k\), the output satisfies
\[
x_k=
\begin{cases}
\eta x_{k-1}, & Z_k=0,\\
1-x_{k-1}, & Z_k=1 .
\end{cases}
\]
The target satisfies the same recursion on one-input steps, while on zero-input steps it satisfies \(C_k=0\).
Since \(x_0=0\) and \(0<\eta<1\), induction gives \(x_k\in[0,1]\) for every \(k\).

Let \(E_k=x_k-C_k\).
If \(Z_k=0\), then \(C_k=0\) and
\[
|E_k|=|x_k|=\eta x_{k-1}\leq\eta .
\]
If \(Z_k=1\), then
\[
E_k=(1-x_{k-1})-(1-C_{k-1})=-E_{k-1}.
\]
Since \(E_1=0\), it follows that \(|E_k|\leq\eta\) for every \(k\) and every binary input \(Z\).
Therefore
\[
\max_{Z\in\{0,1\}^n}
\max_{1\leq k\leq n}
|\widehat C_k-C_k|
\leq\eta
<\varepsilon .
\]
Because \(\eta<1/2\), thresholding \(\widehat C_k=x_k\) at \(1/2\) recovers \(C_k\) exactly.

\paragraph{Dense non-selective.}{}
Consider a dense non-selective exact-flow SSM. It has the form
\[
    h_k=Mh_{k-1}+\beta(Z_k),
    \qquad
    \widehat C_k=w^\top h_k+b,
\]
with fixed \(M\), affine \(\beta\), and \(h_0\) fixed independently of the input.
Unrolling gives
\[
    h_k=M^kh_0+\sum_{j=1}^k M^{k-j}\beta(Z_j).
\]
Hence \(\widehat C_1\) is affine in \(Z_1\), and \(\widehat C_2\) is affine in
\((Z_1,Z_2)\). Writing
\[
    Y_{z_1z_2}
    =
    \bigl(\widehat C_1(z_1,z_2),\widehat C_2(z_1,z_2)\bigr),
    \qquad
    (z_1,z_2)\in\{0,1\}^2,
\]
we therefore have the parallelogram identity
\[
    Y_{00}+Y_{11}-Y_{10}-Y_{01}=0.
\]

It remains to turn this pointwise affine obstruction into a distributional
\(W_\infty\) obstruction. The projection onto the first two coordinates is
\(1\)-Lipschitz, so a \(W_\infty\) approximation of the full length-\(n\) law
with error \(r\) would imply a \(W_\infty\) approximation of the projected
two-step laws with error at most \(r\).

For \(p=1/2\), the target two-step law is the uniform law on the labelled
multiset
\[
    T_{00}=(0,0),\qquad
    T_{01}=(0,1),\qquad
    T_{10}=(1,0),\qquad
    T_{11}=(1,0).
\]
Suppose, for contradiction, that the projected generated law is within
\(W_\infty\) distance \(r<1/4\) of this target law. The open
\(d_\infty\)-balls of radius \(r\) around the three distinct target atoms
\((0,0)\), \((0,1)\), and \((1,0)\) are disjoint. Hence the four generated atoms
can be assigned, counting multiplicity, to the target multiset above so that
\[
    d_\infty(Y_{z_1z_2},\widetilde T_{z_1z_2})<r
\]
for some relabelling \(\{\widetilde T_{00},\widetilde T_{01},
\widetilde T_{10},\widetilde T_{11}\}\) of the multiset
\(\{(0,0),(0,1),(1,0),(1,0)\}\).

For any such relabelling,
\[
    \left\|
        \widetilde T_{00}+\widetilde T_{11}
        -\widetilde T_{10}-\widetilde T_{01}
    \right\|_\infty
    \ge 1.
\]
Indeed, the two plus-labelled targets and the two minus-labelled targets form a
partition of the multiset \(\{(0,0),(0,1),(1,0),(1,0)\}\), and their sums cannot
agree. Using the parallelogram identity for the generated atoms,
\[
\begin{aligned}
1
&\le
\left\|
        \widetilde T_{00}+\widetilde T_{11}
        -\widetilde T_{10}-\widetilde T_{01}
\right\|_\infty  \\
&=
\left\|
        (\widetilde T_{00}-Y_{00})
        +(\widetilde T_{11}-Y_{11})
        -(\widetilde T_{10}-Y_{10})
        -(\widetilde T_{01}-Y_{01})
\right\|_\infty  \\
&<
4r,
\end{aligned}
\]
which contradicts \(r<1/4\). Therefore
\[
    W_\infty(\widehat\mu_{n,1/2},\mu_{n,1/2})\ge \frac14.
\]

\paragraph{Diagonal selective.} 
Finally consider a width \(d\) diagonal selective exact-flow SSM.
It can be written as
\[
h_k=D(Z_k)h_{k-1}+\beta(Z_k),
\qquad
D(z)=\operatorname{diag}(\lambda_1(z),\ldots,\lambda_d(z)),
\qquad
\lambda_i(z)>0 .
\]
For \(p=1\), the input is the all-ones sequence almost surely, so both the
target law and the generated law are Dirac measures. Thus a \(W_\infty\) error
strictly smaller than \(1/2\) is equivalent to pointwise approximation of the
all-ones target sequence with error strictly smaller than \(1/2\).
On the all-ones input, this becomes
\[
h_k=D(1)h_{k-1}+\beta(1).
\]
For a coordinate with diagonal value \(\lambda\neq1\), the scalar recursion contributes a constant plus a multiple of \(\lambda^k\).
For a coordinate with \(\lambda=1\), it contributes a constant plus a multiple of \(k\).
Therefore every linear readout on the all-ones input has the form
\[
\widehat C_k
=
a_0+a_1k+\sum_{\lambda\in\Lambda}a_\lambda\lambda^k,
\qquad
\lambda>0,
\qquad
\lambda\neq1,
\qquad
|\Lambda|+\mathbf 1_{\{a_1\neq0\}}\leq d ,
\]
where \(\Lambda\) is the set of distinct non-unit diagonal values that contribute to the readout.

Subtracting \(1/2\) preserves this form.
Writing \(\alpha=\log\lambda\), we have
\[
\widehat C_k-\frac12
=
a_0+a_1k+\sum_{\alpha\in\Gamma}a_\alpha e^{\alpha k},
\qquad
\alpha\in\mathbb R\setminus\{0\},
\qquad
|\Gamma|+\mathbf 1_{\{a_1\neq0\}}\leq d .
\]
Let
\[
f(t)=a_0+a_1t+\sum_{\alpha\in\Gamma}a_\alpha e^{\alpha t}.
\]
This belongs to the real exponential-polynomial space generated by \(1,t\), and \(e^{\alpha t}\) for \(\alpha\in\Gamma\).
This space is an extended complete Chebyshev system, and by the standard zero-counting property of such systems, if \(f\) is not identically zero, then \(f\) has at most
\[
|\Gamma|+\mathbf 1_{\{a_1\neq0\}}
\]
zeros on any interval \citep{aldaz2009bernstein}.
Since \(\widehat C_k-1/2=f(k)\), each sign change of the nonzero sampled sequence \(\widehat C_1-1/2,\ldots,\widehat C_n-1/2\) gives a zero of \(f\) in the corresponding interval \((k,k+1)\).
Hence \(\widehat C_k-1/2\) has at most \(d\) sign changes.

On the all-ones input, the target is
\[
C_k=
\begin{cases}
1, & k \text{ odd},\\
0, & k \text{ even}.
\end{cases}
\]
Thus \(C_k-1/2\) changes sign at every step.
If the diagonal selective model approximated the target with error strictly less than \(1/2\), then \(\widehat C_k-1/2\) would be nonzero and would have the same sign as \(C_k-1/2\) for every \(k\).
It would therefore have \(n-1\) sign changes.
This contradicts the bound of at most \(d\) sign changes whenever \(n-1>d\).
Hence no width \(d\) diagonal selective exact-flow SSM can approximate the target uniformly with error strictly less than \(1/2\) when \(n\geq d+2\).

This proves
\[
    W_\infty(\widehat\mu_{n,1},\mu_{n,1})\ge \frac12
\]
for width \(d\) real diagonal selective exact-flow SSMs of the stated form
whenever \(n\ge d+2\).

\section{G-SLiCE as an augmented path-space flow}
\label{app:g-slice-as-path-flow}

Let  $G_\theta:\mathcal{X}(d_X)\to\mathcal{Y}(d_y)$ be a generative SLiCE. Introduce an augmented flow-time state
\begin{equation}
    Z^{(s)}=(U^{(s)},Y^{(s)}),
    \qquad
    U^{(0)}=X,
    \qquad
    Y^{(0)}=0,
\end{equation}
and define
\begin{equation}
    \widetilde F_\theta(s,U,Y)
    =
    \begin{pmatrix}
        0\\
        G_\theta(U)
    \end{pmatrix}.
\end{equation}
Then the augmented path-space flow satisfies
\begin{equation}
    \frac{d}{ds}
    \begin{pmatrix}
        U^{(s)}\\
        Y^{(s)}
    \end{pmatrix}
    =
    \begin{pmatrix}
        0\\
        G_\theta(U^{(s)})
    \end{pmatrix}.
\end{equation}
Hence \(U^{(s)}=X\) for all \(s\in[0,1]\), and
\begin{equation}
    Y^{(1)} = G_\theta(X).
\end{equation}
Thus every direct G-SLiCE generator is the terminal projection of an augmented
path-space flow. Under the mild closure assumption that the chosen SLiCE family
can ignore auxiliary channels and append zero-output channels, the augmented
flow formulation inherits Corollary~\ref{cor:slice_universal_generator}.

\section{Additional results}

\subsection{Ablation on importance of dense blocks}

SLiCEs are more expressive than TSFlow's S4 backbone in two key ways. First, S4 is a non-selective state space model (SSM), meaning that its transition matrices are not state-dependent. Second, S4 only uses diagonal transition matrices with an additive rank-$1$ correction. To study the impact of these design choices on time series forecasting, we repeat the probabilistic forecasting experiments from Section~\ref{sec:experiments_probabilistic_forecasting} for two hidden dimensions $d \in \{16, 128\}$ and compare a purely diagonal parameterisation ($b=1$) against a dense block of size $b=16$. All other hyperparameters are kept fixed as described in Appendix~\ref{app:experimental_details}. Results are reported in Table~\ref{tab:slice-ablation-diagonal-vs-dense}.

Across all configurations, the dense variant outperforms the diagonal one in $9$ out of $16$ cases. Moreover, for $5$ out of $8$ datasets, the best-performing configuration uses a dense block (cf.~Table~\ref{tab:slice-best-hparams}), indicating that increased expressivity is often beneficial.

To quantify this effect more precisely, we compare the relative improvement of SLiCE over TSFlow-Cond.\ (OU) based on the results in Table~\ref{tab:conditional-generation-results}, conditioning on whether the dense block is selected. For dataset $i$, we define the relative improvement in CRPS as
\[
r_i = \frac{\text{TSFlow}_i - \text{SLiCE}_i}{|\text{TSFlow}_i|}.
\]

We find that, on datasets where SLiCE uses only diagonal blocks, the average relative improvement is $1.75\%$, whereas it increases to $6.43\%$ on datasets where the dense block is selected. This corresponds to an absolute difference of $4.68$ percentage points. 

Treating dense block usage as a binary variable, the Pearson correlation between this indicator and the relative improvement is $\rho = 0.21$, indicating a positive association between selecting the dense block and achieving larger gains. The magnitude of this effect should be interpreted with caution given the small number of datasets.

\begin{table}[t]
  \centering
  \small
  \caption{Ablation on the diagonal vs (diagonal-)dense versions.}
  \label{tab:slice-ablation-diagonal-vs-dense}
  \resizebox{\textwidth}{!}{%
      \begin{tabular}{lcccccccc}
      \toprule
      Variant & Electricity & Exchange & KDDCup & M4 Hourly & Solar & Traffic & Uber & Wiki2000 \\
      \midrule
      $d=16$, diagonal $b=1$ & $\mathbf{0.0446}$ & $0.0082$ & $0.2871$ & ${0.0236}$ & $\mathbf{0.3260}$ & $0.0830$ & $\mathbf{0.1537}$ & $0.2669$ \\
        $d=16$, dense $b=16$ & $0.0453$ & $\mathbf{0.0079}$ & $\mathbf{0.2866}$ & $\mathbf{0.0225}$ & $0.3476$ & $\mathbf{0.0824}$ & $0.1558$ & $\mathbf{0.2331}$ \\
        \midrule
        $d=128$, diagonal $b=1$ & $\mathbf{0.0435}$ & $0.0078$ & $0.2652$ & $\mathbf{0.0298}$ & $\mathbf{0.3585}$ & $\mathbf{0.0799}$ & $0.1504$ & $0.2293$ \\
        $d=128$, dense $b=16$ & $0.0442$ & $\mathbf{0.0073}$ & $\mathbf{0.2593}$ & $0.0319$ & $0.3855$ & $0.0804$ & $\mathbf{0.1482}$ & $\mathbf{0.2171}$ \\
      \bottomrule
      \end{tabular}%
  }
\end{table}


\subsection{Additional results for irregular sampling generalisation}\label{app:additional_results_irregular_sampling}

Table~\ref{tab:ett15m_cross_irregular_crps_allk} shows that G-SLiCE remains stable across changes in the sampling irregularity at test time, while TSFlow exhibits substantially larger variance and several degradation regimes, particularly for small \(k_{\mathrm{train}}\) and under regular-grid evaluation; representative forecasts for \(k_{\mathrm{train}}=1\) are shown in Figure~\ref{fig:example_forecasts_irr_sampling_all_k_1}.

\begin{table}[t!]
    \centering
    \caption{Cross-irregular generalisation on the ETT 15-minute dataset. Columns indicate the training
irregularity \(k_{\mathrm{train}}\), and rows indicate the evaluation irregularity \(k_{\mathrm{test}}\).
We report CRPS as mean \(\pm\) std. dev. over random seeds. The final row per model reports evaluation
on the regular grid, i.e. \(k_{\mathrm{test}} \to \infty\).}
\label{tab:ett15m_cross_irregular_crps_allk}

    \vspace{0.3em}
    \setlength{\tabcolsep}{3pt}
    \renewcommand{\arraystretch}{0.95}

    \resizebox{0.96\textwidth}{!}{%
    \begin{tabular}{llcccccc}
        \toprule
        & & \multicolumn{6}{c}{\(k_{\mathrm{train}}\)} \\
        \cmidrule(lr){3-8}
        Model & Evaluation & \(1\) & \(3\) & \(10\) & \(25\) & \(50\) & \(100\) \\
        \midrule
        \multirow{7}{*}{G-SLiCE}
        & \(k_{\mathrm{test}} = 1\)
        & \(0.13 \pm 0.01\) & \(0.13 \pm 0.01\) & \(0.16 \pm 0.03\)
        & \(0.14 \pm 0.02\) & \(0.21 \pm 0.15\) & \(0.62 \pm 0.95\) \\
        & \(k_{\mathrm{test}} = 3\)
        & \(0.13 \pm 0.02\) & \(0.12 \pm 0.01\) & \(0.13 \pm 0.02\)
        & \(0.12 \pm 0.01\) & \(0.14 \pm 0.07\) & \(0.28 \pm 0.28\) \\
        & \(k_{\mathrm{test}} = 10\)
        & \(0.15 \pm 0.03\) & \(0.13 \pm 0.01\) & \(0.13 \pm 0.01\)
        & \(0.12 \pm 0.01\) & \(0.13 \pm 0.02\) & \(0.15 \pm 0.01\) \\
        & \(k_{\mathrm{test}} = 25\)
        & \(0.15 \pm 0.03\) & \(0.13 \pm 0.01\) & \(0.13 \pm 0.01\)
        & \(0.12 \pm 0.02\) & \(0.12 \pm 0.01\) & \(0.14 \pm 0.02\) \\
        & \(k_{\mathrm{test}} = 50\)
        & \(0.15 \pm 0.03\) & \(0.13 \pm 0.01\) & \(0.13 \pm 0.01\)
        & \(0.12 \pm 0.02\) & \(0.13 \pm 0.01\) & \(0.13 \pm 0.02\) \\
        & \(k_{\mathrm{test}} = 100\)
        & \(0.15 \pm 0.03\) & \(0.13 \pm 0.01\) & \(0.13 \pm 0.01\)
        & \(0.12 \pm 0.01\) & \(0.13 \pm 0.01\) & \(0.13 \pm 0.02\) \\
        & \(k_{\mathrm{test}} \to \infty\)
        & \(0.15 \pm 0.03\) & \(0.14 \pm 0.01\) & \(0.14 \pm 0.01\)
        & \(0.12 \pm 0.01\) & \(0.13 \pm 0.01\) & \(0.13 \pm 0.02\) \\
        \midrule
        \multirow{7}{*}{TSFlow}
        & \(k_{\mathrm{test}} = 1\)
        & \(0.20 \pm 0.07\) & \(0.66 \pm 1.04\) & \(0.15 \pm 0.05\)
        & \(0.20 \pm 0.10\) & \(0.26 \pm 0.18\) & \(0.19 \pm 0.05\) \\
        & \(k_{\mathrm{test}} = 3\)
        & \(0.17 \pm 0.06\) & \(0.62 \pm 0.93\) & \(0.13 \pm 0.02\)
        & \(0.15 \pm 0.05\) & \(0.22 \pm 0.14\) & \(0.17 \pm 0.04\) \\
        & \(k_{\mathrm{test}} = 10\)
        & \(0.18 \pm 0.06\) & \(0.37 \pm 0.39\) & \(0.13 \pm 0.03\)
        & \(0.17 \pm 0.07\) & \(0.23 \pm 0.16\) & \(0.19 \pm 0.05\) \\
        & \(k_{\mathrm{test}} = 25\)
        & \(0.19 \pm 0.07\) & \(0.34 \pm 0.32\) & \(0.14 \pm 0.03\)
        & \(0.17 \pm 0.07\) & \(0.23 \pm 0.16\) & \(0.19 \pm 0.05\) \\
        & \(k_{\mathrm{test}} = 50\)
        & \(0.19 \pm 0.07\) & \(0.35 \pm 0.35\) & \(0.14 \pm 0.03\)
        & \(0.17 \pm 0.06\) & \(0.25 \pm 0.18\) & \(0.20 \pm 0.06\) \\
        & \(k_{\mathrm{test}} = 100\)
        & \(0.19 \pm 0.07\) & \(0.36 \pm 0.33\) & \(0.15 \pm 0.04\)
        & \(0.18 \pm 0.07\) & \(0.27 \pm 0.21\) & \(0.21 \pm 0.06\) \\
        & \(k_{\mathrm{test}} \to \infty\)
        & \(0.20 \pm 0.07\) & \(0.40 \pm 0.40\) & \(0.17 \pm 0.06\)
        & \(0.19 \pm 0.08\) & \(0.29 \pm 0.22\) & \(0.22 \pm 0.06\) \\
        \bottomrule
    \end{tabular}%
    }
\end{table}

\begin{figure}[t]
    \centering

    \begin{subfigure}{0.48\textwidth}
        \centering
        \includegraphics[width=\linewidth]{Figures/comparison_k1__iid_vs_regular.pdf}
        \caption{$k_\textrm{train}=1$}
        \label{fig:example_forecasts_irr_sampling_all_k_1}
    \end{subfigure}
    \hfill
    \begin{subfigure}{0.48\textwidth}
        \centering
        \includegraphics[width=\linewidth]{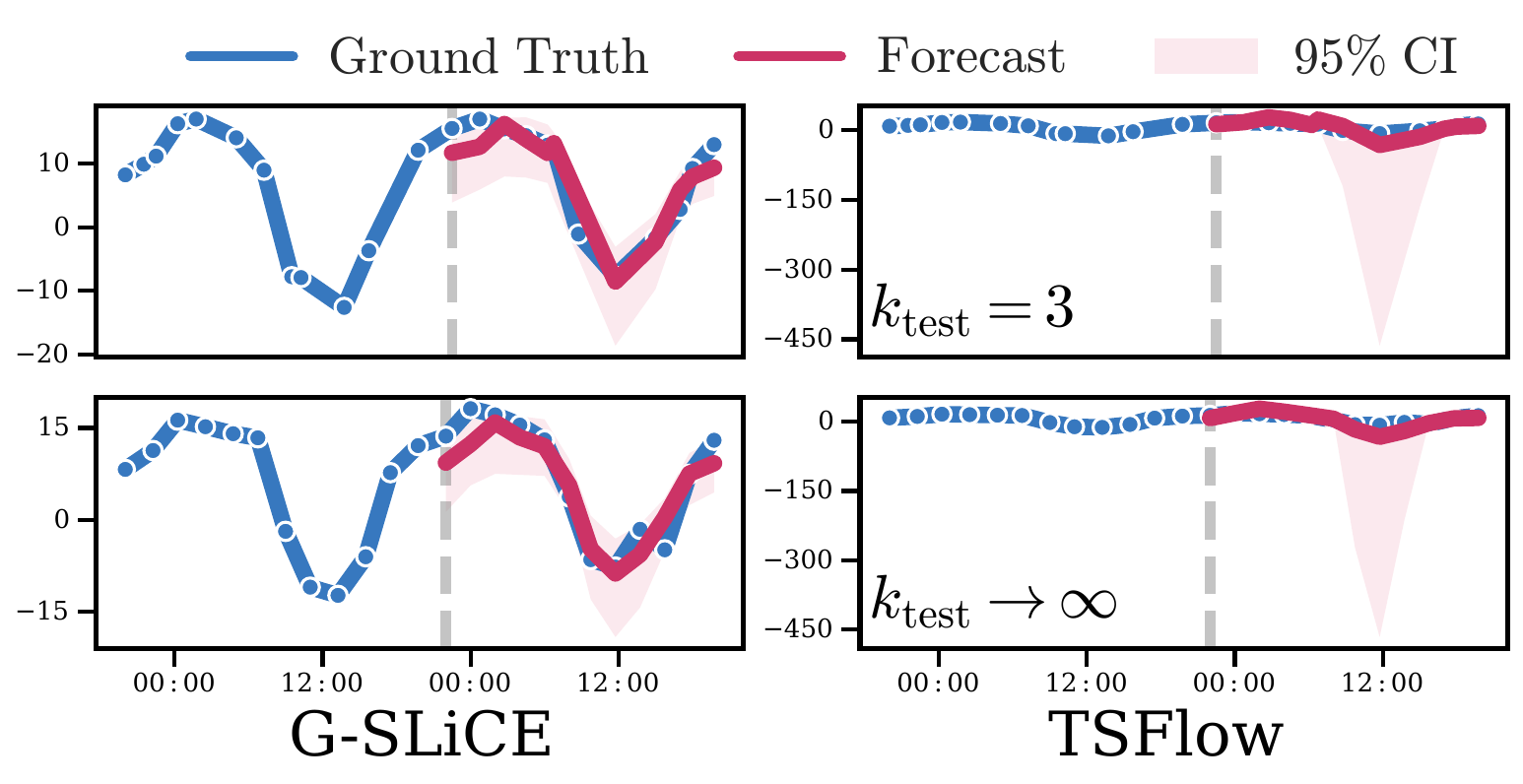}
        \caption{$k_\textrm{train}=3$}
        \label{fig:example_forecasts_irr_sampling_all_k_3}
    \end{subfigure}

    \vspace{0.6em}

    \begin{subfigure}{0.48\textwidth}
        \centering
        \includegraphics[width=\linewidth]{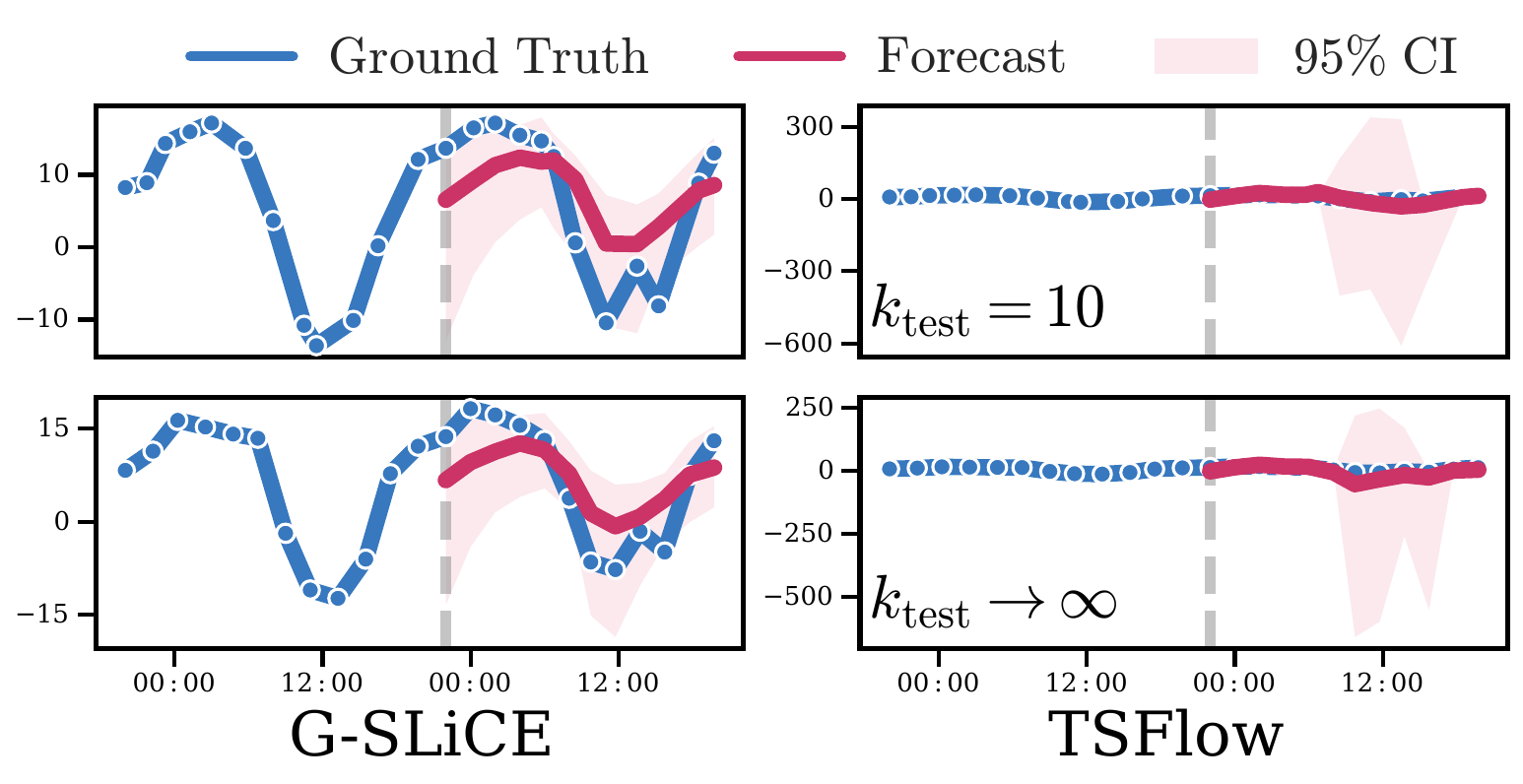}
        \caption{$k_\textrm{train}=10$}
        \label{fig:example_forecasts_irr_sampling_all_k_10}
    \end{subfigure}
    \hfill
    \begin{subfigure}{0.48\textwidth}
        \centering
        \includegraphics[width=\linewidth]{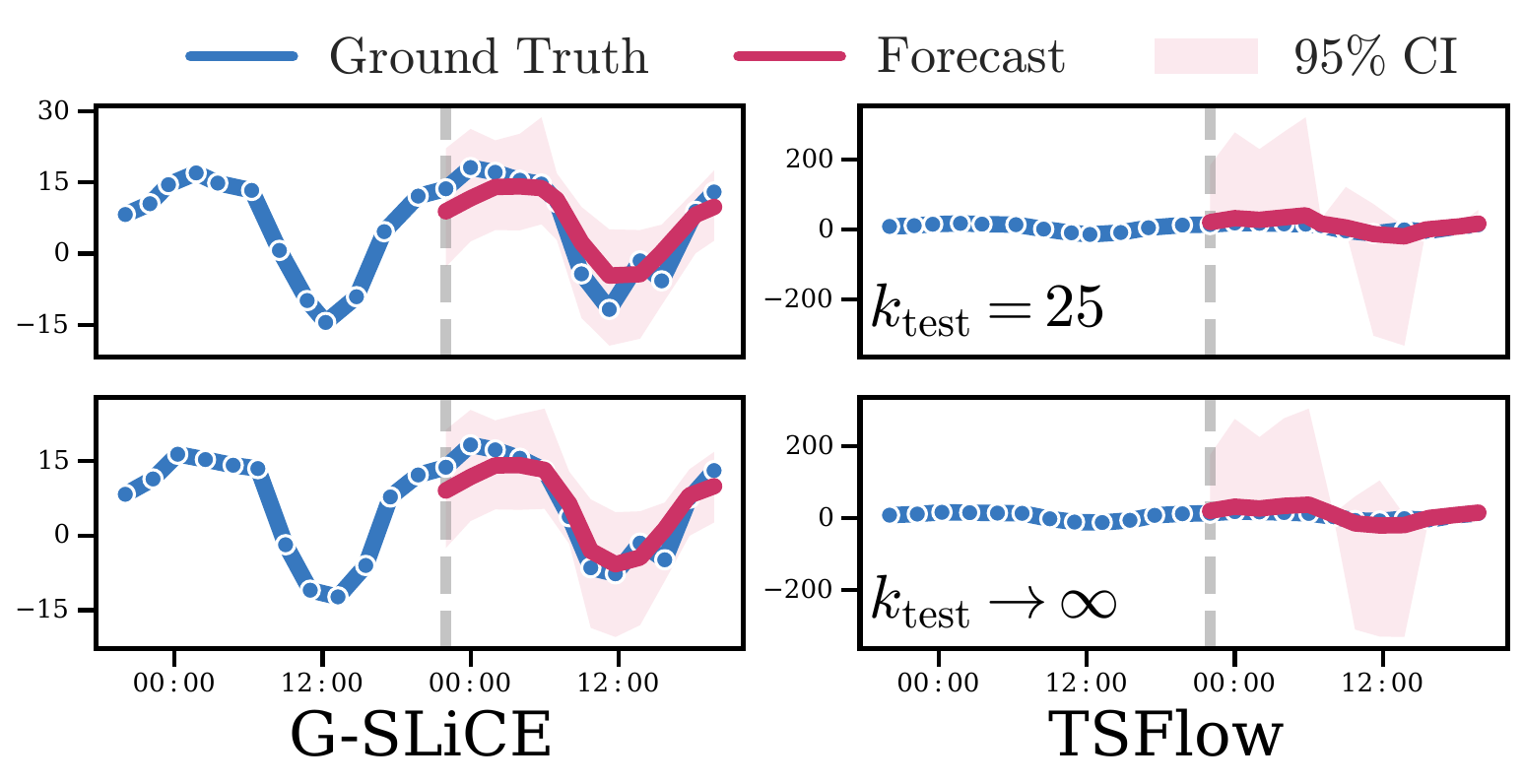}
        \caption{$k_\textrm{train}=25$}
        \label{fig:example_forecasts_irr_sampling_all_k_25}
    \end{subfigure}

    \vspace{0.6em}

    \begin{subfigure}{0.48\textwidth}
        \centering
        \includegraphics[width=\linewidth]{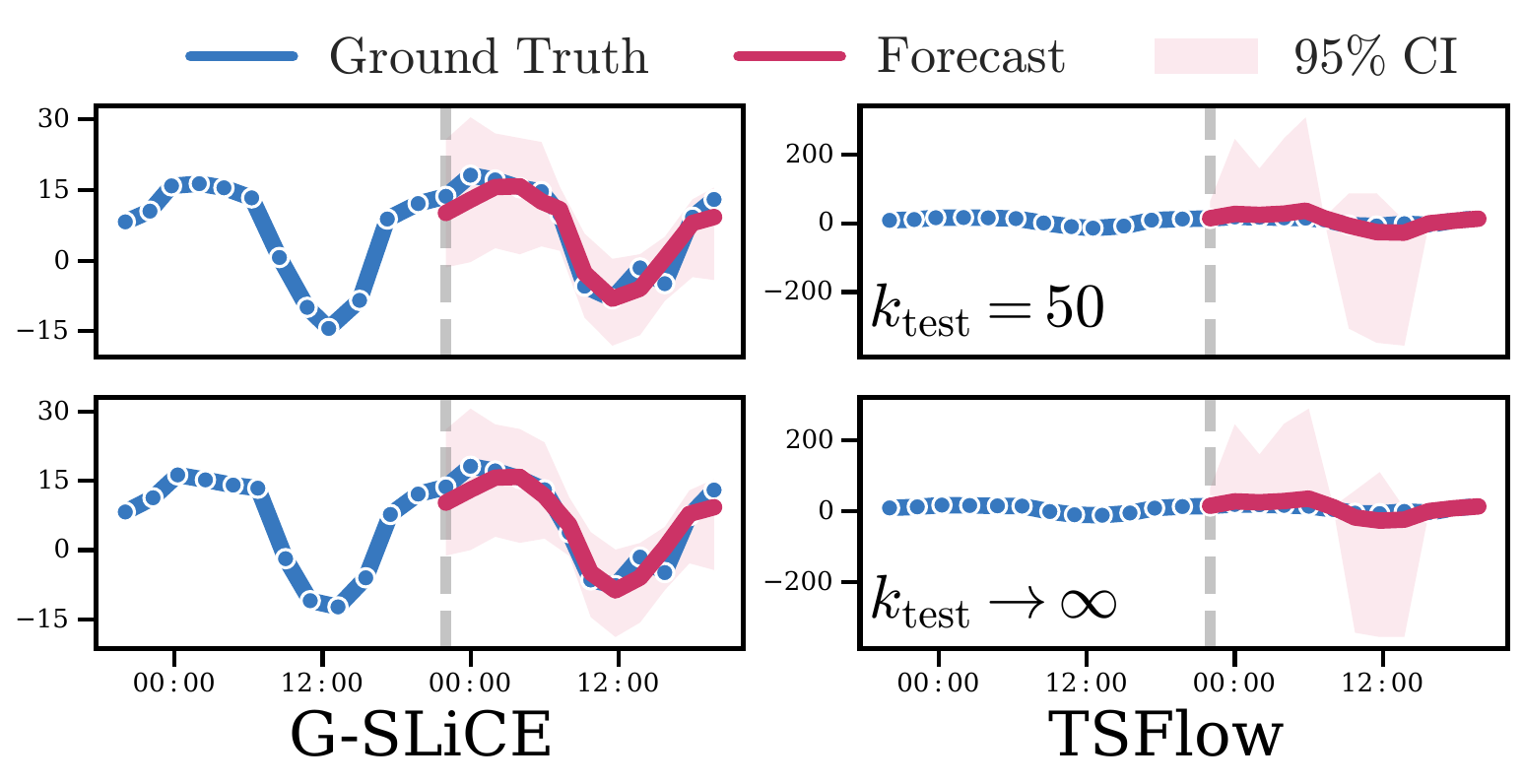}
        \caption{$k_\textrm{train}=50$}
        \label{fig:example_forecasts_irr_sampling_all_k_50}
    \end{subfigure}
    \hfill
    \begin{subfigure}{0.48\textwidth}
        \centering
        \includegraphics[width=\linewidth]{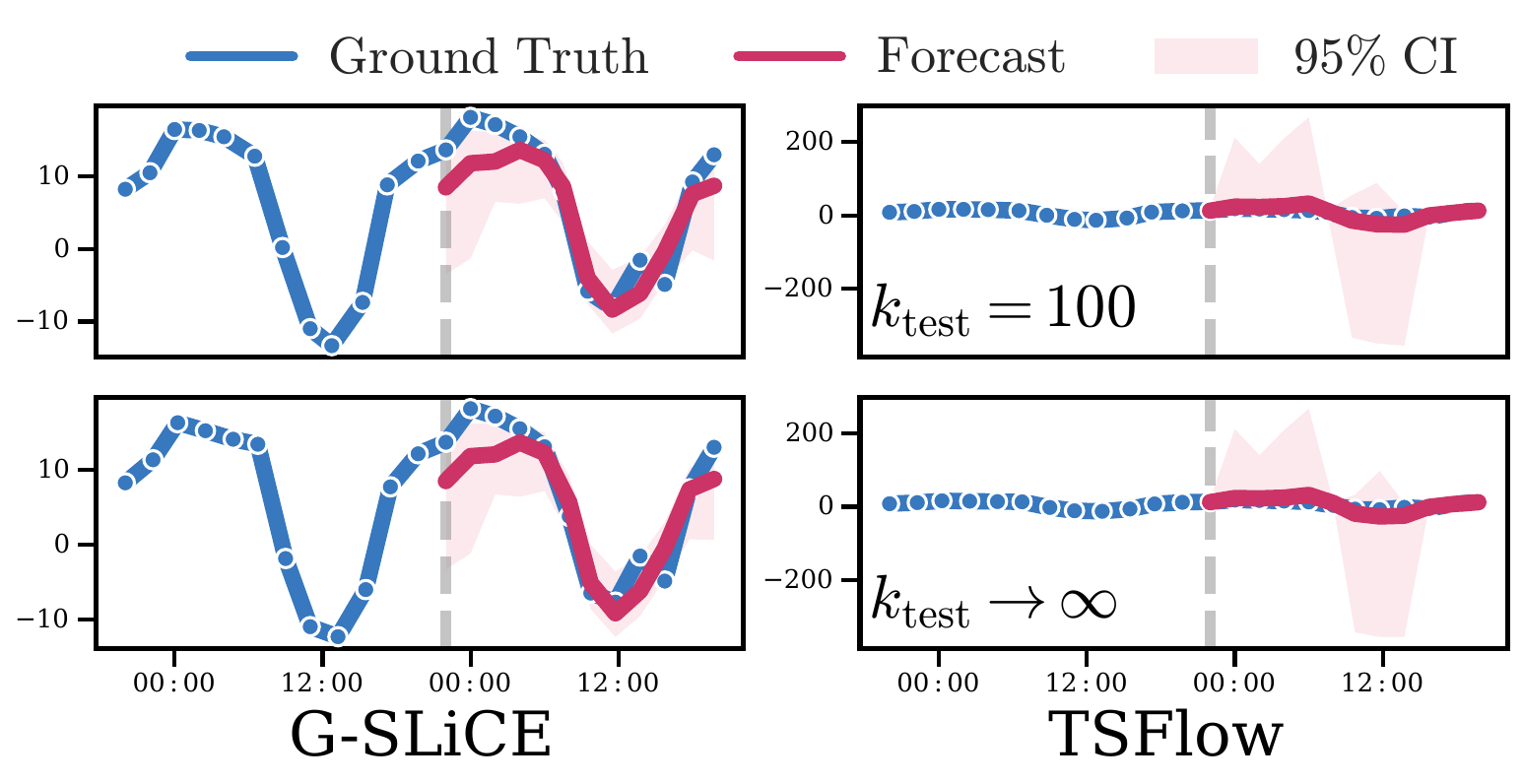}
        \caption{$k_\textrm{train}=100$}
        \label{fig:example_forecasts_irr_sampling_all_k_100}
    \end{subfigure}

    \caption{Example forecasts for G-SLiCEs (left) and TSFlow (right) trained with \(k_{\mathrm{train}} \in \{1,3,10,25,50,100\}\) and evaluated for \(k_{\mathrm{test}}=k_\textrm{train}\) (top) or \(k_{\mathrm{test}}\to\infty\) (bottom). Curves show ground truth and predictive means; shaded regions indicate \(95\%\) confidence intervals.}
    \label{fig:example_forecasts_irr_sampling_all}
\end{figure}

\section{Experimental details}\label{app:experimental_details}

\subsection{Expressivity Gap: Hard-core Example}\label{app:experiments_example_expressivity_gap}

We empirically test the hard-core example from Section~\ref{subsec:example_expressivity_gap}. For each sequence length \(n\in\{8,32,128,512\}\), we sample
\[
    (Z_1,\dots,Z_n)\sim\mathrm{Bernoulli}(p)^{\otimes n}
\]
and train each model to approximate the target map \(C\) defined in Equation~\ref{eq:hardcore_target}. We compare three architectures: G-SLiCE, a diagonal selective SSM, and a dense non-selective SSM. All models are trained with pointwise MSE loss on \(1000\) training samples, with \(200\) validation samples and \(200\) test samples. We train for \(50\) epochs using Adam optimizer, learning rate \(10^{-2}\), batch size \(256\), and no weight decay. Results are averaged over \(3\) random seeds. G-SLiCE uses hidden dimension \(d=2\), while the diagonal selective SSM and dense non-selective SSM use hidden dimension \(d=8\).

Table~\ref{tab:hardcore-results-table} reports two metrics. Exact accuracy is the fraction of test sequences for which the model output matches the full target sequence \(C\). The validity ratio is the fraction of generated sequences lying in \(\mathcal H_n\), i.e. satisfying the no-consecutive-ones constraint. The results agree with the theoretical separation: G-SLiCE learns the hard-core map at all tested sequence lengths, whereas the diagonal selective SSM and dense non-selective SSM fail to recover the exact map beyond the shortest cases. The diagonal selective SSM can still produce valid sequences, but this validity is often degenerate (predicts all-zero sequences) rather than target-tracking. The dense non-selective SSM fits the shortest setting but deteriorates rapidly as \(n\) grows.

For the visual diagnostics, we train the models on sequences of length \(128\). We then evaluate the learned maps on two representative length-\(8\) binary prefixes,
\[
    Z^{(1)}=(1,1,1,1,1,1,1,0),
    \qquad
    Z^{(2)}=(1,1,1,1,0,1,0,1),
\]
and compare the corresponding outputs with the hard-core target. We also estimate the pushforward distributions by drawing \(10^6\) Bernoulli input sequences of length \(128\) and plotting the empirical distributions of their cumulative sums. Figure~\ref{fig:hardcore-results} shows the dense non-selective SSM comparison used in the main text, while Figures~\ref{fig:hardcore-results-trace-both-models} and~\ref{fig:hardcore-results-hist-both-models} provide the corresponding comparisons for both the dense non-selective SSM and the diagonal selective SSM.
\textcolor{blue}{Here, G-SLiCEs uses as hidden dimension of $d=2$, while the diagonal selective and dense non-selective SSM use $d=4$.}

\begin{table}
    \centering
    \caption{
    Validity ratio and exact accuracy on the hard-core target map for the diagonal selective SSM, dense non-selective SSM, and G-SLiCE, evaluated at sequence lengths \(n\in\{8,32,128,512\}\). The validity ratio is the fraction of generated sequences in \(\mathcal H_n\); exact accuracy is the fraction of sequences matching the full target map \(C\).
    }
    \label{tab:hardcore-results-table}
    \begin{tabular}{lcccccccc}
            \toprule
            & \multicolumn{4}{c}{Validity ratio}
            & \multicolumn{4}{c}{Exact accuracy} \\
            \cmidrule(lr){2-5}\cmidrule(lr){6-9}
            Model
            & \(8\) & \(32\) & \(128\) & \(512\)
            & \(8\) & \(32\) & \(128\) & \(512\) \\
            \midrule
            Diag. selective
                & 0.77 & 0.91 & 0.61 & 0.67
                & 0.02 & 0.00 & 0.00 & 0.00 \\
            Dense non-selective 
                & 1.00 & 0.75 & 0.29 & 0.00
                & 1.00 & 0.00 & 0.00 & 0.00 \\
            \midrule
            G-SLiCEs
                & 1.00 & 1.00 & 1.00 & 1.00
                & 1.00 & 1.00 & 1.00 & 1.00 \\
            \bottomrule
        \end{tabular}
\end{table}

\begin{figure}[ht]
    \centering

    \begin{minipage}[c]{0.48\linewidth}
        \centering
        \includegraphics[width=\linewidth]{Figures/slices_expressivity_trace_dense_ssm.pdf}
    \end{minipage}
    \hfill
    \begin{minipage}[c]{0.48\linewidth}
        \centering
        \includegraphics[width=\linewidth]{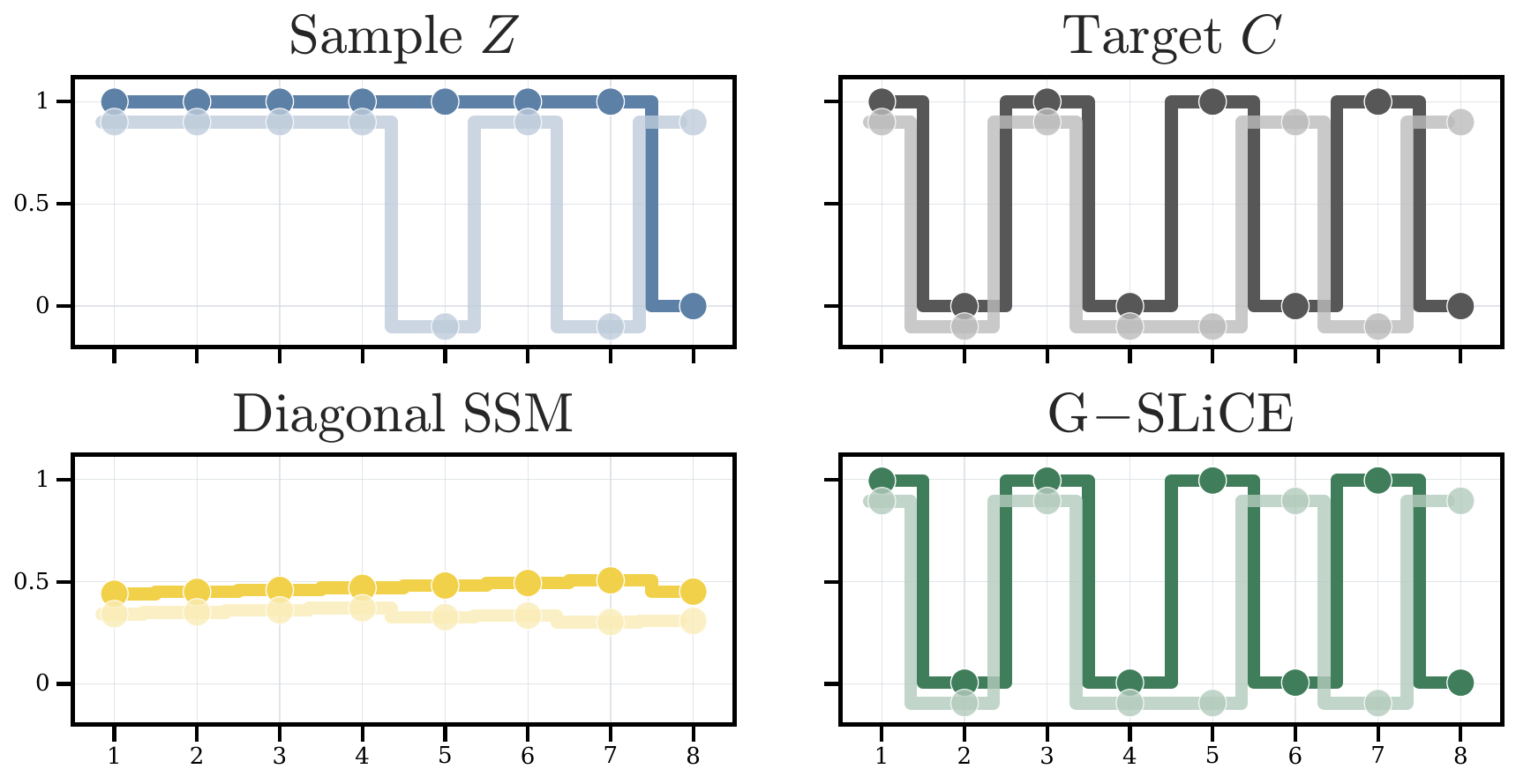}
    \end{minipage}

    \caption{
    Sample-path diagnostics for the hard-core expressivity example. Each block compares a representative Bernoulli input sequence \(Z\), the corresponding hard-core target \(C\), the output of a restricted SSM baseline, and the output of G-SLiCE. Left: dense non-selective SSM. Right: diagonal selective SSM. Both restricted SSMs fail to reproduce the state-dependent target map, while G-SLiCE tracks \(C\) on the displayed examples.
    }
    \label{fig:hardcore-results-trace-both-models}
\end{figure}

\begin{figure}[ht]
    \centering

    \begin{minipage}[c]{0.48\linewidth}
        \centering
        \includegraphics[width=\linewidth]{Figures/slices_expressivity_cumsum_hist_dense_ssm.pdf}
    \end{minipage}
    \hfill
    \begin{minipage}[c]{0.48\linewidth}
        \centering
        \includegraphics[width=\linewidth]{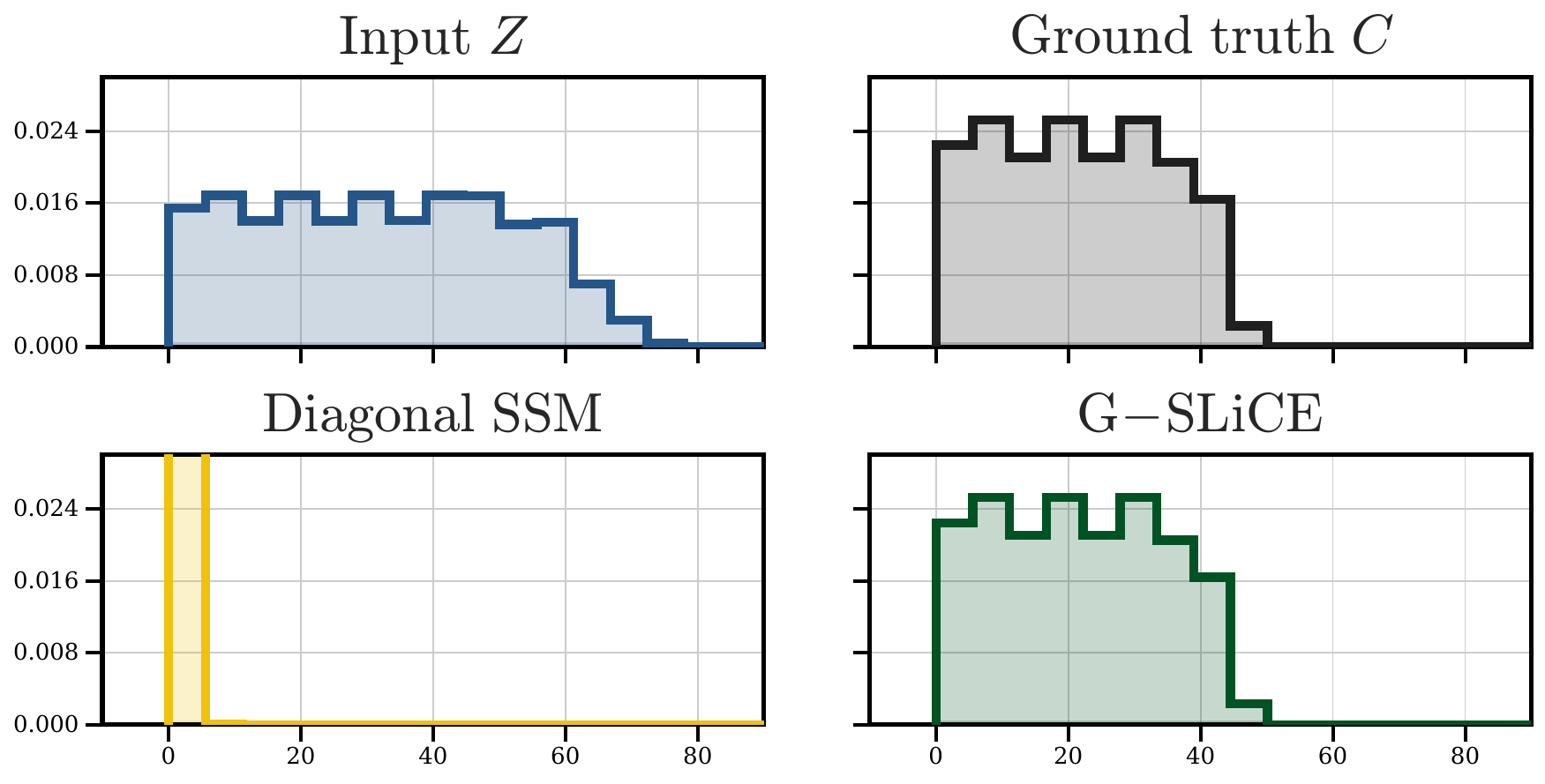}
    \end{minipage}

    \caption{
    Empirical pushforward distributions for the hard-core expressivity example. Each panel shows the distribution of cumulative sums over \(10^6\) generated sequences of length \(128\). Left: dense non-selective SSM compared with the input law, ground-truth hard-core law, and G-SLiCE. Right: the same comparison for the diagonal selective SSM. G-SLiCE matches the ground-truth pushforward distribution, whereas the restricted SSM baselines produce distorted or degenerate distributions.
    }
    \label{fig:hardcore-results-hist-both-models}
\end{figure}

\subsection{Time Series Forecasting: Datasets}

For the conditional and unconditional time series forecasting experiments, we use eight datasets from the GluonTS benchmark collection \citet{gluonts_arxiv,gluonts_jmlr}, following the standard train/validation/test splits provided with each dataset.

The datasets cover a range of value domains, including positive real-valued series ($\mathbb{R}^{+}$), series constrained to the open interval $(0,1)$, and count-valued series ($\mathbb{N}$). They are observed at three different sampling frequencies: 15 minutes (15min), hourly (H), and daily (D). For hourly datasets, we consider forecasting horizons of $24$ or $48$ hours, while for daily datasets we forecast $30$ days ahead. An overview of all datasets and their main characteristics is given in Table~\ref{tab:datasets}.

In the experiments of Sections~\ref{sec:experiments_probabilistic_forecasting} and \ref{sec:experiments_unconditional_generation}, we use the Electricity~\citep{dheeru2017uci}, Exchange~\citep{lai2018modeling}, KDDCup~\citep{godahewa2021monash}, M4~\citep{makridakis2020m4}, Solar~\citep{lai2018modeling}, Traffic~\citep{godahewa2021monash}, UberTLC~\citep{fivethirtyeight2016}, and Wikipedia~\citep{gasthaus2019probabilistic} datasets. For the \emph{grid generalisation} (\S~\ref{sec:experiments_generalising}) experiments, we additionally include the small ETT datasets at $15$-minute and $1$-hour resolutions~\citep{haoyietal-informer-2021}.

\begin{table}[!h]
    \centering
    \caption{Overview of the GluonTS datasets and their statistics used in the experiments in Section \ref{sec:experiments}.}
    \label{tab:datasets}
    \resizebox{\textwidth}{!}{%
    \begin{tabular}{lcccccc}
        \toprule
        Dataset & Train Size & Test Size & Domain & Freq. & Median Seq. Length & Prediction Length \\
        \midrule
        Electricity$^{a}$ \citep{dheeru2017uci}    & 370 & 2590 & $\mathbb{R}^{+}$ & H & 5833 & 24 \\
        Exchange$^{b}$ \citep{lai2018modeling}      & 8   & 40   & $\mathbb{R}^{+}$ & D & 6071 & 30 \\
        KDDCup$^{c}$ \citep{godahewa2021monash}     & 270 & 270  & $\mathbb N$ & H & 10850 & 48 \\
        M4 (H)$^{d}$ \citep{makridakis2020m4}       & 414 & 414  & $\mathbb N$ & H & 960 & 48 \\
        Solar$^{e}$ \citep{lai2018modeling}         & 137 & 959  & $\mathbb{R}^{+}$ & H & 7009 & 24 \\
        Traffic$^{f}$ \citep{godahewa2021monash}    & 963 & 6741 & (0,1) & H & 4001 & 24 \\
        UberTLC$^{g}$ \citep{fivethirtyeight2016}   & 262 & 262  & $\mathbb N$ & H & 4320 & 24 \\
        Wikipedia$^{h}$ \citep{gasthaus2019probabilistic} & 2000 & 10000 & $\mathbb N$ & D & 792 & 30 \\
        ETT Small (15min)$^{i}$ \citep{haoyietal-informer-2021} & 14  & 14  & $\mathbb{R}$ & 15min & 69656 & 96 \\
        ETT Small (H)$^{i}$ \citep{haoyietal-informer-2021}     & 14  & 14  & $\mathbb{R}$ & H     & 17396 & 48 \\
        \bottomrule
    \end{tabular}%
    }
    
    \vspace{0.5em}
    \begin{flushleft}
        {\footnotesize
        $^{a}$ \url{https://archive.ics.uci.edu/ml/datasets/ElectricityLoadDiagrams20112014} \\
        $^{b}$ \url{https://github.com/laiguokun/multivariate-time-series-data} \\
        $^{c}$ \url{https://zenodo.org/record/4656756} \\
        $^{d}$ \url{https://github.com/Mcompetitions/M4-methods/tree/master/Dataset} \\
        $^{e}$ \url{https://www.nrel.gov/grid/solar-power-data.html} \\
        $^{f}$ \url{https://zenodo.org/record/4656132} \\
        $^{g}$ \url{https://github.com/fivethirtyeight/uber-tlc-foil-response} \\
        $^{h}$ \url{https://github.com/mbohlkeschneider/gluon-ts/tree/mv_release/datasets} \\
        $^{i}$ \url{https://github.com/zhouhaoyi/ETDataset}
        }
    \end{flushleft}%
\end{table}

\subsection{Training protocol and hyperparameters}

\paragraph{Data processing.}
We use the preprocessing protocol of \citet{kollovieh2024flow} for all forecasting experiments. Each dataset consists of long univariate or multivariate time series. Training instances are obtained by extracting windows containing a context region and a prediction region. For a dataset with prediction length \(\Delta_{\mathrm{pred}}\), the prediction region has length \(\Delta_{\mathrm{pred}}\), as reported in Table~\ref{tab:datasets}. The context region has the same length.

For conditional forecasting, the observed context values are used to define a conditional Gaussian-process prior. Concretely, for each training instance, we condition the Gaussian process on the context region and sample a path on the full context-plus-prediction grid. This sampled path is used as the stochastic input to the flow model, while the true future values in the prediction region define the target trajectory for conditional flow matching.

The model input is augmented with deterministic feature channels. First, we append a binary observation mask indicating which grid points correspond to observed context values and which grid points are sampled or forecasted. Second, when lag features are enabled, we append historical values from a longer context window. For hourly datasets, the lagged features at time \(t\) are the values from
\(
    1,2,\ldots,7,14,21,28
\)
days before \(t\). For daily datasets, the lagged features are the values from
\(
    1,2,\ldots,7
\)
days before \(t\). These lag channels provide the model with short-term and seasonal historical information without changing the prediction target.

All deterministic augmentations are concatenated channel-wise to the sampled Gaussian-process path before being passed to the vector-field network. The same preprocessing, context construction, Gaussian-process conditioning, masking, lag-feature construction, and train/validation/test splits are used for G-SLiCE and the TSFlow baselines. Additionally, we append the analytical mean of the fitted GP to the features.

\paragraph{Model/Training parameters.} Conditional forecasting models are trained for \(400\) epochs, while unconditional generation models are trained for \(1000\) epochs. Each epoch consists of \(128\) batches with batch size \(64\). We use the Adam optimiser, clip gradients at norm \(0.5\), and maintain an exponential moving average of the model parameters with decay rate \(0.999\).

For conditional experiments, the Gaussian-process prior uses an Ornstein--Uhlenbeck kernel
\[
    k_{\mathrm{OU}}(t,t')
    =
    \exp\!\left(-\frac{|t-t'|}{\ell}\right),
\]
with length scale \(\ell=1\). The Gaussian process is conditioned on the observed context window and sampled on the model input grid. The sampled trajectories are then interpolated and augmented with the deterministic features used by the backbone.

For G-SLiCE, we tune only the architecture-specific hyperparameters and learning rate. The remaining training protocol is kept fixed across datasets and experiments to isolate the effect of replacing the S4-style backbone by a SLiCE backbone. We perform a grid search over bidirectionality, use of lag features, learning rate, number of residual SLiCE blocks, hidden dimension, and dense-block size. The search ranges are shown in Table~\ref{tab:slice-hparam-ranges}. For each dataset, the best configuration is selected by validation CRPS. The selected hyperparameters are reported in Table~\ref{tab:slice-best-hparams} and are used for the corresponding forecasting, unconditional generation, grid-generalisation, and irregular-sampling experiments.

\begin{table}[t]
  \centering
  \small
  \setlength{\tabcolsep}{5pt}
  \caption{Hyperparameter search space for G-SLiCE.}
  \label{tab:slice-hparam-ranges}
  \begin{tabular}{ll}
  \toprule
  Hyperparameter & Range \\
  \midrule
  Bidirectional & \{True, False\} \\
  Use lags & \{True, False\} \\
  Learning rate & \{$0.0001$, $0.001$\} \\
  No. of residual blocks & $\{3, 5\}$ \\
  Hidden dim & $\{16, 64, 128\}$ \\
  Block size & $\{1, 16\}$ \\
  \bottomrule
  \end{tabular}
\end{table}

\begin{table}[t]
  \centering
  \small
  \setlength{\tabcolsep}{5pt}
  \caption{Dataset-specific G-SLiCE hyperparameters selected by validation CRPS.}
  \label{tab:slice-best-hparams}
  \resizebox{\textwidth}{!}{%
      \begin{tabular}{lcccccccc}
      \toprule
      Hyperparameter & Electricity & Exchange & KDDCup & M4 Hourly & Solar & Traffic & Uber & Wiki2000 \\
      \midrule
      Bidirectional & True & True & False & True & True & True & True & True \\
      Use lags & True & True & True & False & True & True & True & False \\
      Learning rate & $0.001$ & $0.0001$ & $0.001$ & $0.001$ & $0.0001$ & $0.001$ & $0.0001$ & $0.0001$ \\
      No. of residual blocks & $5$ & $3$ & $3$ & $5$ & $3$ & $5$ & $5$ & $5$ \\
      Hidden dim & $128$ & $128$ & $128$ & $16$ & $16$ & $64$ & $128$ & $128$ \\
      Block size & $1$ & $16$ & $16$ & $16$ & $1$ & $1$ & $16$ & $16$ \\
      \bottomrule
      \end{tabular}
  }
\end{table}

\subsection{Metrics}

\textbf{Wasserstein distance.} The \textit{Wasserstein Distance} between two probability measures $\mu$ and $\nu$ is a metric that measures the minimum cost of transforming one distribution into another. Given two probability measures $\mu$ and $\nu$ on the metric space $\mathcal{X}$ and a distance metric $d(x,y)$ between two points $x,y\in\mathcal{X}$, a \textit{coupling} $\gamma$ of $\mu$ and $\nu$ is a joint probability measure on $\mathcal{X}\times\mathcal{X}$ whose marginals recover $\mu$ and $\nu$, i.e.\ $\gamma(A\times\mathcal{X})=\mu(A)$ and $\gamma(\mathcal{X}\times B)=\nu(B)$ for all measurable $A,B\subseteq\mathcal{X}$. Denoting the set of all such couplings $\Gamma(\mu,\nu)$, the Wasserstein distance is defined as
\[
    W_p(\mu,\nu) =
    \left(
    \inf_{\gamma\in\Gamma(\mu,\nu)}
    \int_{\mathcal{X}\times\mathcal{X}}
    d(x,y)^p \, d\gamma(x,y)
    \right)^{1/p}.
\]

In our experiments we compute the 2-Wasserstein distance for which we use the implementation provided in \cite{flamary2021pot, flamary2024pot}. To approximate the optimal transport plan $\gamma^*$ we use $10^7$ iterations.

\textbf{CRPS.} The \textit{Continuous Ranked Probability Score (CRPS)} \citep{gneiting2007strictly} is a proper scoring rule for evaluating probabilistic forecasts of a real-valued target. Let $F$ denote the cumulative distribution function (CDF) of a predictive distribution and let $y \in \mathbb{R}$ be the observed outcome. The CRPS is defined as
\[
\mathrm{CRPS}(F, y)
=
\int_{-\infty}^{\infty}
\bigl(F(z) - \mathbbm{1}\{z \ge y\}\bigr)^2 \, dz,
\]
where $\mathbbm{1}\{\cdot\}$ denotes the indicator function. This can be interpreted as the squared $L^2$ distance between the predictive CDF $F$ and the CDF of a point mass at $y$. The CRPS is a strictly proper scoring rule, meaning that it is minimised in expectation if and only if $F$ coincides with the true data-generating distribution. In contrast to point-wise losses, it evaluates the full predictive distribution and therefore captures both accuracy and calibration.

Using the GluonTS library \citet{gluonts_arxiv,gluonts_jmlr}, we approximate the CRPS via a quantile-based representation. For each forecast horizon step $t$ and each quantile level $q \in \mathcal{Q} = \{0.1,\dots,0.9\}$, we compute the pinball (quantile) loss
\[
\rho_q(y_t,\hat{y}_{t,q}) = \max\bigl(q (y_t - \hat{y}_{t,q}),\; (q-1)(y_t - \hat{y}_{t,q})\bigr).
\]
GluonTS aggregates this loss over the forecast horizon and normalises by the total absolute target mass:
\[
\mathrm{wQL}_q = \frac{2 \sum_t \rho_q(y_t,\hat{y}_{t,q})}{\sum_t |y_t|},
\]
The CRPS is then approximated by averaging over the quantile grid:
\[
\mathrm{CRPS} \approx \frac{1}{|\mathcal{Q}|} \sum_{q \in \mathcal{Q}} \mathrm{wQL}_q.
\]

In our experiments, the predictive distribution is represented by $100$ samples from the model, from which the empirical quantiles $\hat{y}_{t,q}$ are estimated.

\textbf{LPS.} The \textit{Linear predictive Score}, as introduced by \cite{kollovieh2023predict}, aims to measure the consistency between generated synthetic and real samples. To calculate it, we fit a ridge regression model on \(10{,}000\) synthetically generated samples to predict vectors of future values $y_f \in \mathbb{R}^{L_f}$ from a past sequence $y_p \in \mathbb{R}^{L_p}$. The LPS is the CRPS of this model on a test set of real samples. For the linear ridge regression model, we use the implementation provided by \cite{scikit-learn} .


\subsection{Construction of irregular sampling grid}\label{app:construction_irregular_sampling_grid}

In this Section, we outline how we irregularly subsample a regular base grid 
\(
    \mathcal{G}
    =
    \{0,\delta,2\delta,\ldots,T\},
\)
on  \([0,T]\). We first sample a continuous irregular grid using the Gamma renewal process. For a desired number of observed points \(N\), we draw \(N-1\) independent increments
\[
    \Delta \tau_i \sim \mathrm{Gamma}(k,\theta),
    \qquad i=1,\ldots,N-1,
\]
and normalise them to span the full window:
\[
    \widetilde{\Delta \tau}_i
    =
    \frac{\Delta \tau_i}{\sum_{j=1}^{N-1}\Delta \tau_j}T,
    \qquad
    \tau_i
    =
    \sum_{j=1}^{i}\widetilde{\Delta \tau}_j ,
\]
with \(\tau_0=0\). This gives continuous target times
\[
    0=\tau_0 < \tau_1 < \cdots < \tau_{N-1}=T .
\]

Since the data are only observed on the base grid \(\mathcal{G}\), we map each continuous target time to its nearest base-grid index:
\[
    m_i
    =
    \operatorname{round}\!\left(\frac{\tau_i}{\delta}\right),
    \qquad
    t_i
    =
    m_i\delta .
\]
We then remove duplicate indices introduced by rounding and, if necessary, resample the Gamma increments until exactly \(N\) distinct grid indices are obtained:
\[
    0 \leq m_0 < m_1 < \cdots < m_{N-1} \leq M .
\]
The final irregular observation grid is therefore
\[
    \mathcal{G}_{\mathrm{irr}}
    =
    \{m_0\delta,m_1\delta,\ldots,m_{N-1}\delta\}
    \subseteq \mathcal{G}.
\]

This construction preserves the irregular spacing induced by the Gamma renewal process while ensuring that every selected observation corresponds to an actual timestamp in the ETTSmall15min data. 

The shape parameter \(k\) again controls the regularity of the selected grid: \(k=1\) produces bursty, highly irregular subsets, while larger \(k\) yields subsets closer to uniform subsampling of the base grid.

\subsection{Compute resources}

All experiments were run on a compute cluster with NVIDIA A100 ($80$ GB) and H200 ($141$ GB) GPUs. Each node was equipped with a $64$-core AMD EPYC 9334 CPU clocked at $3.90$ GHz and $256$ GB of RAM. 

\subsection{Statistical tests}
\label{app:statistical_tests}

Because \citet{bilovs2023modeling} has a missing entry, it was excluded from the Friedman test. On the remaining models, the Friedman test indicated significant differences in CRPS (statistic of $54.785$, $p$-value of $1.977\times 10^{-7}$). G-SLiCEs achieved the best average rank ($1.812$), followed by TSFlow ($2.312$). For the pairwise comparison between G-SLiCE and TSFlow, the one-sided Wilcoxon signed-rank test on log-ratios gave a statistic of $6.0$ and $p$-value $0.055$. When Wiki2000 was excluded from this Wilcoxon calculation, the test gave a statistic of $0$ and a $p$-value $ 0.008$.

\clearpage

\end{document}